\documentclass[12pt]{article}
\usepackage[letterpaper,left=1truein,right=1truein,top=1truein,bottom=1truein]{geometry}
\usepackage[utf8]{inputenc}
\usepackage[OT1]{fontenc}
\usepackage[dvipsnames]{xcolor}
\usepackage{graphicx}
\usepackage{booktabs}
\usepackage{longtable}
\usepackage{wrapfig}
\usepackage{rotating}
\usepackage[normalem]{ulem}
\usepackage{amsmath}
\usepackage{amssymb}
\usepackage{amsfonts}
\usepackage{mathrsfs}
\usepackage{bbm}
\usepackage{bm}
\usepackage{amsthm}
\usepackage{enumitem}
\usepackage{capt-of}
\usepackage{hyperref}
\usepackage{float}
\usepackage{subcaption}
\usepackage[section]{placeins}
\usepackage{setspace}

\usepackage[linesnumbered, ruled, vlined]{algorithm2e}

\newenvironment{smashedalign*}
{\par$\!\aligned}
{\endaligned$\par}

\theoremstyle{definition}
\newtheorem{corollary}{Corollary}%
\newtheorem{example}{Example}%
\newtheorem{lemma}{Lemma}%
\newtheorem{theorem}{Theorem}%
\newtheorem{definition}{Definition}%
\newtheorem{remark}{Remark}%
\newtheorem{assumption}{Assumption}

\newcommand{\wt}[1]{\widetilde{#1}}
\newcommand{\wh}[1]{\widehat{#1}}

\def\pr{\mbox{Pr}} %
\def\rank{\mathrm{rank}}

\DeclareMathOperator*{\argmin}{arg\,min}

\DeclareMathOperator*{\argmax}{arg\,max}

\DeclareMathOperator*{\esssup}{ess\,sup}

\newcommand{\EE}{{\mathbb{E}}}

\newcommand{\II}{{\mathbb{I}}}

\newcommand{\NN}{{\mathbb{N}}}

\newcommand{\PP}{{\mathbb{P}}}

\newcommand{\RR}{{\mathbb{R}}}

\newcommand{\ZZ}{{\mathbb{Z}}}

\newcommand{\bI}{{\mathbf{I}}}

\newcommand{\bu}{{\mathbf{u}}}
\newcommand{\bv}{{\mathbf{v}}}
\newcommand{\bw}{{\mathbf{w}}}
\newcommand{\boldeta}{\boldsymbol{\eta}}

\newcommand{\bmu}{{\boldsymbol{\mu}}}
\newcommand{\btheta}{{\boldsymbol{\theta}}}
\newcommand{\bpsi}{{\boldsymbol{\psi}}}
\newcommand{\bdelta}{{\boldsymbol{\delta}}}
\newcommand{\bbeta}{{\boldsymbol{\beta}}}

\newcommand{\mA}{{\mathcal{A}}}
\newcommand{\mB}{{\mathcal{B}}}
\newcommand{\mC}{{\mathcal{C}}}
\newcommand{\mD}{{\mathcal{D}}}

\newcommand{\mF}{{\mathcal{F}}}
\newcommand{\mG}{{\mathcal{G}}}
\newcommand{\mH}{{\mathcal{H}}}

\newcommand{\mM}{{\mathcal{M}}}
\newcommand{\mN}{{\mathcal{N}}}

\newcommand{\mQ}{{\mathcal{Q}}}

\newcommand{\mS}{{\mathcal{S}}}

\newcommand{\mU}{{\mathcal{U}}}

\newcommand{\mX}{{\mathcal{X}}}

\newcommand{\fC}{{\mathfrak{C}}}

\newcommand{\norm}[1]{\| #1 \|}

\newcommand{\odlambda}{\wh{\lambda}_{\circ}^{\dagger}}
\newcommand{\odpi}{\wh{\pi}_{\circ}^{\dagger}}
\newcommand{\odbu}{\wh{\bu}_{\circ}^{\dagger}}
\newcommand{\odu}{\wh{u}_{\circ}^{\dagger}}

\def\pen{{\mathscr{P}}}

\newcommand{\bigO}{\ensuremath{\mathop{}\mathopen{}\mathcal{O}\mathopen{}}}
\newcommand{\smallO}{ \scalebox{0.7}{$\mathcal{O}$}}
\newcommand{\bigOp}{\bigO_\mathrm{p}}

\newcommand{\pushright}[1]{\ifmeasuring@#1\else\omit\hfill$\displaystyle#1$\fi\ignorespaces}

\def\argmax{\operatorname{argmax}}

\def\given{\mid}

\def\ds1{{\mathrm{1 \hspace{-2.6pt} I}}}

\newcommand{\blue}[1]{#1}

\hypersetup{
 colorlinks=true,
 linkcolor=red,
 urlcolor=blue, 
 citecolor=blue}
\usepackage{natbib}

\title{Reinforcement Learning for Individual Optimal Policy from Heterogeneous Data\thanks{The content is solely the responsibility of the authors and does not necessarily represent the official views of the National Institutes of Health.}}
\author{Rui Miao\thanks{National Heart, Lung, and Blood Institute}, Babak Shahbaba\thanks{University of California, Irvine}, Annie Qu\thanks{University of California, Irvine}}
\date{}

\begin{document}

\def\spacingset#1{\renewcommand{\baselinestretch}%
{#1}\small\normalsize} \spacingset{1}

\maketitle

\begin{abstract}
Offline reinforcement learning (RL) aims to find optimal policies in dynamic
environments in order to maximize the expected total rewards by leveraging
pre-collected data. Learning from heterogeneous data is one of the
fundamental challenges in offline RL. Traditional methods focus on learning an
optimal policy for all individuals with pre-collected data from a single
episode or homogeneous batch episodes, and thus, may result in a suboptimal
policy for a heterogeneous population.
In this paper, we propose an individualized offline policy optimization framework
for heterogeneous time-stationary Markov decision processes (MDPs). The proposed
heterogeneous model with individual latent variables enables us to
efficiently estimate the individual Q-functions, and our Penalized Pessimistic
Personalized Policy Learning (P4L)
algorithm guarantees a fast rate on the average regret under a weak
partial coverage assumption on behavior policies. In addition, our simulation studies and a real data application demonstrate the superior numerical
performance of the proposed method compared with existing methods.
\end{abstract}

\noindent%
{\it Keywords:} Dynamic treatment regime, Heterogenous data, Markov decision process,
 Precision learning
\vfill

\newpage
\spacingset{1.6}

\allowdisplaybreaks

\section{Introduction}
\label{sec: introduction}
Reinforcement learning (RL) has emerged as a powerful tool for numerous
decision-making problems in variety of domains, including healthcare, robotics,
gaming and pricing strategy. In practice, however, different individuals can exhibit substantial variations in their behaviors and responses
to different actions, together with other individual features, leading to high
population heterogeneity. Ignoring the resulting heterogeneity can lead to suboptimal
policies \citep{chen2022reinforcement,hu2022doubly}, especially for some
individuals who are underrepresented, disadvantaged or vulnerable in the
population; for example, this could lead to health disparities for
populations living in rural areas.
The commonly used RL algorithms \citep[e.g.,][]{munos2003error,mnih2015human,haarnoja2018soft,luckett2019estimating,chen2021decision} assume the environment is stationary and homogeneous for
all individuals, which may not hold in general. Therefore, there is a growing
need to develop RL methods which can learn individualized policies to account for
populational heterogeneity.

Recently, several RL methods for heterogeneous data have been developed to
address the suboptimality of learning policy with a homogeneity assumption. Specifically,
PerSim \citep{agarwal2021persim} learns personalized simulators by assuming a
low rank tensor structure of latent factors for each individual, state, and
action. However, it requires a finite state space to find an optimal policy based on
the learned personalized environment. \citet{chen2022reinforcement} introduce
heterogeneity only on the reward function, where the Q-functions and groups are
learned simultaneously, and the policies are updated groupwise. However, individual
policies are learned within a group, and information borrowing among groups is
ignored. \citet{hu2022doubly} learn the warm-up groupwise policies from the
last period of stationary episodes based on change-point detection. However,
the methods of \citet{chen2022reinforcement} and \citet{hu2022doubly} can only make use of
information within the learned population groups, hence sample efficiency is not
fully utilized.
Meta-reinforcement learning (meta-RL) aims to learn a policy that can maximize
the reward of any task from a distribution of heterogeneous tasks
\citep{beck2023survey,pmlr-v139-mitchell21a,zhang2021provably}. However, existing meta-RL methods typically
require the collection of millions of interactions with environments with
different tasks during meta-training online or more pre-collected training data.
In addition, theoretical understanding of offline meta-RL remains limited.
For a single decision point, evaluating the heterogeneous treatment effect (HTE) is
inevitable for learning personalized treatment regimes. Existing works mainly
estimate HTE with panel or non-panel data
\citep[][and references
therein]{athey2015machine,shalit2017estimating,wager2018estimation,kunzel2019metalearners,nie2021quasi,shen2022heterogeneous}.
Our goal is to develop a single task RL method with offline data from individuals having
heterogeneous treatment effects and transactions.

The problem we intend to solve
has an element in between the meta-RL and HTE.
More specifically, we aim to solve an offline reinforcement learning (RL) problem under
populational heterogeneity, where we learn subject-specific policies based on offline
datasets collected a priori. One challenge of the populational heterogeneity in offline RL
is that each subject may have different state-action transitions with different
immediate rewards. As a result, existing batch RL methods assuming subject
homogeneity, aiming to learn a common optimal policy for all subjects, may
suffer from biased and inconsistent estimation of subject-specific value
functions; and consequently, only sub-optimal policies can be learned.
However, directly applying batch RL methods to detected homogeneous clusters of trajectories
\citep{chen2022reinforcement,hu2022doubly} diminishes sample
efficiency, as cross-subject information is not incorporated in policy learning.
More importantly, the coverage assumption becomes less feasible for a specific homogeneous
subgroup rather than for the entire population.
In addition, since the offline data follow some fixed behavior policies, and we
cannot interact with the environments of the subjects to collect more data
to evaluate the policies, this may result in a distributional shift between the
behavior policies and the optimal ones \citep{levine2020offline}. The
distributional shift increases the estimation error in evaluating the value of behavior
policies, and hence causes suboptimality in learning optimal individual policies.

To tackle the above challenges, in this paper we propose a novel
heterogeneous latent variable model with a penalized pessimistic policy learning approach
to learn optimal individual policies simultaneously.
Importantly, our method only relies on a partial coverage assumption that the
grand average visitation probability of the batch data from all
individuals can cover the discounted visitation probability induced by the target policy for each individual.
While directly applying the RL method for an individual episode requires that the
visitations of state-action from an individual episode should cover
the visitation probability induced by the target policy. This is an unrealistic
assumption and could prevent us from
borrowing information from other individuals in the population. By introducing a
model of the shared structure of the Q-function and policy with individual latent variables,
which captures heterogeneous individual information, we could efficiently
utilize the aggregated data from all individual episodes for off-policy evaluation.

In addition, the proposed heterogeneous latent variable model allows us to simultaneously
learn optimal policies for all individuals, instead of learning optimal policy
for each estimated homogeneous cluster, respectively \citep{chen2022reinforcement,hu2022doubly}.
To further weaken the coverage assumption for offline RL, we adopt the pessimism
concept and establish a policy optimization algorithm to find the optimal
policy structure with individual latent variables, and aim to
maximize the overall value estimated by the most pessimistic policy evaluator
from an uncertainty set of Q function candidates. With a proper uncertainty
level, we only require a partial coverage assumption for the optimal policies.

By introducing multi-centroid penalties on latent variables, we could encourage
individuals from the same subgroup to have similar latent variables and hence
similar optimal policies. Theoretically, we show that the penalized estimators
of optimal policies with unknown subgroup information are asymptotically as good
as oracle estimators when the subgroup information is known. Their regrets are
upper bounded by a rate near the square root of the number of transitions from
heterogeneous data. %
To resolve the
computational burden introduced by the constraints of the uncertainty set, we propose
to solve a Lagrangian dual problem which can also obtain the same rate of the
upper bound of regret, under an additional assumption on the convexity of
the space of the Q-function.

In practice, our method can be applied to learning optimal individual policies
for the same task, but with heterogeneous time-stationary environments. We
remark that the time-stationarity and Markovianity can be ensured by
concatenating multiple decision points and adding auxiliary variables (e.g.,
time stamps) into the state. Therefore, the proposed method is also applicable
for tasks with rapidly changing environments, for example, critical care. In
addition, the theoretical properties established in our penalized estimation method
guarantee that our method is applicable when the number of
individuals is small but the length of each trajectory is large, which is
common in mobile health applications.

The rest of the paper is organized as follows. In Section \ref{sec:
  preliminaries}, we briefly introduce the discrete-time time-stationary
heterogeneous MDP and the problem formulations. Then we formally introduce the
proposed method for policy optimization with heterogeneous batch data in Section
\ref{sec: methods}. Theoretical analysis on the regret of policy learned by
the proposed algorithm is given in Section \ref{sec: theory}. In Section 5, we
further illustrate the implementation details. Sections 6 and 7 demonstrate the
numerical performance of our method with simulation studies and a real data application.
Section 8 concludes this paper.

\section{Preliminaries}
\label{sec: preliminaries}
In this section, we provide notations and frameworks for the Markov decision
processes. In addition, we provide several important concepts and
properties associated with MDPs, which will be used as building blocks for the proposed method.

\subsection{Notations}
We denote the index set $[N] = \left\{ 1,\dots,N \right\}$. For an index set
$\mG\subseteq \NN$, we denote its cardinality by $|\mG|$. For
sequences $\left\{ \varpi_n \right\}$ and $\left\{ \theta_n \right\}$, we use
$\varpi_n\gtrsim \theta_n$ to denote that $\varpi_n \geq c \theta_n$ for some constant $c>0$, and
$\varpi_n\lesssim \theta_n$ to denote that $\varpi_n\leq c' \theta_n$ for some constant $c'>0$.
For two measures $\nu$ and $\mu$, we use $\nu<\!\!<\mu$ to denote that $\nu$ is absolutely
continuous with respect to $\mu$.
For a collection $\bu = \{u^i\}_{i\in[N]}$, we define $\norm{\bu}_{2,\infty} = \max_{i\in[N]}\norm{u^i}_2$.
In addition, we use $(S^i,A^i,R^i,S_+^i)$ or $(S^i,A^i,S_+^i)$ to denote a
transition tuple (state, action, reward, next state) or (state, action, next state) in the trajectory of subject $i$ for any decision time $t$.
\subsection{Frameworks}
Reinforcement learning is a technique that solves sequential decision making
problems for agents in some unknown environments. The observed data can be
summarized in sequences of state-action-reward tuples over decision time. At
each decision time $t \geq 0$, the agent $i\in[N]$ observes the current
state $S_t^i\in\mS$ from the environment, where the state space $\mS\subseteq \RR^{d_s}$.
Based on the observed history, a decision $A_t^i$ is selected from the action
space $\mA$, which is assumed to be a discrete set of size $|\mA|$ in this paper. The
environment then provides an immediate reward $R_t^i\in\RR$ to the agent
and moves forward to the next state $S_{t+1}^i$.

The policy $\pi^i = \left\{ \pi_t^i \right\}_{t \geq 0}$ for agent $i \in [N]$ is a set
of probability distributions on the action space $\mA$, which determines the
behavior of the agent.
In this work, we focus on the time-invariant policy, i.e., $\pi_0^i= \dots = \pi_t^i = \dots$, for each
$i\in[N]$, respectively.
For each policy $\pi^i$, we define the Q-function to measure the expected
discounted cumulative reward for the agent $i$ starting from any state-action
pair $(s,a)\in \mS\times\mA$, denoted by
\begin{equation}
  \label{eq: Q-function}
  Q_{\pi^i}^i(s,a) =\EE^{\pi^i}\left\{ \sum_{t=0}^{\infty}\gamma^t R_t^i \given S_0^i=s, A_0^i=a \right\};
\end{equation}
where the expectation $\EE^{\pi^i}$ is taken by assuming that the agent $i$
selects actions according to the policy $\pi^i$, and $\gamma\in[0,1)$ is the discounted
factor for balancing the short-term and long-term rewards. For example, a myopic
value function (with $\gamma=0$) only cares about immediate reward, which is
typically adopted in contextual bandit problems where the current state is
independent of history.

To evaluate the overall performance of a policy $\pi^i$ for individual $i$, we
define the value of $\pi^i$ by
\begin{equation}
  \label{eq: value}
J^i(\pi^i) = (1-\gamma) \EE_{S_0^i\sim\nu^i} Q_{\pi^i}^i(S_0^i, \pi^i(S_0^i)),
\end{equation}
for some given distribution $\nu^i$ of the initial state $S_0^i$. Here we define
$Q(s,\pi(s)) = \sum_{a\in\mA} Q(s,a)\\\times \pi(a\given s)$ for any policy $\pi$ and function $Q$ defined over $\mS\times\mA$.
The objective of individualized policy optimization is to find an in-class policy
\begin{equation}
\label{eq: optimal policy}
\pi_{*}^{i}\in\arg\max_{\pi^i\in\Pi} J^i(\pi^i),
\end{equation}
which optimizes the value for each individual
$i\in[N]$. In this work, we focus on optimizing the values of policies for a shared distribution
$\nu$ of the initial states for every individual, i.e., $\nu_i = \nu$ for all $i\in [N]$.
Making decisions tailored for each individual is critical in many applications.
For example, in mobile
health, we only consider applying some treatment regimes once the state of an individual
reaches a certain range of risk.

To formalize the framework, we adopt heterogeneous time-stationary Markov decision processes
\citep[MDPs,][]{puterman2014markov} to model the data generating processes for
different agents. Specifically, for
each individual $i\in [N]$, the episode
$\left\{ \left( S_t^i,A_t^i,R_t^i \right) \right\}_{t \geq 0}$ follows an MDP
$\mM^{(i)} = \left\{ \mS, \mA, \PP^{i}, r^{i}, \gamma \right\}$,
which satisfies the standard conditions in Assumption \ref{ass: standard} below.
\begin{assumption}[Standard Conditions]
  \label{ass: standard}
The following statements hold for MDPs $\left\{ \mM^{(i)} \right\}_{i\in[N]}$.
\begin{enumerate}[label=(\alph*)]
  \item There exists a time-invariant transition kernel $\PP^i$ such that for any
        $t\geq 0$, $s\in\mS,a\in\mA$ and $F\in\mB(\mS)$,
\begin{equation*}
\pr \left(S_{t+1}^i\in F \given S_t^i=s, A_t^i=a, \left\{ S_j^i,A_j^i,R_j^i \right\}_{0\leq j <t}\right) = \PP^i(S_{t+1}^i\in F \given S_t^i=s, A_t^i=a),
\end{equation*}
        where $\mB(\mS)$ is the collection of Borel subsets of $\mS$ and
        $\left\{ S_j^i,A_j^i,R_j^i \right\}_{0\leq j <t}=\emptyset$ if $t=0$.
  \item There exists a reward function such that
        $\EE[R_t^i\given S_t^i,A_t^i, \left\{ S_{t'}^i,A_{t'}^i,R_{t'}^i \right\}_{0\leq t' <t}] = r^{i}(S_t^i,A_t^i)$
        for any $t\geq 0$.
        Without loss of generality, suppose that $\|r^{i}\|_{\infty} \leq R_{\max}$ for all $i\in[N]$.
 \item The batch data $\left\{ (S_t^i,A_t^i,R_t^i,S_{t+1}^i) \right\}_{0\leq t <T^i}$ for subject $i$ are generated by a stationary behavior policy $\pi_b^i: \mS\rightarrow\Delta(\mA)$, which is a mapping from state space $\mS$ to the probability simplex on action space $\mA$.
To simplify notations, we assume a balanced dataset with $T_i=T$ here. In case
        of unbalanced data, the proposed method can be extended without much difficulty.
\end{enumerate}
\end{assumption}
Assumption \ref{ass: standard} is standard in the RL literature for
homogeneous MDP, whereas in our heterogeneous setting, the stationary transition kernels
$\PP^i$ are allowed to be different among individuals. The uniformly bounded assumption on the immediate reward $R_t^i$ is used to
simplify the technical proofs but can be relaxed by imposing some higher-order
moment condition on $R_t^i$ instead.
In addition, Assumption \ref{ass: standard} guarantees that there exists an
optimal policy $\pi_{*}^i$ for each MDP $\mM^{(i)}$ and $\pi_{*}^i$ is stationary
\citep[see, e.g.,][Section 6.2]{puterman2014markov}. Therefore, we focus on the
optimizing stationary policy $\pi^i$ for each individual $i$ in some pre-specified
policy class $\Pi$. Some commonly used policy classes include linear decision
functions, decision trees and neural networks.

We define the MDP induced probability measures, followed
by an important property in Lemma \ref{lem: OPE error}, which is a building
block for our method. For each MDP $\mM^{(i)}$, we further introduce the
discounted visitation probability measure over $\mS\times\mA$ induced by a policy
$\pi^i$ as follows:
\begin{equation}
\label{eq: discounted visitation probability}
d_{\pi^i}^i = (1-\gamma) \sum_{t=0}^{\infty} \gamma^t p_{\pi^i,t}^i,
\end{equation}
where $p_{\pi^i,t}^i$ is the marginal probability measure of $(S_t^i,A_t^i)$
induced by the policy $\pi^i$ with the initial state distribution $\nu$.
Similarly, we define the average visitation probability across $T$ decision
points as
\begin{equation}
\label{eq: average visitation probability}
\bar{d}^i = \frac{1}{T}\sum_{t=0}^{T-1} p_{\pi_b^i,t}^i,
\end{equation}
where $p_{\pi_b^i,t}^i$ is the marginal probability measure of $(S_t^i,A_t^i)$ induced
by the behavior policy $\pi_b^i$. The corresponding expectation is denoted by $\bar{\EE}^i$.
The grand average visitation probability over all MDPs across $T$ decision points
is defined by $\bar{d} = N^{-1} \sum_{i\in[N]} \bar{d}^i$ with corresponding
expectation $\bar{\EE}$.

With the above definitions of MDP induced probability measures, we can quantify the
off-policy evaluation (OPE) for estimating the value of $\pi^i$ for
individual $i$ using Q-function $\wt{Q}^i$ in \eqref{eq: value}, which is
denoted as $\wt{J}^i(\pi^i) = (1-\gamma) \EE_{S_0^i\sim \nu} \wt{Q}^i(S_0^i, \pi^i(S_0^i))$.
Then we have the following property for the OPE error of $\wt{J}^i(\pi^i)$ using $\wt{Q}^i$.

\begin{lemma}
  \label{lem: OPE error}
  Under Assumption \ref{ass: standard}, we have that
\begin{equation}
  \label{eq: OPE error}
  J^i(\pi^i) - \wt{J}^i (\pi^i) = \EE_{(S^i,A^i)\sim d_{\pi^i}^i} [R^i + \gamma \wt{Q}^i(S_+^i,\pi^i(S_+^i)) - \wt{Q}^i(S^i,A^i)].
\end{equation}
\end{lemma}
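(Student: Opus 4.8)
The plan is to recognize \eqref{eq: OPE error} as a telescoping (``performance difference'') identity and to prove it by expanding the discounted visitation measure $d_{\pi^i}^i$ as a geometric series over decision times. First I would collapse the inner randomness in the reward and the next state: conditioning on $(S^i,A^i)=(s,a)$, Assumption~\ref{ass: standard}(b) gives $\EE[R^i\given S^i=s,A^i=a]=r^i(s,a)$, and Assumption~\ref{ass: standard}(a) gives that $S_+^i$ is drawn from $\PP^i(\cdot\given s,a)$. Writing $(\PP^i\wt{Q}^i)(s,a):=\EE_{S_+\sim\PP^i(\cdot\given s,a)}\wt{Q}^i(S_+,\pi^i(S_+))$, the right-hand side of \eqref{eq: OPE error} becomes $\EE_{(S,A)\sim d_{\pi^i}^i}[\Delta^i(S,A)]$, where $\Delta^i(s,a):=r^i(s,a)+\gamma(\PP^i\wt{Q}^i)(s,a)-\wt{Q}^i(s,a)$ is the Bellman residual of $\wt{Q}^i$ along $\pi^i$. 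It then suffices to show $J^i(\pi^i)-\wt{J}^i(\pi^i)=\EE_{(S,A)\sim d_{\pi^i}^i}[\Delta^i(S,A)]$.

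Next I would plug in the series $d_{\pi^i}^i=(1-\gamma)\sum_{t\geq 0}\gamma^t p_{\pi^i,t}^i$ and abbreviate $\rho_t:=\EE_{(S_t,A_t)\sim p_{\pi^i,t}^i}r^i(S_t,A_t)$ and $q_t:=\EE_{(S_t,A_t)\sim p_{\pi^i,t}^i}\wt{Q}^i(S_t,A_t)$. The crucial observation is that the one-step-ahead term reindexes: under $\pi^i$ the law of $(S_{t+1},A_{t+1})$ is exactly $p_{\pi^i,t+1}^i$, obtained from $p_{\pi^i,t}^i$ by applying $\PP^i$ and then $\pi^i$, so by the tower property $\EE_{p_{\pi^i,t}^i}(\PP^i\wt{Q}^i)(S_t,A_t)=q_{t+1}$. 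Hence $\EE_{p_{\pi^i,t}^i}\Delta^i=\rho_t+\gamma q_{t+1}-q_t$, and
\begin{equation*}
  \sum_{t=0}^{\infty}\gamma^t\,\EE_{p_{\pi^i,t}^i}\Delta^i
  =\sum_{t=0}^{\infty}\gamma^t\rho_t+\sum_{t=0}^{\infty}\bigl(\gamma^{t+1}q_{t+1}-\gamma^t q_t\bigr).
\end{equation*}
The first sum equals $\EE^{\pi^i}\sum_{t\geq0}\gamma^t R_t^i=(1-\gamma)^{-1}J^i(\pi^i)$ by the definitions \eqref{eq: Q-function} and \eqref{eq: value}, while the second telescopes to $-q_0=-\EE_{S_0\sim\nu}\wt{Q}^i(S_0,\pi^i(S_0))=-(1-\gamma)^{-1}\wt{J}^i(\pi^i)$. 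Multiplying through by $(1-\gamma)$ then yields the claim.

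The main technical point to justify carefully will be the telescoping step: I need the tail $\gamma^T q_T\to 0$ as $T\to\infty$. This follows because $\wt{Q}^i$ is uniformly bounded---under Assumption~\ref{ass: standard}(b) the rewards, and hence any admissible Q-function candidate, are bounded by $R_{\max}/(1-\gamma)$---so $|q_T|$ is bounded uniformly in $T$ and $\gamma^T q_T\to0$ since $\gamma\in[0,1)$. Boundedness also guarantees absolute convergence of both geometric series, legitimizing the term-by-term rearrangement. The only remaining bookkeeping is the reindexing identity $\EE_{p_{\pi^i,t}^i}(\PP^i\wt{Q}^i)=q_{t+1}$, which is where the Markov property from Assumption~\ref{ass: standard}(a) and the time-invariance of $\pi^i$ enter; everything else is a direct substitution of the definitions.
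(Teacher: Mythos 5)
Your proof is correct, and it reaches \eqref{eq: OPE error} by a route that differs in mechanics from the paper's. The paper works entirely with fixed-point identities: it writes down the backward Bellman (flow) equation satisfied by the occupancy density $d_{\pi^i}^i$, applies it to the difference $Q_{\pi^i}-\wt{Q}^i$ so that the initial-state term produces $J^i(\pi^i)-\wt{J}^i(\pi^i)$ plus a $\gamma$-discounted next-state term, and then invokes the Bellman equation for the true $Q_{\pi^i}$ to convert the remainder into the stated Bellman residual of $\wt{Q}^i$. You instead unroll $d_{\pi^i}^i$ as the geometric series $(1-\gamma)\sum_t\gamma^t p_{\pi^i,t}^i$, reindex the one-step-ahead term via $\EE_{p_{\pi^i,t}^i}(\PP^i\wt{Q}^i)=q_{t+1}$, and telescope. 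The two are equivalent --- the flow equation the paper quotes is exactly the recursion your series satisfies --- but each buys something: the paper's argument is shorter and sidesteps any tail analysis, at the cost of asserting the occupancy flow equation without proof; yours is more self-contained and makes explicit the convergence issues (absolute summability and $\gamma^T q_T\to0$) that the paper's version quietly absorbs. One small caveat: you justify the tail bound by saying any ``admissible'' $\wt{Q}^i$ is bounded by $R_{\max}/(1-\gamma)$, but Lemma~\ref{lem: OPE error} only invokes Assumption~\ref{ass: standard}, which constrains the reward, not the arbitrary candidate $\wt{Q}^i$; boundedness of $\wt{Q}^i$ is really an implicit standing assumption (made explicit later in Assumption~\ref{ass: spaces}(d)), and it would be cleaner to state it as such rather than derive it from the reward bound. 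This does not affect the validity of the argument in the context in which the lemma is used.
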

Lemma \ref{lem: OPE error} implies that the OPE error by $\wt{Q}^i$ of policy
$\pi^i$ is the expectation of Bellman error with respect to the discounted visitation
probability $d_{\pi^i}^i$ induced by policy $\pi^i$ for individual $i$. The true Q-function can be
identified by minimizing the OPE error in \eqref{eq: OPE error}. In a heterogeneous
population, we first establish an
upper bound for populational OPE errors from \eqref{eq: OPE error}. Then, by
minimizing such an upper bound, we can estimate the Q-function under the
heterogeneous latent variable model introduced in Section \ref{sec: methods}.

\section{Methods}
\label{sec: methods}
In this section, we present our value-based method for learning optimal individualized
policies with a heterogeneous latent variable model. Objective \eqref{eq:
  optimal policy} implies a straightforward approach to learn the optimal policy
$\pi_{*}^i$ by maximizing the value \eqref{eq: value} with an estimated Q-function
$Q_{\pi^i}^i$ defined in \eqref{eq: Q-function} for each individual $i\in[N]$ using existing RL methods for homogeneous data
\citep[e.g.,][]{munos2003error,mnih2015human,haarnoja2018soft,luckett2019estimating,chen2021decision}.
However, utilizing
individual-specific data only in estimating the heterogeneous Q-functions
$\left\{ Q_{\pi^i}^i \right\}_{i\in[N]}$ and subsequent learning optimal policies
$\left\{ \pi_{*}^i \right\}_{i\in[N]}$ could lead to sample-inefficient estimations,
especially when some individuals have a limited number of observations,
e.g., when the episode length $T$ is small.

\subsection{Heterogeneous Latent Variable Model}
To obtain more efficient estimation, it is beneficial to aggregate information
among individuals by imposing shared
structures on policies and Q-functions with latent variables that encode
individual information. Then we can utilize sub-homogeneous information by
encouraging the grouping of individuals with similar latent variables. This ensures
that individuals from the same subgroups have similar optimal policies, by construction.

Our proposed shared structures and learning algorithm is motivated by the
individual OPE errors.
If $d_{\pi^i}^i <\!\!\!< \bar{d}^i$ and $d_{\pi^i}^i/ \bar{d}^i \in \mF$, which is a
symmetric bounded class of functions on $\mS\times\mA$, Lemma \ref{lem: OPE error}
implies that
\begin{equation}
\label{eq: OPE bound}
\begin{aligned}
\left| J^i(\pi^i) - \wt{J}^i (\pi^i)  \right| &= \left| \bar{\EE}^i \left\{ \frac{d_{\pi^i}^i(S^i,A^i)}{\bar{d^i}(S^i,A^i)} \left( R^i + \gamma \wt{Q}^i(S_+^i,\pi^i(S_+^i)) - \wt{Q}^i(S^i,A^i)\right)\right\} \right|\\
  & \leq \sup_{f^i\in\mF} \bar{\EE}^i \left\{ f^i(S^i,A^i) \left( R^i + \gamma \wt{Q}^i(S_+^i,\pi^i(S_+^i)) - \wt{Q}^i(S^i,A^i)\right)\right\}.
\end{aligned}
\end{equation}
The upper bound in \eqref{eq: OPE bound} circumvents using the policy-induced discounted visitation
probability $d_{\pi^i}^i$ in \eqref{eq: OPE error}, which is typically hard
to estimate. As long as we can find
the space $\mF$, which contains $d_{\pi^i}^i/\bar{d}^i$, we can always establish an upper
bound of the OPE error by calculating the supremum of expectation in \eqref{eq: OPE
  bound} with respect to the empirical distribution $\bar{d}^i$ \eqref{eq: average visitation probability}.
Then a min-max estimating approach \citep[e.g.,][]{jiang2020minimax} to learn $Q_{\pi^i}^i$ can be formulated as
\begin{equation*}
\min_{\wt{Q^i}}\sup_{f^i\in\mF} \bar{\EE}^i \left\{ f^i(S^i,A^i) \left( R^i + \gamma \wt{Q}^i(S_+^i,\pi^i(S_+^i)) - \wt{Q}^i(S^i,A^i)\right)\right\}.
\end{equation*}

However, learning $Q_{\pi^i}^i$ separately for each individual suffers from sample
inefficiency, and, more seriously, it requires a coverage assumption that
$d_{\pi^i}^i <\!\!\!< \bar{d}^i$ and $d_{\pi^i}^i/ \bar{d}^i \in \mF$ for each
individual $i\in[N]$. This naturally motivates us to learn Q-functions
$\left\{ Q_{\pi^i}^i \right\}_{i\in[N]}$ for individual policies
$\left\{ \pi^i \right\}_{i=1}^N$ simultaneously with a combined dataset for
heterogeneous population. By assuming that $d_{\pi^i}^i <\!\!\!< \bar{d}$ and
$d_{\pi^i}^i/ \bar{d} \in \mF$ for all $i\in[N]$, we have the total OPE error
\begin{equation}
\label{eq: total OPE bound}
\begin{aligned}
\sum_{i\in[N]}\left| J^i(\pi^i) - \wt{J}^i (\pi^i)  \right|
  & \leq \sum_{i\in[N]}\sup_{f^i\in\mF} \bar{\EE} \left[ f^i(S^i,A^i) \left( R^i + \gamma \wt{Q}^i(S_+^i,\pi^i(S_+^i)) - \wt{Q}^i(S^i,A^i)\right)\right]\\
  & \triangleq \sum_{i\in[N]}\bar{\EE} \left[ \bar{f}^i(S^i,A^i) \left( R^i + \gamma \wt{Q}^i(S_+^i,\pi^i(S_+^i)) -
\wt{Q}^i(S^i,A^i)\right)\right],
\end{aligned}
\end{equation}
where we assume that the suprema can be attained by
$\left\{ \bar{f}^i \right\}_{i\in[N]}\subset \mF$ in the last equation.
Due to the potentially large state-action space, it is quite common that some
state-action pairs induced by the target policy $\pi^i$ are not covered by the
batch data collected from individual $i$ with behavior policy $\pi_b^i$.
In our approach, we only require that $d_{\pi^i}^i <\!\!\!< \bar{d}$ instead of
$d_{\pi^i}^i <\!\!\!< \bar{d}^i$ for each $i\in[N]$. This allows the state-action
pairs induced by target policy $\pi^i$ for individual $i$ to be covered by
behavior policies from any individuals in the population.

To avoid learning Q-functions individually due to heterogeneity, we propose
shared structures based on the upper bound in \eqref{eq: total OPE bound} to
enable us to learn Q-functions and the subsequent optimal policies more efficiently.
Suppose there exist individualized latent variables
$\left\{ u^i \right\}_{i\in[N]}\subset \mU$, where $\mU\subseteq\RR^{d_u}$ is a compact set, such that for all $\pi^i(\bullet) = \pi(\bullet; u^i)$, there exists
$Q_{\pi}\in\mQ$ such that $Q_{\pi}(\bullet;u^i) = Q_{\pi^i}^i(\bullet)$. In addition, for any
$\wt{Q}\in\mQ$, there exists $f\in\mF$ such that the maximizer
$\bar{f}^i = f(\bullet;u^i)$ in \eqref{eq: total OPE bound} for all $i\in[N]$.
Then we can estimate $Q_{\pi}$ by solving the min-max problem
\begin{equation}
  \label{eq: OPE shared structure}
  \min_{\wt{Q}\in\mQ}\max_{f\in\mF}\sum_{i\in[N]}\bar{\EE} \left\{ f(S^i,A^i;u^i) \left( R^i + \gamma \wt{Q}(S_+^i,\pi(S_+^i;u^i);u^i) -
\wt{Q}(S^i,A^i;u^i)\right)\right\},
\end{equation}
provided that $\bold{u}=\left\{ u^i \right\}_{i\in[N]}$ are properly given.
Specifically, the empirical
version of \eqref{eq: OPE shared structure} can be written as
\begin{equation}
  \label{eq: OPE sample version}
  \min_{\wt{Q}\in\mQ}\max_{f\in\mF} \wh{\Phi}(\wt{Q},f,\pi,\bold{u}),
\end{equation}
where\\
\begin{equation*}
  \begin{aligned}
    \wh{\Phi}(\wt{Q},f,\pi,\bold{u}) & \triangleq \frac{1}{NT}\sum_{i\in[N]}\sum_{t=0}^{T-1} \left\{ f(S_t^i,A_t^i;u^i)\right.\\
                                  & \times \left.\left( R_t^i + \gamma \wt{Q}(S_{t+1}^i,\pi(S_{t+1}^i;u^i);u^i) - \wt{Q}(S_t^i,A_t^i;u^i)\right)\right\}.
  \end{aligned}
\end{equation*}

To better understand the concept of shared structure, we illustrate the Q-function
under the linear mixed-effect model as follows:
\begin{equation*}
Y_t^i = (u^i)^{\top} Z_t^i + \alpha^{\top} X_t^i + \epsilon_t^i, \qquad i\in[N],~ t=0,\dots,T-1,
\end{equation*}
where $Y_t^i$ is the response,
  $(Z_t^i,X_t^i)$ are covariates with subject-specific effect coefficient $u_i$ and fixed
  effect coefficient $\alpha$, and $\epsilon_t^i$ is the error term.
  If we define
\begin{equation*}
  Q(Z_t^i,X_t^i;u^i) = (u^i)^{\top} Z_t^i + \alpha^{\top} X_t^i = \EE \left\{ Y_t^i\given Z_t^i,X_t^i \right\},
\end{equation*}
  then given $u^i$, $Q$ is determined by the shared parameter $\alpha$.
  In our framework, we find $Q$ (the fixed effect) by minimizing the loss
\begin{equation*}
  \sup_{f\in\mF} \sum_{i=1}^N \sum_{t=0}^{T-1} f(Z_t^i,X_t^i;u^i)^{\top} \{Y_t^i - Q(Z_t^i,X_t^i;u^i)\},
\end{equation*}
where the error is captured by $f(\bullet;u^i)$ for each individual. After that, we
update individual latent variable $u^i$ (the random effect), which is included in the policy
optimization step. In our individual policy learning method, the
policy $\pi(\bullet;u^i)$ is estimated simultaneously with $u^i$, $i\in[N]$.
See Section \ref{sec: Pessimistic Policy Learning with Latent Variables} for
more details.

\blue{
In the following, we provide an example of linear MDPs with shared structure.
\begin{example}[Linear MDP \citep{bradtke1996linear,melo2007q}]
  \label{ex: linear MDP}
  Suppose all MDPs share the same known feature map
  \begin{equation*}
\bpsi: \mS\times\mA \rightarrow \RR^d,
\end{equation*}
which captures all useful information for a given state $(S,A)\in\mS\times\mA$ for MDP
transition and reward.
For MDP $\mM^{(i)}$, the transition probability and reward function are
\begin{align*}
\PP^i(s_+\given s,a) &  = \bmu^i(s_+)^{\top}\bpsi(s,a),\\
r^i(s,a) &= (\btheta^i)^{\top}\bpsi(s,a),
\end{align*}
respectively, where vector of signed measures $\bmu^i = (\bmu_1^i,\dots,\bmu_d^i)^{\top}$
and vector $\btheta^i\in\RR^d$ are unknown. We further suppose that $\bmu^i = \bmu + \bdelta^i$ and
$\btheta^i = \btheta + \bbeta^i$, where $\bmu$ and $\btheta$ are shared for all MDPs. Without loss of
generality, we assume that $\norm{\bpsi(s,a)}_2 \leq 1$ for all $(s,a)\in\mS\times\mA$ and
$\max\{\norm{\bmu^i(\mS)}_2,\norm{\btheta^i}_2\}\leq \sqrt{d}$ for all $i\in[N]$.

By Lemma \ref{lem: linear MDP} in the Supplementary Material,
\begin{equation*}
Q_{\pi^i}^i(s,a) = \bpsi(s,a)^{\top}\bw_{\pi^i}^i,
\end{equation*}
where
\begin{equation*}
\bw_{\pi^i}^i = \left( I_d - \gamma M_{\pi^i}^i \right)^{-1} (\btheta + \bbeta^i),
\end{equation*}
and
\begin{equation*}
  M_{\pi^i}^i = \int_{\mS}  \sum_{a\in\mA}\pi^i(a\given s_+) \bmu^i(s_+)\bpsi(s_+,a)^{\top} ds_+.
\end{equation*}
If we define operator $\Gamma_{\bmu}^{\pi}: \Delta(\mS\times\mA) \rightarrow \Delta(\mS\times\mA)$ by
\begin{equation*}
[\Gamma_{\bmu}^{\pi}p](s,a) = \iint_{\mS\times\mA} p(s_-,a_-)\bpsi^{\top}(s_-,a_-)\bmu(s)\pi(a\given s) ds_-da_-,
\end{equation*}
then we can represent
$p_{\pi^i,t}^i = \Gamma_{\bmu^i}^{\pi^i} p_{\pi^i,t-1}^i$ in \eqref{eq: discounted visitation probability}, with
$p_{\pi^i,0}(s,a) = \nu(s) \pi^i(a\given s)$.
We can show that
\begin{equation*}
\bar{f}^i(s,a) \in \left\{ \frac{1-\gamma}{\bar{d}(s,a)} \sum_{t=1}^{\infty}\gamma^t \left( \Gamma_{\bmu}^{\pi^i}+\Gamma_{\bdelta^i}^{\pi^i} \right)^t \left[  \nu(s)\pi^i(a\given s) \right] \right\},
\end{equation*}
If we assume $\pi^i = \pi(\bullet; u^i)$, with
$u^i = (\texttt{para}\{\bdelta^i\},\bbeta^i)$, where
$\texttt{para}\{\bdelta^i\}$ is the parameter of $\bdelta^i$, then $H(\pi^i,\bdelta^i)$ and
$v_{\pi^i}^+$ only depend on $u^i$. Consequently, $Q_{\pi^i}^i$ and $\bar{f}^i$ only
depend on $u^i$. Therefore, it is plausible to assume $Q_{\pi^i}^i = Q(\bullet; u^i)$
and $\bar{f}^i = f(\bullet; u^i)$.
\end{example}
}

\subsection{Pessimistic Policy Learning with Latent Variables}
\label{sec: Pessimistic Policy Learning with Latent Variables}
Even though we can find the point estimate of $Q_{\pi}$ by solving the empirical min-max
problem \eqref{eq: OPE sample version} for a given $(\pi,\bu)$, it is still
unclear how to quantify
whether the estimated $Q_{\pi}$ could confidently and accurately evaluate the
policy for each individual, partially due to the statistical error and model
misspecification.
For the heterogeneous policy learning goal, to obtain a confident
improvement of the policy, we adopt a pessimistic strategy which
optimizes the policy by using the most pessimistic Q-function estimate $\wt{Q}$
in some uncertainty set defined by
\begin{equation}
\label{eq: uncertainty set}
\Omega(\pi, \bu, \alpha) \triangleq \left\{ Q\in\mQ \colon \max_{f\in\mF}\wh{\Phi}(Q,f,\pi,\bu) \leq \alpha\right\},
\end{equation}
where $\alpha = \alpha_{NT}>0$ is some constant depending on $N$ and $T$.
This ensures
that the lower confidence bound of the values of learned optimal policies is the
largest among all in-class policies.
The construction of the uncertainty set originates from \eqref{eq: OPE sample
  version}.

Then we propose to learn the in-class optimal policy $\pi_{*}$ together with latent
variables $\bu_{*}$ by solving
\begin{equation}
\label{eq: policy learning}
\max_{\pi\in\Pi, \bu} \min_{Q\in\Omega(\pi, \bu, \alpha)} (1-\gamma) \sum_{i=1}^N\EE_{S_0^i\sim\nu} Q(S_0^i,\pi(S_0^i;u^i);u^i),
\end{equation}
which maximizes the policy values of the worst case in the uncertainty sets. We
denote the estimated policy as $\wh{\pi}$ and estimated latent variables as
$\wh{\bu}$.
Through this pessimistic learning approach, we can provide an overall
regret gap between $(\pi_{*},\bu_{*})$ and $(\wh{\pi},\wh{\bu})$ with
only the coverage assumption on $\pi_{*}$.

Notice that the policy optimization in \eqref{eq: policy learning} can be
difficult due to the constraints on $Q$, especially when the user-defined
function classes are complex. To lessen the computational burden, we use
the Lagrangian dual problem of the primal problem, where the Lagrangian is defined as
\begin{equation}
  \label{eq: lagrangian}
  L(Q,\pi,\bu, \lambda) = (1-\gamma) \sum_{i=1}^N\EE_{S_0^i\sim\nu} Q(S_0^i,\pi(S_0^i;u^i);u^i) + \lambda (\max_{f\in\mF}\wh{\Phi}(Q,f,\pi,\bu) - \alpha).
\end{equation}
Then we can solve the dual problem by
\begin{equation}
\label{eq: dual policy learning}
\max_{\pi\in\Pi, \bu}\max_{\lambda\geq 0} \min_{Q\in\mQ} L(Q,\pi,\bu, \lambda).
\end{equation}
We denote the learned policy and latent variables by $\wh{\pi}^{\dag}$ and
$\wh{\bu}^{\dag}$, respectively. The dual problem in \eqref{eq: dual policy learning} can be solved more efficiently
than the primal problem in \eqref{eq: policy learning}. However, the weak
duality implies that
\begin{equation*}
\max_{\pi\in\Pi, \bu}\max_{\lambda\geq 0} \min_{Q\in\mQ} L(Q,\pi,\bu, \lambda) \leq
\max_{\pi\in\Pi, \bu} \min_{Q\in\Omega(\pi, \bu, \alpha)} (1-\gamma) \sum_{i=1}^N\EE_{S_0^i\sim\nu} Q(S_0^i,\pi(S_0^i;u^i);u^i),
\end{equation*}
which may not guarantee that
$(\wh{\pi},\wh{\bu}) = (\wh{\pi}^{\dag},\wh{\bu}^{\dag})$ in general. To avoid
the bias caused by the weak duality, we show that
$(\wh{\pi}^{\dag},\wh{\bu}^{\dag})$ can achieve the same regret rate as
$(\wh{\pi},\wh{\bu})$ under the mild assumptions in Section \ref{sec: theory}.

In the case when individuals can be classified into several groups, e.g.,
$\left\{ \mG^k \right\}_{k=1}^K$ as a partition of indices
$\left\{ 1,\dots,N \right\}$,
to encourage subgrouping we can impose further constraints on the latent variables
$\bu$, e.g., $\sum_{i\in[N]}\min_{k\in[K]} \norm{ u^i - v^k }_2$, $\rank(\bu) \leq K$ or restricting $\bu = \bv_1\bv_2$ with penalty on $\norm{\bv_1}_F^2+\norm{\bv_2}_F^2$.
In this paper, we study the multi-centroid penalty
$\sum_{i\in[N]}\min_{k\in[K]} \norm{ u^i - v^k }_2^2$, which encourages subgroup
structures by grouping neighboring individuals $u^i$, $i\in\mG^k$ to the nearest
centroid $v^k$ \citep{tang2021individualized}. Unlike the fused-type penalty \citep[see e.g.,][]{chen2022reinforcement}, which requires pairwise
comparison of complexity $\bigO(N^2)$, the multi-centroid penalty significantly
reduces the computational complexity to $\bigO(NK)$, since $K<\!\!< N$ in
general. In addition, the multi-centroid penalty reduces estimation bias,
which is typically large with one-centroid shrinkage penalty, e.g., the $L^2$-penalty.

By adding the multi-centroid penalty to \eqref{eq: policy learning}, we can solve for $(\wh{\pi},\wh{\bu},\wh{\bv})$ through
\begin{equation}
\label{eq: penalized policy learning}
\max_{\pi\in\Pi, \bu,\bv} \min_{Q\in\Omega(\pi, \bu, \alpha)} (1-\gamma) \sum_{i=1}^N\EE_{S_0^i\sim\nu} Q(S_0^i,\pi(S_0^i;u^i);u^i) - \pen_{\mu}(\bu, \bv),
\end{equation}
where
\begin{equation}
\label{eq: penalty}
\pen_{\mu}(\bu,\bv) = \mu\sum_{i\in[N]}\min_{k\in[K]} \left\{ \norm{u^i - v^k}_2^2 \right\}.
\end{equation}
Similarly, the penalized dual estimator
$(\wh{\pi}^{\dag}, \wh{\bu}^{\dag}, \wh{\bv}^{\dag})$ can be solved by
\begin{equation}
\label{eq: dual penalized policy learning}
\max_{\pi\in\Pi, \bu, \bv}\max_{\lambda\geq 0} \min_{Q\in\mQ} L(Q,\pi,\bu, \lambda) - \pen_{\mu}(\bu, \bv).
\end{equation}
The dual problem \eqref{eq: dual penalized policy learning} can be efficiently
solved without complex constraints. The computational details are given in
Section \ref{sec: computation}.

\section{Theory}
\label{sec: theory}
In this section, we evaluate the theoretical performance of the proposed policy
optimization method. The performance is often measured by the regret, which is
the value loss of the estimated policy compared to the value of the optimal in-class policy.

With our heterogeneous latent variable model, for any $(\pi,\bu)$, the overall
value is defined as
\begin{equation*}
J(\pi,\bu) = \frac{1}{N} \sum_{i=1}^N (1-\gamma) \EE_{S_0^i\sim \nu} Q_{\pi}(S_0^i, \pi(S_0^i;u^i);u^i),
\end{equation*}
and the corresponding regret $J(\pi_{*}, \bu_{*}) - J(\pi,\bu)\geq 0$.

Specifically, we study the case when individuals can be partitioned into
$K\geq 1$ groups $\mG^1,\dots,\mG^K$, with corresponding proportions $p_k = |\mG^k|/N$, $k\in[K]$. Here $p_1,\dots,p_K$ are bounded away from 0, i.e., $p_{\min} = \min \{ p_k\}_{k=1}^K >0$.
Within each subpopulation $\mG^k$, the
individuals share the same latent variable, i.e., $u^i = u^{\mG^k}$ for
$i\in\mG^k$. For the optimal policy $\pi_{*}$, the associated $\bu_{*}$ satisfies
$u_{*}^{\mG^k} \neq u_{*}^{\mG^{k'}}$ when $k \neq k'$.

The theoretical challenges of this framework stem from two sources. First, the dimension of
$\bu$ grows linearly with $N$, which causes a large variation of
$\wh{\Phi}(Q,f,\pi,\bu)$; hence a larger uncertainty set $\Omega(\pi,\bu,\alpha)$ with a
bigger $\alpha$, which is required to capture the true Q-function.
It is crucial that the true Q-function is contained in the proposed uncertainty set,
otherwise the proposed pessimistic learning method would suffer from model misspecification.
However, the increased uncertainty of the true Q-function could further increase the regret
of the estimated optimal policy with latent variables. Therefore, a proper
uncertainty level is required to obtain good individual policies in terms of value. Second, the trajectories
$\left\{ \left( S_t^i,A_t^i,R_t^i \right)_{t=0}^{T-1} \right\}_{i\in[N]}$ are
time-correlated for each individual and their correlation structures can be
varying among different subpopulations. We establish theoretical properties showing
that our method can learn a pair $(\wh{\pi},\wh{\bu})$ well, which can be as good as the oracle pair
$(\pi_{\circ},\bu_{\circ})$ in terms of the regret asymptotically, through carefully choosing
the uncertainty level $\alpha$.

For the rest of this section, we first list several technical assumptions in
Section \ref{sec: assumptions}. In Section \ref{sec: oracle regret and
  consistency}, we derive the overall regret for the oracle policy, when the
true subpopulation information $\left\{ \mG_k \right\}_{k=1}^K$ is known. In
Section \ref{sec: dual regret}, we establish the weak oracle consistency property of our algorithm with restriction on
$\bu$. Then we establish the asymptotic regret
rate for our dual algorithm. All proofs of the theorems are provided in the
Supplementary Material.

\subsection{Assumptions}
\label{sec: assumptions}
We first provide several technical assumptions and discuss their implications.
\begin{assumption}
  \label{ass: mixing}
  Within each subgroup $\mG^k$, individuals share the same behavior policy, i.e., $\pi_b^i = \pi_b^{\mG^k}$ for $i\in\mG^k$.
  For each individual $i\in\mG^k$, $k\in[K]$, the stochastic process
  $\left\{ S_t^i,A_t^i \right\}_{t\geq 0}$ induced by behavior policy $\pi_b^{\mG^k}$
  is stationary, exponentially $\beta$-mixing. The $\beta$-mixing coefficient at time
  lag $j$ satisfies that $\beta(j) \leq \beta_0 \exp (-\zeta j)$ for $\beta_0\geq 0$ and $\zeta > 0$.
\end{assumption}

Assumption \ref{ass: mixing} characterizes the dependency structure among the
observations $\left\{ S_t^i,A_t^i \right\}_{t\geq 0}$ for individual trajectories
from different subpopulations. The $\beta$-mixing basically assumes that the
dependency between $\left\{ S_t^i,A_t^i \right\}_{t\leq t'}$ and
$\left\{ S_t^i,A_t^i \right\}_{t\geq t'+j}$ decays to zero exponentially fast in
$j$. This assumption is widely used in many recent works \citep[see, e.g.,][]{liao2022batch,zhou2022estimating,chen2022reinforcement}.

\begin{definition}[Covering Number]
  \label{def: covering number}
  Let $\left( \mC, \norm{\bullet} \right)$ be a normed space. For any $\mH\subseteq \mC$, the
  set $\left\{ x_i \right\}_{i=1}^n \subseteq \mH$ is a $\epsilon$-covering of $\mH$ if
  $\mH\subseteq \cup_{i=1}^n B(x_i,\epsilon)$, where $B(x,\epsilon)$ is the $\norm{\bullet}$-ball with center
  $x_i$ and radius $\epsilon$. The \textit{covering number} of $\mH$ is defined as
\begin{equation*}
  N(\epsilon,\mH,\norm{\bullet}) = \min \left\{ n\in\NN: \mH\subseteq\cup_{i=1}^nB(x_i,\epsilon) \text{ for some
    } \left\{ x_i \right\}_{i=1}^n \subseteq\mH \right\}.
\end{equation*}
\end{definition}

\begin{assumption}
  \label{ass: spaces}
\begin{enumerate}
  \item [(a)] For any $\mH\in \left\{ \mF,\mQ,\Pi \right\}$, there exists a constant
        $\fC_{\mH}$ such that
      $N(\epsilon,\mH,\norm{\bullet}_{\infty}) \lesssim (1/\epsilon)^{\fC_{\mH}}$. %
  \item [(b)] There exists a Lipschitz constant $L_{\Pi}$ such that
\begin{equation*}
        \sup_{s,a,u}\left| Q_{\pi_1}(s,a;u) - Q_{\pi_2}(s,a;u) \right| \leq L_{\Pi}\sup_{s,a,u} \left| \pi_1(a\given s;u) - \pi_2(a\given s;u) \right|,
\end{equation*}
        for any $\pi_k\in\Pi$ and $Q_{\pi_k}\in\mQ$ for $k=1,2$.%
  \item [(c)] There exists a Lipschitz constant $L_{\mU}$ such that
\begin{equation*}
        \sup_{Q\in\mQ}\sup_{s,a}\left| Q(s,a;u) - Q(s,a;u') \right| \leq L_{\mU} \norm{u-u'}_2,
\end{equation*}
        for all $u,u'\in \mU$.
  \item [(d)] We have that $\sup_{Q\in\mQ}\norm{Q}_{\infty}\leq 1/(1-\gamma)$ and
        $\sup_{f\in\mF} \norm{f}_{\infty} \leq C_{\mF}$.
\end{enumerate}
\end{assumption}

Assumption \ref{ass: spaces} imposes conditions on spaces $\mF$, $\mQ$ and $\Pi$
to bound their complexities. Specifically, Assumption \ref{ass: spaces}(a) states that the function spaces have finite-log covering numbers, which is a common assumption in the literature
\citep[e.g.,][]{antos2008learning}. Examples of these classes include widely
used sparse neural networks with ReLU activation functions \citep{schmidt2020nonparametric}. The Lipschitz-type conditions in Assumption
\ref{ass: spaces}(b) are imposed to control the complexity of the action value function class
induced by $\Pi$. This is a standard assumption in the literature \citep[e.g.,][]{zhou2017residual,liao2022batch,zhou2022estimating}.
Likewise, Assumption \ref{ass: spaces}(c) can be used to control the continuity
of $Q$ in latent variable $u$. For simplicity, in Assumption \ref{ass:
  spaces}(d), we consider the uniformly bounded
classes $\mQ$ and $\mF$ for derivation of the exponential inequalities \citep{van1996weak}.

\begin{assumption}
  \label{ass: realizability}
  For any $\pi\in\Pi$, there exist $Q_{\pi}\in\mQ$ such that $Q_{\pi}(\bullet; u_{*}^i)$ is the
  true Q-function for individual $i\in[N]$. For the optimal policy $\pi_{*}^i = \pi_{*}(\bullet;u^i)$,
  we have $d_{\pi_{*}^i}^i/\bar{d} \in\mF$, $i\in[N]$ for some bounded and symmetric
  function class $\mF$ on $\mS\times\mA$.
\end{assumption}

\begin{remark} Assumption \ref{ass: realizability} is the realizability assumption,
which requires that the individual state value function induced by any policy
$\pi\in\Pi$ can be modeled by the pre-specified nonparametric function class $\mQ$.
For the optimal policy, the symmetric class $\mF$ can capture the distribution
ratio $d_{\pi_{*}^i}^i/\bar{d}$. In single-episode (per-individual) coverage, each individual’s behavior policy must sufficiently visit all the state–action pairs that might be taken by that same individual’s target policy, which, in practice, can be very demanding. If one person never visits certain important states, any learned policy for that individual will be unreliable. In contrast, we only requires that every state–action pair relevant to an individual’s target policy is visited by at least one person in the population. Borrowing these collectively covered states and actions from other individuals is often more realistic.
Assumption \ref{ass: realizability} is commonly
imposed for the value-based approach in the RL literature \citep{jiang2020minimax,liao2022batch}.
\end{remark}

\begin{remark}
For linear MDP in Example \ref{ex: linear MDP}, function class $\mQ$ can be
chosen as the linear span of $\psi(s,a)$ such that Assumption \ref{ass: spaces} can
be satisfied. However, function class $\mF$ need to be rich enough to capture the relevant errors in estimating $Q$, and be symmetric and bounded to include the ratio
\begin{equation*}
 \frac{1-\gamma}{\bar{d}(a,s)} \sum_{t=1}^{\infty}\gamma^t \left( \Gamma_{\bmu}^{\pi_*^i}+\Gamma_{\bdelta^i}^{\pi_*^i} \right)^t \left[  \nu(s)\pi_*^i(a\given s) \right],
\end{equation*}
which requires $\left( \Gamma_{\bmu}^{\pi_*^i}+\Gamma_{\bdelta^i}^{\pi_*^i} \right)^t  \nu\pi_*^i <\!\!< \bar{d}$, for $t>1$.

Under more complex MDP structures, we can use deep neural networks (e.g., feed
forward or other networks depending on the state input) for class $\mQ$, and use a mirror
structure of $\mQ$ for class $\mF$ with some smoothness constraints. When there
exists a proper shared structure model for a given complex MDP, such choices of
$\mQ$ and $\mF$ could satisfy Assumption \ref{ass: spaces} when policy space is
small enough.
\end{remark}

\subsection{Oracle Regret Bounds}
\label{sec: oracle regret and consistency}
In this section, we establish the overall regret bound for the non-penalized oracle
policy with subgroup information. Then we establish the oracle consistency
such that the proposed optimal policy estimator is as good as the oracle policy asymptotically.

To derive the regret bounds for the oracle policy, we first show that the true
Q-function is a feasible solution under the uncertainty set in \eqref{eq: uncertainty set}
with high probability for a proper choice of $\alpha$.

\begin{theorem}
  \label{thm: feasibility}
  Suppose Assumptions \ref{ass: standard} -- \ref{ass: realizability} are
  satisfied and that
\begin{equation}
  \label{eq: oracle alpha}
\alpha \asymp \sqrt{\frac{\max \left\{ \fC_{\mQ},\fC_{\mF},\fC_{\Pi} \right\}}{NT\zeta}\sum_{k=1}^K\log((p_k\delta)^{-1}) \log (|\mG^k|\cdot T)},
\end{equation}
  with $(NT)^{-2} \lesssim \delta \leq 1$. Then for
  every $\pi_{\circ}\in\Pi$ with the corresponding $\bu_{\circ}$, such that $u_{\circ}^i = u_{\circ}^{\mG^k}$ when $i\in\mG^k$,
  for $k\in[K]$, we have that
\begin{equation*}
  Q_{\pi_{\circ}} \in \Omega (\pi_{\circ},\bu_{\circ},\alpha),
\end{equation*}
with probability at least $1-\delta$.
\end{theorem}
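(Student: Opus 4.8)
The plan is to show that the true Q-function annihilates the \emph{population} Bellman functional, and then to bound the uniform empirical fluctuation of $\wh{\Phi}$ over $\mF$ by a subgroup-wise concentration argument for $\beta$-mixing data. First I would introduce the population counterpart
\[
\Phi(Q,f,\pi_\circ,\bu_\circ) = \frac{1}{N}\sum_{i\in[N]} \bar{\EE}^i\!\left[f(S^i,A^i;u_\circ^i)\bigl(R^i + \gamma Q(S_+^i,\pi_\circ(S_+^i;u_\circ^i);u_\circ^i) - Q(S^i,A^i;u_\circ^i)\bigr)\right].
\]
By Assumption \ref{ass: realizability}, $Q_{\pi_\circ}(\bullet;u_\circ^i)$ is the true Q-function for individual $i$, so it satisfies the Bellman equation $Q_{\pi_\circ}(s,a;u_\circ^i) = r^i(s,a) + \gamma\,\EE[Q_{\pi_\circ}(S_+^i,\pi_\circ(S_+^i;u_\circ^i);u_\circ^i)\given S^i=s,A^i=a]$. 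Conditioning on $(S^i,A^i)$ and applying the tower property with the kernel $\PP^i$ (Assumption \ref{ass: standard}(a)--(b)) makes the Bellman residual have conditional mean zero, so $\Phi(Q_{\pi_\circ},f,\pi_\circ,\bu_\circ)=0$ for every $f\in\mF$. Stationarity (Assumption \ref{ass: mixing}) identifies $\bar{\EE}^i$ with the stationary trajectory average, and symmetry of $\mF$ together with linearity of $\wh{\Phi}$ in $f$ gives $\max_{f\in\mF}\wh{\Phi}=\max_{f\in\mF}|\wh{\Phi}|$; it therefore suffices to bound $\max_{f\in\mF}|\wh{\Phi}(Q_{\pi_\circ},f,\pi_\circ,\bu_\circ)|$ by $\alpha$.

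Since $u_\circ^i = u_\circ^{\mG^k}$ for $i\in\mG^k$, I would split $\wh{\Phi} = \sum_{k=1}^K p_k\,\wh{\Phi}_k$, where $\wh{\Phi}_k$ averages the summands over the $|\mG^k|$ trajectories in group $k$ and $p_k=|\mG^k|/N$, so that $\max_f|\wh{\Phi}| \le \sum_k p_k \max_f|\wh{\Phi}_k|$. For each group I would apply a blocking argument to the $\beta$-mixing trajectories: partition each length-$T$ trajectory into blocks of length $a_T \asymp \log(|\mG^k|T)/\zeta$, so the coupling error from treating block sums as independent is $\lesssim |\mG^k|T\,\beta(a_T)\lesssim |\mG^k|T\,\beta_0\exp(-\zeta a_T)\lesssim (NT)^{-1}$, negligible under $(NT)^{-2}\lesssim\delta$. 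Each block sum is bounded via Assumption \ref{ass: spaces}(d) together with $\|r^i\|_\infty\le R_{\max}$ and $\gamma<1$. A Hoeffding/Bernstein bound over the $|\mG^k|\cdot(T/a_T)$ approximately independent blocks, each of size $\bigO(a_T)$, gives for fixed $f$ a deviation of order $\sqrt{a_T\log(1/\delta_k)/(|\mG^k|T)}$. Making this uniform over $\mF$ with an $\epsilon$-net at $\epsilon\asymp (NT)^{-1}$, whose cardinality satisfies $\log N(\epsilon,\mF,\norm{\bullet}_\infty)\lesssim\fC_\mF\log(NT)$ by Assumption \ref{ass: spaces}(a), and absorbing the discretization error by boundedness, yields with probability at least $1-p_k\delta$ (setting $\delta_k=p_k\delta$)
\[
\max_{f\in\mF}|\wh{\Phi}_k| \lesssim \sqrt{\frac{\fC_\mF\,\log((p_k\delta)^{-1})\,\log(|\mG^k|T)}{|\mG^k|\,T\,\zeta}}.
\]

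A union bound over $k$ gives total failure probability $\sum_k p_k\delta=\delta$. Summing the per-group bounds with weights $p_k=|\mG^k|/N$ and applying Cauchy--Schwarz across the $K$ groups,
\[
\sum_k p_k\max_f|\wh{\Phi}_k| \lesssim \frac{1}{N}\sqrt{\frac{\fC_\mF}{T\zeta}}\sum_k\sqrt{|\mG^k|\,c_k} \le \sqrt{\frac{\fC_\mF}{NT\zeta}\sum_{k=1}^K c_k},
\]
with $c_k=\log((p_k\delta)^{-1})\log(|\mG^k|T)$, which matches $\alpha$ in \eqref{eq: oracle alpha} after bounding $\fC_\mF\le\max\{\fC_\mQ,\fC_\mF,\fC_\Pi\}$. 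Hence $\max_f\wh{\Phi}(Q_{\pi_\circ},f,\pi_\circ,\bu_\circ)\le\alpha$, i.e.\ $Q_{\pi_\circ}\in\Omega(\pi_\circ,\bu_\circ,\alpha)$, with probability at least $1-\delta$.

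The hard part will be the per-group uniform concentration: one must couple the blocking scale $a_T\asymp\log(|\mG^k|T)/\zeta$ with the covering-based union bound so that the two logarithmic factors appear as a \emph{product} under the square root, while verifying that both the mixing coupling error and the net discretization error are negligible over the stated range $(NT)^{-2}\lesssim\delta\le1$. The subgroup decomposition and the final Cauchy--Schwarz step are what convert the naive $\sum_k$ of per-group rates into the clean aggregated rate $\alpha$.
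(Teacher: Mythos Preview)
Your proposal is essentially the paper's own argument: vanishing population Bellman residual, subgroup decomposition $\wh\Phi=\sum_k p_k\wh\Psi_k$, a blocking/coupling concentration for $\beta$-mixing trajectories (the paper packages this as its Lemma~\ref{lem:hoeffding-mixing}), a union bound with $\delta_k=p_k\delta$, and the same Cauchy--Schwarz aggregation $\sum_k\sqrt{p_k c_k}\le\sqrt{\sum_k c_k}$.

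One point to tighten: the statement is meant to hold \emph{uniformly} over oracle pairs $(\pi_\circ,\bu_\circ)$ (this is how it is used downstream, e.g.\ for the data-dependent $\wh\pi_\circ$ in Theorem~\ref{thm: oracle regret}), which is why $\max\{\fC_\mQ,\fC_\mF,\fC_\Pi\}$ appears in $\alpha$. Your argument, as written, fixes $\pi_\circ$ and $Q_{\pi_\circ}$ and only nets over $\mF$, then cosmetically bounds $\fC_\mF\le\max\{\fC_\mQ,\fC_\mF,\fC_\Pi\}$. To match the paper you should net over the full composite class indexed by $(Q,f,\pi)\in\mQ\times\mF\times\Pi$ (using Assumption~\ref{ass: spaces}(a)(b)(d) to control the $\norm{\bullet}_\infty$ covering of the composed summand), so that the three complexity constants enter the bound genuinely rather than by fiat. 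The rest of your plan is correct as stated.
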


Theorem \ref{thm: feasibility} establishes the foundation of the regret
guarantee for the oracle estimator of the optimal policy with latent variables.
The feasibility of $Q_{\pi_{\circ}}$ in the uncertainty set $\Omega(\pi_{\circ},\bu_{\circ},\alpha)$ by choosing
$\alpha$ in \eqref{eq: oracle alpha} verifies the use of pessimistic policy
learning by solving \eqref{eq: policy learning} with a high probability. That
is, we pessimistically evaluate the value of given oracle pair $(\pi_{\circ},\bu_{\circ})$ by a lower
bound of its true value during the policy optimization. The pessimism in the
offline reinforcement learning can relax the full coverage
assumption by a partial coverage assumption, which only requires that the
offline data cover the trajectory generated by the optimal policy. However, the
realizability assumption for the existence of $Q_{\pi}\in\mQ$ for all $\pi\in\Pi$ is still
required \citep{jiang2020minimax,xie2021bellman,zhan2022offline}. A recent work
by \citet{chen2023singularity} further relaxes the partial coverage assumption
for policy learning with homogeneous offline data by using the Lebesgue
decomposition theorem and distributional robust optimization, but a
Bellman completeness condition is required in addition. In this paper, we
also impose the partial coverage assumption in Assumption \ref{ass:
  realizability} for technical simplicity.

Equipped with Theorem \ref{thm: feasibility}, we establish the following regret
warranty theorem for the oracle estimator.

\begin{theorem}
  \label{thm: oracle regret}
  Suppose Assumptions \ref{ass: standard} -- \ref{ass: realizability} are
  satisfied, with $\alpha$ and $\delta$ given in Theorem \ref{thm: feasibility}, the
  regret for estimated oracle pair $(\wh{\pi}_{\circ},\wh{\bu}_{\circ})$ is bounded by
\begin{equation*}
  J(\pi_{*},\bu_{*}) - J(\wh{\pi}_{\circ}, \wh{\bu}_{\circ}) \lesssim \alpha,
\end{equation*}
with probability at least $1-\delta$.
\end{theorem}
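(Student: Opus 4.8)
The plan is to follow the standard pessimism template, isolating the regret into an optimization gap that is nonpositive and a statistical term controlled by the uncertainty level $\alpha$. Throughout I write $\wh J(\pi,\bu)=\min_{Q\in\Omega(\pi,\bu,\alpha)}\frac1N\sum_{i}(1-\gamma)\EE_{S_0^i\sim\nu}Q(S_0^i,\pi(S_0^i;u^i);u^i)$ for the pessimistic value, so that the oracle estimator $(\wh\pi_{\circ},\wh\bu_{\circ})$ maximizes $\wh J$ over pairs whose $\bu$ is constant on each oracle group $\mG^k$. I condition on the probability-$(1-\delta)$ event of Theorem \ref{thm: feasibility}, on which $Q_{\pi_{\circ}}\in\Omega(\pi_{\circ},\bu_{\circ},\alpha)$ holds simultaneously for every such oracle pair $(\pi_{\circ},\bu_{\circ})$; note this uniformity over $\pi_{\circ}\in\Pi$ is exactly what the feasibility theorem supplies.

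First I would decompose
\begin{equation*}
J(\pi_{*},\bu_{*})-J(\wh\pi_{\circ},\wh\bu_{\circ})=\underbrace{\big[J(\pi_{*},\bu_{*})-\wh J(\pi_{*},\bu_{*})\big]}_{\mathrm{(I)}}+\underbrace{\big[\wh J(\pi_{*},\bu_{*})-\wh J(\wh\pi_{\circ},\wh\bu_{\circ})\big]}_{\mathrm{(II)}}+\underbrace{\big[\wh J(\wh\pi_{\circ},\wh\bu_{\circ})-J(\wh\pi_{\circ},\wh\bu_{\circ})\big]}_{\mathrm{(III)}}.
\end{equation*}
Term $\mathrm{(II)}\le 0$ because $(\pi_{*},\bu_{*})$ is itself a feasible oracle pair (its $\bu_{*}$ is group-constant by the model setup) while $(\wh\pi_{\circ},\wh\bu_{\circ})$ maximizes $\wh J$ among such pairs. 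Term $\mathrm{(III)}\le 0$ by pessimism: on the feasibility event $Q_{\wh\pi_{\circ}}\in\Omega(\wh\pi_{\circ},\wh\bu_{\circ},\alpha)$, and since $\wh J$ minimizes over that set whereas $J$ evaluates the same objective at the realizable $Q_{\wh\pi_{\circ}}$, the minimum cannot exceed it. Hence the regret is at most $\mathrm{(I)}$, and crucially the coverage assumption is never invoked at the data-dependent policy $\wh\pi_{\circ}$ but only at $\pi_{*}$ — this is precisely why pessimism buys us partial coverage.

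The substance is bounding $\mathrm{(I)}$. Letting $\wh Q_{*}$ attain the minimum in $\wh J(\pi_{*},\bu_{*})$, I would write $\mathrm{(I)}=\frac1N\sum_i\big[J^i(\pi_{*}^i)-\wt J^i(\pi_{*}^i)\big]$ with the off-policy evaluator built from $\wh Q_{*}(\bullet;u_{*}^i)$, and apply Lemma \ref{lem: OPE error} to turn each summand into the discounted-visitation expectation of the Bellman residual of $\wh Q_{*}$. Invoking Assumption \ref{ass: realizability} (so that $d_{\pi_{*}^i}^i/\bar d\in\mF$) together with the shared-structure maximizer $f(\bullet;u_{*}^i)$, a change of measure bounds $\mathrm{(I)}$ by $\max_{f\in\mF}\Phi(\wh Q_{*},f,\pi_{*},\bu_{*})$, the population analogue of $\wh\Phi$. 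Splitting this as $\max_f\wh\Phi(\wh Q_{*},f,\pi_{*},\bu_{*})+\sup|\Phi-\wh\Phi|$, the first piece is at most $\alpha$ because $\wh Q_{*}\in\Omega(\pi_{*},\bu_{*},\alpha)$ by construction, so the whole problem reduces to the uniform deviation.

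The hard part is exactly this uniform concentration: showing $\sup_{Q\in\mQ,f\in\mF,\pi\in\Pi,\bu}|\Phi-\wh\Phi|\lesssim\alpha$ over the product class, in the presence of within-trajectory temporal dependence. I would reuse the machinery behind Theorem \ref{thm: feasibility}: a blocking argument converting each exponentially $\beta$-mixing trajectory (Assumption \ref{ass: mixing}) into nearly independent blocks, a Bernstein-type bound per subpopulation, the finite-log covering numbers of $\mF,\mQ,\Pi$ (Assumption \ref{ass: spaces}(a)) and the Lipschitz covering of the latent space (Assumption \ref{ass: spaces}(c)) to discretize the suprema, and a union bound over the $K$ groups producing the $\sum_k\log((p_k\delta)^{-1})\log(|\mG^k|T)$ factor and the mixing constant $\zeta$ that appear in \eqref{eq: oracle alpha}. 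Because the oracle restricts $\bu$ to $K$ distinct group-level values rather than $N$ free ones, the metric entropy of the latent component stays controlled and the deviation matches $\alpha$ rather than inflating with $N$. Combining the three terms gives $\mathrm{(I)}\lesssim\alpha$ on the same $(1-\delta)$ event, which is the claim.
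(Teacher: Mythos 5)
Your proof is correct and follows essentially the same route as the paper: your three-term decomposition with $\mathrm{(II)},\mathrm{(III)}\le 0$ is just a reorganization of the paper's chain of inequalities (feasibility of $Q_{\wh{\pi}_{\circ}}$ in $\Omega(\wh{\pi}_{\circ},\wh{\bu}_{\circ},\alpha)$, optimality of the oracle estimator, then the change of measure via Lemma \ref{lem: OPE error} and Assumption \ref{ass: realizability} at $\pi_{*}$, with the empirical term bounded by $\alpha$ and the deviation controlled by the same mixing-plus-covering concentration used in Theorem \ref{thm: feasibility}). No gaps worth noting.
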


Our regret bound in Theorem \ref{thm: oracle regret} shows that the overall regret of
finding an optimal policy can converge to zero in a rate near $\bigO((NT)^{-1/2})$, as long as the length or trajectory $T$ or sample size $N$ diverges to infinity when the subgroup
information is known. \blue{Our regret bound of order $\bigO((NT)^{-1/2})$ scales reversely with the number of
individuals $N$ because our algorithm uses all information from $N$
individuals through a shared structure.} However, the method of \citet{chen2022reinforcement} only
establishes the bound for off-policy estimation for a given policy, but not the
regret for a learned optimal policy.

\subsection{Feasibility and Regret Bounds for the Dual Problem}
\label{sec: dual regret}
When the subgroup information is unknown, the optimizer of \eqref{eq: policy
  learning} typically cannot satisfy a regret bound as good as that of the oracle
estimator without constraints on latent variables $\bu$.
However, by solving \eqref{eq: penalized policy learning}, which includes a
multi-centroid penalty on $\bu$, the oracle regret rate is asymptotically achievable,
as long as $N = \smallO(T)$ and the coefficient of the penalty term satisfying $(N/T)^{1/2}<\!\!\!< \mu <\!\!\!< 1$.

In the following, we show that the oracle pair
$(\wh{\pi}_{\circ},\wh{\bu}_{\circ})$ can be obtained asymptotically
by solving \eqref{eq: penalized policy learning} under certain conditions.

\begin{theorem}
  \label{thm: oracle feasibility}
  Suppose that conditions in Theorem \ref{thm: oracle regret} are satisfied and
  $\norm{u_{*}^{\mG^k} - u_{*}^{\mG^{k'}}} \gtrsim \alpha$ for any $k\neq k'\in[K]$.
  When $N = \smallO(T)$ and $(N/T)^{1/2}<\!\!\!< \mu <\!\!\!< 1$, there exists a local
  maximizer $(\wh{\pi},\wh{\bu},\wh{\bv})$ of the problem \eqref{eq: penalized
    policy learning}, such that as $T\rightarrow\infty$,
\begin{equation*}
  \pr \left( \norm{\wh{\bu} - \bu_{*}}_{2,\infty}\leq \bigO(T^{-1}), \norm{\wh{\bv} - \bv_{*}}_{2,\infty}\leq \bigO(T^{-1}) \right) \rightarrow 1.
\end{equation*}
In addition, the regret can be bounded by
\begin{equation*}
J(\pi_{*},\bu_{*}) - J(\wh{\pi},\wh{\bu}) = \bigOp(\alpha).
\end{equation*}
\end{theorem}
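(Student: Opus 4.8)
The plan is to read \eqref{eq: penalized policy learning} as a penalized $M$-estimation problem and to establish an oracle property by a localized argument: I will exhibit a local maximizer inside a $\bigO(T^{-1})$-ball (in $\norm{\cdot}_{2,\infty}$) around the oracle triple $(\pi_{\circ},\bu_*,\bv_*)$, where $\bu_*$ carries the exact group structure $u_*^i=u_*^{\mG^k}$ and $\bv_*=(u_*^{\mG^1},\dots,u_*^{\mG^K})$, so that $\pen_{\mu}(\bu_*,\bv_*)=0$. Writing the profiled objective as $\Psi(\pi,\bu,\bv)=V(\pi,\bu)-\pen_{\mu}(\bu,\bv)$, with $V(\pi,\bu)=\min_{Q\in\Omega(\pi,\bu,\alpha)}(1-\gamma)\sum_i\EE_{S_0^i\sim\nu}Q(S_0^i,\pi(S_0^i;u^i);u^i)$, the classical device is to show that on the sphere of radius $r\asymp T^{-1}$ one has $\Psi<\Psi(\pi_{\circ},\bu_*,\bv_*)$ with probability tending to one; continuity over the closed ball then forces the maximizer into the interior, which is the desired local solution.

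The first step is to establish local stability of the nearest-centroid assignment. Since $\alpha\asymp(NT)^{-1/2}$ up to logarithmic factors and $N=\smallO(T)$, the radius $r\asymp T^{-1}$ satisfies $r\ll\alpha\lesssim\norm{u_*^{\mG^k}-u_*^{\mG^{k'}}}$, so every $u^i$ in the ball stays strictly closest to the centroid of its own group. On this event the $\min_{k}$ in \eqref{eq: penalty} is locally inactive and $\pen_{\mu}$ reduces to the smooth quadratic $\mu\sum_k\sum_{i\in\mG^k}\norm{u^i-v^k}_2^2$; minimizing over $\bv$ replaces each $v^k$ by the group mean and turns the penalty into $\mu$ times the within-group sum of squares, so that it charges only within-group heterogeneity and leaves the group-common shift free. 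I would then split any perturbation $\bdelta=\bu-\bu_*$ into a within-group part and a group-common part. Along the within-group directions the profiled penalty is strictly concave with curvature $\asymp\mu$, while the first-order change of $V$ is the individual-specific fluctuation of the score at the truth, because individuals of one group are statistically exchangeable there and the population component of $\partial V/\partial u^i$ is common to the group and absorbed into the group-common direction; bounding this fluctuation uniformly over $\mF,\mQ,\Pi$ via the covering number in Assumption \ref{ass: spaces}(a) and the exponential $\beta$-mixing inequality of Assumption \ref{ass: mixing} makes the within-group part of order $\mu^{-1}\times(\text{noise})$, which the condition $\mu\gg(N/T)^{1/2}$ drives to $\bigO(T^{-1})$. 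The group-common part is unpenalized and is instead pinned by the curvature of $V$ in the latent variable near the optimum—the true optimal policy maximizes the population value, giving vanishing first-order and negative-definite second-order behavior—so the deviation is $(\text{curvature})^{-1}\times(\text{aggregated score})$, and the same mixing/covering concentration (the machinery behind Theorem \ref{thm: oracle regret}) yields $\bigO(T^{-1})$ under $N=\smallO(T)$. Combining the two directions gives $\norm{\wh{\bu}-\bu_*}_{2,\infty}=\bigO(T^{-1})$ and, through the profiled centroids, $\norm{\wh{\bv}-\bv_*}_{2,\infty}=\bigO(T^{-1})$, while the strict domination on the sphere certifies a genuine local maximizer.

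For the regret claim I would use $J(\pi_*,\bu_*)-J(\wh{\pi},\wh{\bu})=[J(\pi_*,\bu_*)-J(\wh{\pi}_{\circ},\wh{\bu}_{\circ})]+[J(\wh{\pi}_{\circ},\wh{\bu}_{\circ})-J(\wh{\pi},\wh{\bu})]$. The first bracket is $\bigOp(\alpha)$ by Theorem \ref{thm: oracle regret}. For the second, the consistency just proved places $(\wh{\pi},\wh{\bu})$ within $\bigO(T^{-1})$ of the oracle-structured solution, so the Lipschitz continuity of the action value in the policy and in the latent variable (Assumption \ref{ass: spaces}(b),(c)) makes the value gap $\bigO(L_{\mU}T^{-1})=\smallO(\alpha)$; equivalently, on the stability event $Q_{\pi_*}$ stays feasible in $\Omega(\wh{\pi},\wh{\bu},\alpha)$ up to the same negligible perturbation and the residual penalty is $\bigO(\mu T^{-2})=\smallO(\alpha)$ since $\mu\ll1$. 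Hence $J(\pi_*,\bu_*)-J(\wh{\pi},\wh{\bu})=\bigOp(\alpha)$.

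The hard part will be the analysis of the pessimistic value $V(\pi,\bu)$, because the minimization runs over the uncertainty set $\Omega(\pi,\bu,\alpha)$ in \eqref{eq: uncertainty set}, whose constraint itself depends on $\bu$ through $\wh{\Phi}$; differentiating $V$ in $u^i$ therefore demands an envelope/KKT argument that passes through both the objective and the active constraint with its Lagrange multiplier, and the resulting individual-specific score must be controlled uniformly over the three function classes under temporal dependence. Showing that the group-common component attains the fast $\bigO(T^{-1})$ rate, rather than the regret-level $\bigO(\alpha)$, will in addition require a local curvature (identifiability) lower bound for $V$ in the latent variable, and the nonsmoothness of the $\min_k$ operator must be handled at the same time—which is exactly where the separation condition $\norm{u_*^{\mG^k}-u_*^{\mG^{k'}}}\gtrsim\alpha$ becomes indispensable.
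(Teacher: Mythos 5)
Your route is a localized $M$-estimation argument: exhibit strict domination of the objective on a sphere of radius $T^{-1}$ around the oracle point, splitting perturbations into within-group and group-common directions. This is genuinely different from the paper's proof, which never localizes: it splits on whether $\pen_{\mu}(\wh{\bu},\wh{\bv})=0$ or $>0$. In the first case the estimator is already exactly group-structured and inherits the regret of the estimated oracle pair from Theorem \ref{thm: oracle regret}; in the second case the paper compares the penalized objective at $(\pi_{*},\bu_{*},\bv_{*})$ (where the penalty vanishes) with that at any candidate that is both penalized and far from the truth, and shows $N^{-1}\pen_{\mu}(\wh{\bu},\wh{\bv})$ dominates the $\bigOp(\alpha)$ statistical fluctuation under $\mu\gg(N/T)^{1/2}$, so such candidates cannot be maximizers. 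Crucially, the paper's argument uses no curvature of the pessimistic value in $\bu$; Assumption \ref{ass: spaces}(c) is invoked only as an \emph{upper} Lipschitz bound to transfer the value from $\wh{\bu}$ to $\bu_{*}$ once closeness is in hand.

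The gap in your proposal sits exactly where you flag ``the hard part,'' and it is not a technicality you can defer. After profiling out $\bv$ the penalty becomes $\mu$ times the within-group sum of squares, which is \emph{invariant} under a common shift of all $u^i$ in a group together with its centroid. Along such group-common directions the only force preventing domination on your sphere is a curvature (second-order identifiability) lower bound for $V(\pi,\bu)$ in $u$ at the optimum. No such assumption appears in the paper --- Assumption \ref{ass: spaces}(c) bounds the value gap from above by $L_{\mU}\norm{u-u'}_2$, which is the wrong direction --- and the concentration machinery behind Theorems \ref{thm: feasibility} and \ref{thm: oracle regret} controls only value gaps at rate $\alpha\asymp(NT)^{-1/2}$, which without curvature yields no parameter rate at all, let alone $T^{-1}$. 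Your within-group balancing also does not produce the claimed rate: equating per-individual score noise $\bigOp(T^{-1/2})$ against penalty curvature $\mu$ gives a deviation of order $\mu^{-1}T^{-1/2}$, which under $(N/T)^{1/2}\ll\mu\ll 1$ is only $\smallO(N^{-1/2})$ and $\omega(T^{-1/2})$ --- far larger than $T^{-1}$. Finally, in the regret step your bound on $J(\wh{\pi}_{\circ},\wh{\bu}_{\circ})-J(\wh{\pi},\wh{\bu})$ via Lipschitz continuity in $u$ is incomplete because consistency places $\wh{\bu}$ near $\bu_{*}$, not near $\wh{\bu}_{\circ}$, and nothing controls the distance between the two learned policies $\wh{\pi}$ and $\wh{\pi}_{\circ}$; the paper instead substitutes $u_{*}^{\mG^k}$ for $\wh{u}^i$ at cost $\bigOp(T^{-1})$ and reruns the pessimistic comparison against $\Omega(\wh{\pi},\bu_{*},\alpha)$ using Theorem \ref{thm: feasibility}.
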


Theorem \ref{thm: oracle feasibility} implies that the weak oracle optimal
policy can be achieved
asymptotically by solving the penalized optimization \eqref{eq: penalized
  policy learning} when the length of the trajectory $T$ is much larger than the
number of individuals $N$. This condition can be easily satisfied in many
applications. For example, in mobile health,
each subject's physical activities are
monitored frequently and so are the interventions recommended by the
wearable devices.
In addition, we require that the true latent variables $u_{*}^i$'s are well-separated for $i$ in different subgroups.
As a result, even if the estimated $\wh{u}^i$ are not the same within each subgroup,
in terms of the regret rate, the proposed estimators of the optimal individual policies are
asymptotically as good as those of the oracle estimators. \blue{By setting
  $K=1$, our rate of regret bound achieves the same rate as in \citet{xie2021bellman} and
  \citet{zhan2022offline} in homogeneous cases, where they impose similar data coverage
  assumptions.}

\begin{corollary}
  \label{cor: group oracle regret}
  Suppose that the conditions in Theorem \ref{thm: oracle feasibility} are satisfied. Define the group value
  $J^{\mG^k}(\pi,\bu) = |\mG^k|^{-1}\sum_{i\in\mG^k}(1-\gamma)\EE_{S_0^i\sim\nu} Q_{\pi}(S_0^i, \pi(S_0^i;u^i);u^i)$.
  Then the
  regret of group $\mG^k$, $k\in [K]$, for estimated pair
  $(\wh{\pi},\wh{\bu})$ is asymptotically bounded by
\begin{equation*}
  J^{\mG^k}(\pi_{*},\bu_{*}) - J^{\mG^k}(\wh{\pi}, \wh{\bu}) \lesssim p_k^{-1}\alpha.
\end{equation*}
\end{corollary}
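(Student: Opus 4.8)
The plan is to obtain the group-level bound directly from the aggregate regret bound of Theorem~\ref{thm: oracle feasibility}, by observing that the overall value is an exact $p_k$-weighted average of the group values and that each group's regret is non-negative up to a term negligible relative to $\alpha$.

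First I would set up notation. Writing the individual value as $v^i(\pi,\bu) \triangleq (1-\gamma)\EE_{S_0^i\sim\nu}Q_{\pi}(S_0^i,\pi(S_0^i;u^i);u^i)$, the overall value is $J(\pi,\bu) = N^{-1}\sum_{i=1}^N v^i(\pi,\bu)$, and I would define the group value as the within-group average $J^{\mG^k}(\pi,\bu) \triangleq |\mG^k|^{-1}\sum_{i\in\mG^k} v^i(\pi,\bu)$. Partitioning $[N]$ into $\mG^1,\dots,\mG^K$ and using $p_k = |\mG^k|/N$ then yields the exact decomposition
\[
J(\pi,\bu) = \sum_{k=1}^K p_k\, J^{\mG^k}(\pi,\bu),
\]
so that, writing $\Regret^{\mG^k} \triangleq J^{\mG^k}(\pi_{*},\bu_{*}) - J^{\mG^k}(\wh{\pi},\wh{\bu})$, the aggregate regret splits as the convex combination $J(\pi_{*},\bu_{*})-J(\wh{\pi},\wh{\bu}) = \sum_{k=1}^K p_k\,\Regret^{\mG^k}$.

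Second, and this is the main obstacle, I would show that each $\Regret^{\mG^k}$ is non-negative up to $\bigOp(T^{-1})$. The oracle pair $(\pi_{*},\bu_{*})$ is individually optimal, i.e.\ $\pi_{*}(\cdot;u_{*}^i)$ attains $\max_{\pi^i\in\Pi}J^i(\pi^i)$ for every $i$, so it suffices to argue that $v^i(\wh{\pi},\wh{\bu})$ does not exceed the true in-class optimum by more than $\bigO(T^{-1})$. The delicate point is that $v^i(\wh{\pi},\wh{\bu})$ is evaluated at the estimated latent $\wh{u}^i$, whereas realizability (Assumption~\ref{ass: realizability}) certifies $Q_{\wh{\pi}}(\cdot;u_{*}^i)$ --- not $Q_{\wh{\pi}}(\cdot;\wh{u}^i)$ --- to be a true Q-function. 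I would bridge this using the consistency $\norm{\wh{\bu}-\bu_{*}}_{2,\infty} = \bigO(T^{-1})$ from Theorem~\ref{thm: oracle feasibility} together with the Lipschitz continuity of $Q$ in $u$ and in $\pi$ (Assumptions~\ref{ass: spaces}(c) and (b)), so that replacing $\wh{u}^i$ by $u_{*}^i$ in both the Q-function and the policy argument perturbs $v^i(\wh{\pi},\wh{\bu})$ by at most $\bigO(T^{-1})$; the resulting quantity equals the true value of the admissible policy $\wh{\pi}(\cdot;u_{*}^i)$, hence is at most $v^i(\pi_{*},\bu_{*})$. Averaging over $i\in\mG^k$ gives $\Regret^{\mG^k}\geq -\bigOp(T^{-1})$, uniformly in $k$.

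Finally I would isolate a single group by a pigeonhole step. Since every summand is non-negative up to $\bigOp(T^{-1})$ and $p_k \geq p_{\min}>0$, dropping all but the $k$-th term gives
\[
p_k\,\Regret^{\mG^k} \leq \sum_{k'=1}^K p_{k'}\,\Regret^{\mG^{k'}} + \bigOp(T^{-1}) = \bigl[J(\pi_{*},\bu_{*})-J(\wh{\pi},\wh{\bu})\bigr] + \bigOp(T^{-1}) = \bigOp(\alpha),
\]
where the last equality uses Theorem~\ref{thm: oracle feasibility} and the fact that, under $N=\smallO(T)$, one has $T^{-1}\lesssim (NT)^{-1/2}\lesssim \alpha$. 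Dividing by $p_k$ yields $\Regret^{\mG^k}\lesssim p_k^{-1}\alpha$ with probability tending to one, as claimed. The only nontrivial ingredient is the latent-variable mismatch controlled in the second step; the value decomposition and the final isolation are elementary.
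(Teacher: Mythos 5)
Your proposal is correct and follows what is evidently the intended argument: the paper states Corollary~\ref{cor: group oracle regret} without a written proof, and the natural derivation from Theorem~\ref{thm: oracle feasibility} is exactly your decomposition $J=\sum_k p_k J^{\mG^k}$ combined with near-non-negativity of each group regret and division by $p_k$. Your extra care in handling the latent-variable mismatch via the $\norm{\wh{\bu}-\bu_{*}}_{2,\infty}=\bigO(T^{-1})$ consistency and Assumption~\ref{ass: spaces}(b)--(c), and in checking $T^{-1}\lesssim\alpha$ under $N=\smallO(T)$, fills in the details the paper leaves implicit.
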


Corollary \ref{cor: group oracle regret} shows that the rate of the regret for
group $\mG^k$ is about $\bigO_p((Np_k^2T)^{-1/2})$, when the length of trajectory
$T$ or group sample size $Np_k$ diverges to infinity. For an individual $i$ in
subgroup $\mG^k$, we can bound its regret through to the proportion of
group $\mG^k$. However, the rate of this subgroup regret bound is slower than
$\bigO_p((Np_kT)^{-1/2})$, which can be obtained by homogeneous RL methods using only data collected from subgroup $\mG^k$, when the subgroup information is
\textit{known}. In our scenario, the subgroup information is \textit{unknown}
and the subgroups with small proportion could be overwhelmed by larger subgroups,
as our optimization objective is based on average regret.

To alleviate the computational burden of solving \eqref{eq: penalized policy learning}, we propose to
solve for a dual problem \eqref{eq: dual penalized policy learning}. In the
following Theorem \ref{thm: dual regret}, we characterize the regret of the dual
optimizer $(\wh{\pi}^{\dag},\wh{\bu}^{\dag})$ with the convexity of $\mQ$ space.
\begin{theorem}
  \label{thm: dual regret}
  Suppose conditions in Theorem \ref{thm: oracle feasibility} are satisfied, when
  $\mQ$ is convex, the regret for $(\wh{\pi}^{\dag},\wh{\bu}^{\dag})$ is asymptotically bounded by
\begin{equation*}
J(\pi_{*},\bu_{*}) - J(\wh{\pi}^{\dag},\wh{\bu}^{\dag}) = \bigOp(\alpha),
\end{equation*}
 as $T\rightarrow\infty$.
\end{theorem}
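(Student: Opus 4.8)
The plan is to mirror the regret analysis of Theorem~\ref{thm: oracle feasibility} but to replace the primal pessimistic value by its Lagrangian counterpart, and to show that the only additional error relative to the primal is the \emph{inner duality gap at the optimal pair}, which the convexity of $\mQ$ annihilates. Write $V(Q,\pi,\bu)=(1-\gamma)\sum_{i=1}^N\EE_{S_0^i\sim\nu}Q(S_0^i,\pi(S_0^i;u^i);u^i)$, so that $N\,J(\pi,\bu)=V(Q_{\pi},\pi,\bu)$, and set
\begin{equation*}
\underline{J}_P(\pi,\bu)=\tfrac{1}{N}\min_{Q\in\Omega(\pi,\bu,\alpha)}V(Q,\pi,\bu),\qquad \underline{J}_D(\pi,\bu)=\tfrac{1}{N}\max_{\lambda\geq0}\min_{Q\in\mQ}L(Q,\pi,\bu,\lambda).
\end{equation*}
The weak duality noted after \eqref{eq: dual penalized policy learning} reads $\underline{J}_D\leq\underline{J}_P$ pointwise. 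I would then decompose the dual regret as $J(\pi_{*},\bu_{*})-J(\wh{\pi}^{\dag},\wh{\bu}^{\dag}) = \mathrm{(I)}+\mathrm{(II)}+\mathrm{(III)}$, with $\mathrm{(I)}=J(\pi_{*},\bu_{*})-\underline{J}_D(\pi_{*},\bu_{*})$, $\mathrm{(II)}=\underline{J}_D(\pi_{*},\bu_{*})-\underline{J}_D(\wh{\pi}^{\dag},\wh{\bu}^{\dag})$, and $\mathrm{(III)}=\underline{J}_D(\wh{\pi}^{\dag},\wh{\bu}^{\dag})-J(\wh{\pi}^{\dag},\wh{\bu}^{\dag})$; this telescopes to the regret.

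Terms $\mathrm{(II)}$ and $\mathrm{(III)}$ are handled as in the primal analysis using only weak duality. For $\mathrm{(II)}$, since $(\wh{\pi}^{\dag},\wh{\bu}^{\dag},\wh{\bv}^{\dag})$ maximizes \eqref{eq: dual penalized policy learning} and the multi-centroid penalty vanishes at the oracle configuration, $\pen_{\mu}(\bu_{*},\bv_{*})=0$ with $v_{*}^{k}=u_{*}^{\mG^{k}}$, using $(\pi_{*},\bu_{*},\bv_{*})$ as a competitor gives $\underline{J}_D(\pi_{*},\bu_{*})-\underline{J}_D(\wh{\pi}^{\dag},\wh{\bu}^{\dag})\leq\tfrac{1}{N}\big(\pen_{\mu}(\bu_{*},\bv_{*})-\pen_{\mu}(\wh{\bu}^{\dag},\wh{\bv}^{\dag})\big)\leq0$. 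For $\mathrm{(III)}$, for every $\lambda\geq0$ the choice $Q=Q_{\wh{\pi}^{\dag}}$ yields $\min_{Q}L(Q,\wh{\pi}^{\dag},\wh{\bu}^{\dag},\lambda)\leq V(Q_{\wh{\pi}^{\dag}},\wh{\pi}^{\dag},\wh{\bu}^{\dag})+\lambda\big(\max_{f\in\mF}\wh{\Phi}(Q_{\wh{\pi}^{\dag}},f,\wh{\pi}^{\dag},\wh{\bu}^{\dag})-\alpha\big)$, so feasibility of the true Q-function at the dual estimator forces the bracket nonpositive for all $\lambda$ and gives $\mathrm{(III)}\leq0$. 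This feasibility follows from Theorem~\ref{thm: feasibility} once $(\wh{\pi}^{\dag},\wh{\bu}^{\dag})$ is shown to inherit the oracle consistency of Theorem~\ref{thm: oracle feasibility}; I would obtain the latter by noting (via the strong duality established below, which by the Lipschitz continuity of $Q$ in $u$, Assumption~\ref{ass: spaces}(c), persists on a neighborhood of $\bu_{*}$) that the dual and primal penalized objectives agree there, so the penalized $M$-estimation limit and its well-separated maximizer at $\bu_{*}$—guaranteed by $\norm{u_{*}^{\mG^{k}}-u_{*}^{\mG^{k'}}}\gtrsim\alpha$—are unchanged, yielding $\norm{\wh{\bu}^{\dag}-\bu_{*}}_{2,\infty}=\bigO(T^{-1})$.

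The remaining term $\mathrm{(I)}$ I would split through the primal pessimistic value,
\begin{equation*}
\mathrm{(I)}=\underbrace{\big[J(\pi_{*},\bu_{*})-\underline{J}_P(\pi_{*},\bu_{*})\big]}_{\mathrm{(Ia)}}+\underbrace{\big[\underline{J}_P(\pi_{*},\bu_{*})-\underline{J}_D(\pi_{*},\bu_{*})\big]}_{\mathrm{(Ib)}}.
\end{equation*}
Term $\mathrm{(Ia)}$ is exactly the quantity bounded in Theorem~\ref{thm: oracle regret}: the primal pessimistic minimizer lies in $\Omega(\pi_{*},\bu_{*},\alpha)$, so its empirical Bellman residual is at most $\alpha$, and Lemma~\ref{lem: OPE error} together with the OPE upper bound \eqref{eq: total OPE bound} give $\mathrm{(Ia)}=\bigOp(\alpha)$. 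Term $\mathrm{(Ib)}$ is the inner duality gap at the optimal pair, and closing it is where convexity enters.

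For $\mathrm{(Ib)}$ I would invoke strong duality of the inner convex program at $(\pi_{*},\bu_{*})$. For fixed $(\pi,\bu)$ the objective $V(\cdot,\pi,\bu)$ is affine in $Q$, while the constraint $g(Q)\triangleq\max_{f\in\mF}\wh{\Phi}(Q,f,\pi,\bu)-\alpha$ is a pointwise supremum of maps affine in $Q$, hence convex; with $\mQ$ convex this is a convex program and Lagrangian duality applies. To verify Slater's condition I would take the realizable true Q-function $Q_{\pi_{*}}\in\mQ$ (Assumption~\ref{ass: realizability}): re-running the concentration argument of Theorem~\ref{thm: feasibility} with a slightly enlarged constant in $\alpha$ yields $\max_{f\in\mF}\wh{\Phi}(Q_{\pi_{*}},f,\pi_{*},\bu_{*})\leq\alpha/2<\alpha$ with probability at least $1-\delta$, so $Q_{\pi_{*}}$ is strictly feasible and the Slater constraint qualification gives $\underline{J}_D(\pi_{*},\bu_{*})=\underline{J}_P(\pi_{*},\bu_{*})$, i.e.\ $\mathrm{(Ib)}=0$. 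Combining the three bounds gives $J(\pi_{*},\bu_{*})-J(\wh{\pi}^{\dag},\wh{\bu}^{\dag})\leq\mathrm{(Ia)}=\bigOp(\alpha)$. The hard part is precisely $\mathrm{(Ib)}$: without convexity the max--min in \eqref{eq: dual penalized policy learning} can fall strictly below the min--max, and the resulting positive duality gap would inflate the regret past $\bigOp(\alpha)$; convexity together with \emph{strict} (not merely weak) feasibility is what forces it to vanish, which is why the slack built into the choice of $\alpha$ in \eqref{eq: oracle alpha} is essential, and why this strict-feasibility region must be shown to contain the dual estimator so that $\mathrm{(III)}$ also closes.
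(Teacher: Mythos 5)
Your proposal is correct, and it rests on the same three pillars as the paper's own proof: weak duality on the estimator side, feasibility of the true Q-function at the learned pair (Theorem \ref{thm: feasibility} plus the consistency transfer inherited from Theorem \ref{thm: oracle feasibility}), and strong duality via Slater's condition enabled by the convexity of $\mQ$. The genuine difference is organizational and in where Slater is applied: you telescope the regret into an explicit duality-gap term $\mathrm{(Ib)}$ and close it for the \emph{empirical} inner program at $(\pi_{*},\bu_{*})$, exhibiting $Q_{\pi_{*}}$ as strictly feasible by inflating the constant in $\alpha$ so that $\max_{f\in\mF}\wh{\Phi}(Q_{\pi_{*}},f,\pi_{*},\bu_{*})\leq\alpha/2$ with high probability; the paper instead runs a single inequality chain, first converting $\wh{\Phi}$ to the population functional $\Phi$ with constraint level $2\alpha$ (which forces it to also prove the optimal multiplier is bounded by some $C_{\lambda}$) and then applies Slater to the population program, where $Q_{\pi_{*}}$ is interior because its population Bellman residual is exactly zero. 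Your route buys a cleaner argument --- no multiplier bound and no empirical-to-population conversion --- at the price of needing strict feasibility of the empirical constraint, which your constant-inflation step delivers. One point worth making explicit in a write-up: the bound on $\mathrm{(III)}$ needs \emph{exact} feasibility $\max_{f\in\mF}\wh{\Phi}(Q_{\wh{\pi}^{\dag}},f,\wh{\pi}^{\dag},\wh{\bu}^{\dag})\leq\alpha$, since an approximate bound $\leq\alpha+\epsilon$ with $\epsilon>0$ makes the upper bound $V+\lambda\epsilon$ useless as $\lambda\to\infty$; this is exactly why the consistency $\norm{\wh{\bu}^{\dag}-\bu_{*}}_{2,\infty}=\bigO(T^{-1})$ (which is of smaller order than $\alpha$ under $N=\smallO(T)$) together with Assumption \ref{ass: spaces}(c) must be secured before $\mathrm{(III)}$ --- the same deferral the paper makes to the argument of Theorem \ref{thm: oracle feasibility}.
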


In Theorem \ref{thm: dual regret}, we show that a similar regret holds as in
Theorem \ref{thm: oracle regret} with only the additional condition that $\mQ$
is convex. The convexity condition ensures that there is no duality gap between
the primal problem and the dual problem, so that their optimizers have the same
regret rate. However, the convexity of $\mQ$ can be restrictive for some
function classes. It would be interesting to investigate the duality gap when $\mQ$
is non-convex but the duality gap is asymptotically negligible.
Consequently, we can have a theoretical guarantee to use deep a
neural network to represent any $Q\in\mQ$ \citep{zhang2019deep}.

\section{Computation}
\label{sec: computation}

In this section, we present our computational algorithm to learn
$(\wh{\pi}^{\dag}, \wh{\bu}^{\dag})$ by solving the dual problem \eqref{eq: dual
  penalized policy learning} for multicentroid cases. By choosing $\mQ,\mF$ and $\Pi$ as some
pre-specified functional classes, e.g., neural network (NN) architectures, we can
apply the stochastic gradient descent to solve the dual problem \eqref{eq: dual
  penalized policy learning}, except for $\bu$ and $\bv$.

For the outer maximization problem of \eqref{eq: dual penalized policy learning}, maximizing $\bu$ and $\bv$ with the penalty term $\pen(\bu,\bv)$
is challenging, since the penalty is non-convex with all individualized latent
parameters $\bu$ and subgroup parameters $\bv$ without subgroup information and
their coupling. To achieve a faster computing speed, we propose to apply the
ADMM method \citep{boyd2011distributed} to update $\bu$ and $\bv$ alternatively.
Specifically, we consider solving an equivalent problem given $\pi,\lambda,Q$ and $f$,
\begin{equation}
  \label{eq: solve uv}
  \begin{aligned}
    & \max_{\bu,\bv} L(Q,\pi,\bu, \lambda) - \pen_{\mu}(\bw,\bv),\\
    & \text{s.t. } \bw = \bu.
  \end{aligned}
\end{equation}

Applying the augmented Lagrangian method to \eqref{eq: solve uv}, we can solve
\begin{equation}
\label{eq: ALM}
    \max_{\bu,\bv} L(Q,\pi,\bu, \lambda) - \pen_{\mu}(\bw,\bv) - \boldeta^{\top}(\bu - \bw) - \frac{\rho}{2} \norm{\bu-\bw}_F^2,
\end{equation}
with nonnegative constant $\rho$ and Lagrangian multiplier vector $\boldeta$. Consequently,
we perform the ADMM method to update $\bu,\bv,\bw$ and $\boldeta$ iteratively:
\begin{equation}
\label{eq: ADMM}
\begin{aligned}
&\bu \leftarrow \argmax_{\bu} L(Q,\pi,\bu, \lambda) -  \frac{\rho}{2} \norm{\bu-\bw + \rho^{-1}\boldeta}_F^2,\\
  &(\bv,\bw) \leftarrow \argmin_{\bv,\bw} \pen_{\mu}(\bw,\bv) + \frac{\rho}{2}\norm{\bu-\bw + \rho^{-1}\boldeta}_F^2,\\
  & \boldeta \leftarrow \boldeta + \rho (\bu -\bv),
\end{aligned}
\end{equation}
where the first block $\bu$ can be solved by utilizing gradient descent;
$(\bv,\bw)$ is treated as a single block here in the second step and can be solved by first using K-means clustering on $\rho^{-1}\boldeta$, setting $\bv$ as the means of clusters, then setting $\bw$ as the midpoints between $\rho^{-1}\boldeta$ and $\bv$.
While, the K-means clustering is not guaranteed to converge to the \textit{global} optimum due to the NP-hard nature of the clustering problem, it converges to a local optimum in finite number of iterations.
We provide the proposed policy optimization algorithm in Algorithm \ref{alg:policyopt}.

With the sub-sampled batch data, we can perform the stochastic gradient
  descent to update $f,Q$ and $\pi$, respectively, by
\begin{align}
     f &\leftarrow f + \delta_f \nabla_f \wh{\Phi}(Q,f,\pi,\bu)\label{eq: f update},\\
     Q &\leftarrow Q - \delta_Q \nabla_Q \left\{ (1-\gamma) \sum_{i=1}^N\EE_{S_0^i\sim\nu} Q \left\{ S_0^i,\pi(S_0^i;u^i);u^i \right\} + \lambda \left\{ \wh{\Phi}(Q,f,\pi,\bu) - \alpha \right\} \right\}\label{eq: Q update},\\
     \pi &\leftarrow \pi + \delta_{\pi} \nabla_{\pi} \left\{ (1-\gamma) \sum_{i=1}^N\EE_{S_0^i\sim\nu} Q(S_0^i,\pi(S_0^i;u^i);u^i) + \lambda (\wh{\Phi}(Q,f,\pi,\bu) - \alpha) \right\}.\label{eq: pi update}
\end{align}

\noindent
\begin{minipage}{\linewidth}
	\begin{algorithm}[H]
    \label{alg:policyopt}
		\SetAlgoLined
		\textbf{Input:} Batch episodes
    $\mD_{NT} = \{(S_t^i,A_t^i,R_t^i,S_{t+1}^i): t=0,...,T-1, i\in[N]\}$; Tuning
    parameters $\alpha,\mu,\rho>0$; Pre-specified NN structures $\mQ,\mF,\Pi$;
    Number of subgroups $K$. Learning rates $\delta_f$, $\delta_Q$, $\delta_{\pi}$, $\delta_{\lambda}$\\

		\textbf{Output:} Individualized optimal policies $(\wh{\pi}^{\dag},\wh{\bu}^{\dag})$. \\

    \textbf{Initialize:} $Q, f, \pi, \bu, \bv$.\\
		\textbf{While} $(1-\gamma) \sum_{i=1}^N\EE_{S_0^i\sim\nu} Q(S_0^i,\pi(S_0^i;u^i);u^i)$ is \textbf{not} converged \textbf{do}:\\
		\Indp
    \textbf{Step 1.} Update $f$.\\
    \textbf{While} \textbf{not} converged:\\
    \Indp Sample a size $n_0$ mini-batch $\left\{ (S,A,R,S_+)_i: i\in[n_0] \right\}\sim \mD_{NT}$:\\
     \Indp Update $f$ by \eqref{eq: f update}.\\
     \Indm
     \Indm
    \textbf{Step 2.} Update $Q$.\\
    \textbf{While} \textbf{not} converged:\\
    \Indp Sample a size $n_0$ mini-batch $\left\{ (S,A,R,S_+)_i: i\in[n_0] \right\}\sim \mD_{NT}$:\\
      \Indp Update $Q$ by \eqref{eq: Q update}.\\
		\Indm
		\Indm
    \textbf{Step 3.} Update $\lambda,\pi,\bu,\bv$.\\
    \textbf{While} \textbf{not} converged:\\
    \Indp Sample a size $n_0$ mini-batch $\left\{ (S,A,R,S_+)_i: i\in[n_0] \right\}\sim \mD_{NT}$:\\
      \Indp Update $\pi$ by \eqref{eq: pi update}.\\
		\Indm
		\Indm
     Update $\bu,\bv$ by \eqref{eq: ADMM}.\\
     Update $\lambda \leftarrow \lambda - \delta_{\lambda}(\wh{\Phi}(Q,f,\pi,\bu) - \alpha)$.

		\Indm
		\Indm

		\caption{Penalized Pessimistic Personalized Policy Learning (P4L)}%
	\end{algorithm}
\end{minipage}

In practice, we suggest selecting the number of
subgroups by clustering individuals according to their stationary transitions
$\PP^i(s',r\given s,a)$, $i\in[N]$, which are estimated by applying kernel density estimators
on offline trajectories
$\left\{ S_t^i, A_t^i,R_t^i,S_{t+1}^i \right\}_{t=0}^{T-1}$ for each individual $i\in[N]$.

\section{Simulation}
\label{sec: simulation}

In this section, we compare the numerical performance of the proposed method P4L
with several
representative existing works using simulated data from heterogeneous populations.
We first consider a simple environment with a balanced design of behavior policy
on binary action space
in Section \ref{sec: sim1} and demonstrate the numerical performances for
different combinations of $N$ and $T$. Then in Section \ref{sec: sim2}, we
conduct experiments in an OpenAI Gym environment, CartPole with unbalanced
design of behavior policy.

The methods used for comparisons are Fitted-Q-Iteration \citep[FQI,][]{munos2003error,le2019batch},
V-learning \citep[VL,][]{luckett2019estimating}, and Auto-Clustered Policy Iteration \citep[ACPI,][]{chen2022reinforcement}.
The first two methods are based on mean-value models while the last one is
a clustering-based method targeting optimal policies for heterogeneous populations
with different rewards.
For a fair comparison, all methods share the same feature basis for state
$\left\{ \psi_j(s) \right\}_{j=1}^J$ and the same soft-max policy class $\Pi$.

\subsection{A simple environment}
\label{sec: sim1}
Here we briefly describe our simulated data-generating mechanism.
For each individual $i\in[N]$,
the episode starts with a state $S_{0}^i \sim \mN (0, \bI_2)$. Then at each time step
$0\leq t\leq T-1$, the action $A_t^i$ follows a balanced behavior policy
$A_t^i \sim \text{Bernoulli}(1/2)$ on a binary action space $\mA = \{0,1\}$. After
taking action $A_t^i$, the state transits to $S_{t+1}^i$ via
\begin{align*}
        [S_{t+1}^i]_1 &= 0.8(2A_t^i-1) [S_t^i]_1 + c_1^i [S_t^i]_2 + [\epsilon_t^i]_1,\\
        [S_{t+1}^i]_2 &= c_2^i [S_t^i]_1 + 0.8(1-2A_t^i) [S_t^i]_2+ [\epsilon_t^i]_2,
\end{align*}
where $\epsilon_t^i\sim \mN(0, \text{diag}\{0.25, 0.25\})$. At the same time, the
individual receives an immediate reward
$R_t^i = 0.9/\left[ 1+ \exp \left\{ (2A_t^i-1)([S_t^i]_1 - 2[S_t^i]_2) \right\}\right] + \text{Unif}[-0.1,0.1]$.
In addition, we consider a discounted cumulative reward with a discount factor $\gamma=0.8$.
To introduce population heterogeneity, we consider three groups of individuals
with different settings of state transition parameters listed in Table \ref{tab:simu_trans}.

\begin{table}[h]
\caption{State transition parameters for individual $i$ from different groups.}
\label{tab:simu_trans}
  \centering
  \begin{tabular}{c|ccc}
     \toprule
         Parameters & Subgroup (a) & Subgroup (b) & Subgroup (c) \\
     \midrule
         $c_1^i$      & 0         &  0.6      &  -0.7     \\
         $c_2^i$      & -0.6      &  0.4      &  0.5      \\
     \bottomrule
  \end{tabular}
\end{table}

For the P4L, we set rank $r=2,3,4,5$, where $r=3$ is the oracle
number of clusters.
For all methods, the feature mapping basis of the state
$\left\{ \psi_j(s) \right\}_{j=1}^J, J=16$ are Gaussian radial basis functions with
centers selected by K-means and bandwidth selected by the median of pairwise
Euclidean distances on $\left\{ S_t^i:0\leq t\leq T, i\in[N] \right\}$.

\subsubsection{Simulation Results}

We show the results for different combinations of $(n,T)$
for $T\in\{50,100,200\}$ and $|\mG_k|\in\{10, 20, 50\}$ for each group $k\in[3]$, i.e.,
$N\in \left\{ 30, 60 \right\}$ for total sample sizes of individuals.
For each combination of $(n,T)$, we repeat each method 50 times and evaluate the estimated policies by
1000 Monte Carlo trajectories for each group.

\begin{figure}[!htbp]
  \centering
  \includegraphics[width=\textwidth]{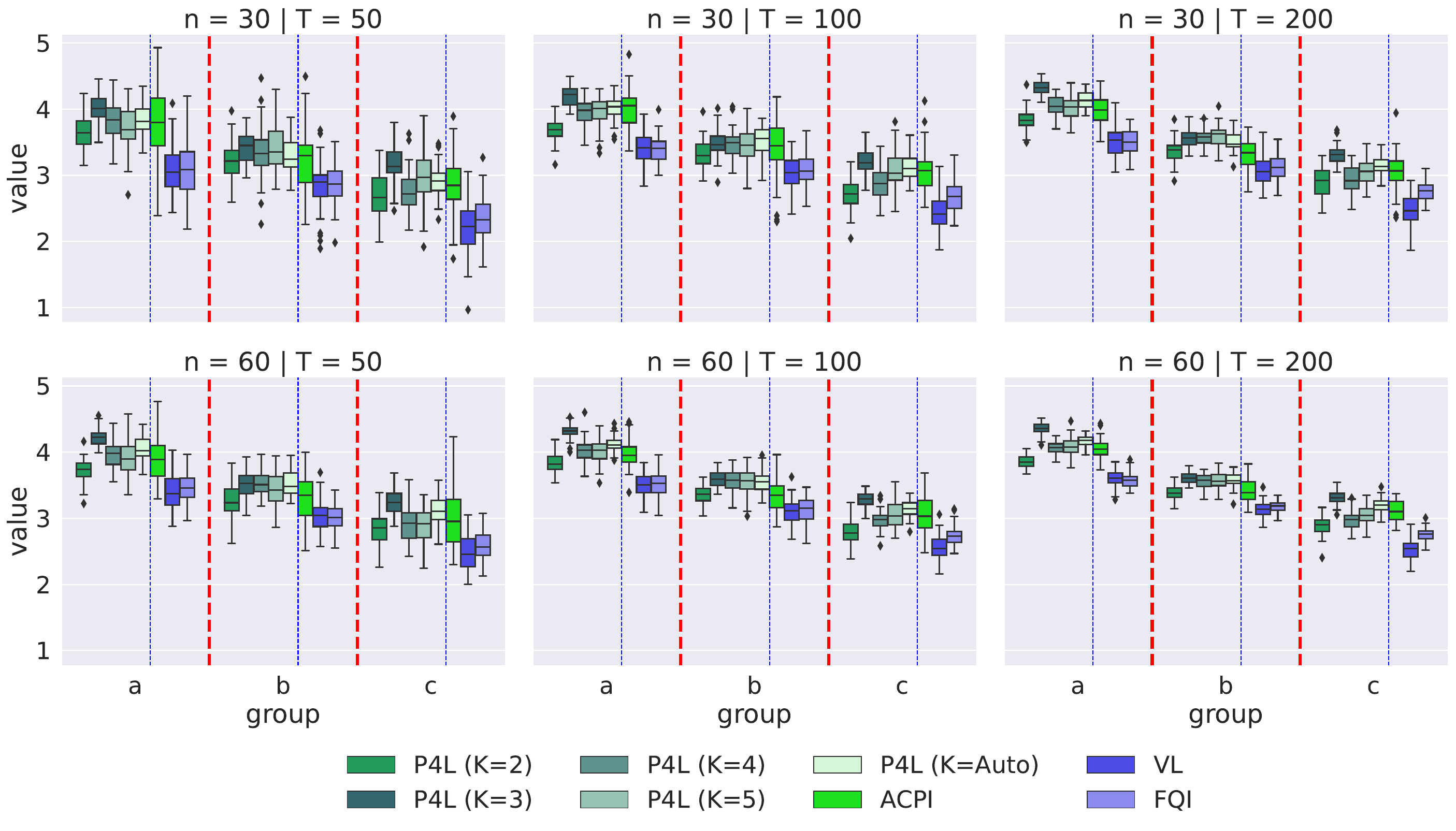}
  \caption{Boxplots for values of estimated policies for $n=30,60$ and
    $T=50,100$ and 200. Red dashed lines separate
    three groups (a), (b) and (c). In each group, the blue dotted line separates
    the P4L method of different $K$ from the benchmark methods.}
  \label{fig:simu_values}
\end{figure}

Figure \ref{fig:simu_values} shows that our method and
ACPI outperform other methods for homogeneous data in
all settings of $(N,T)$.  The VL and FQI aim to find a single optimal policy for heterogeneous
populations based on biased value function estimators, which are not suitable for
 the heterogeneous composition of the environments.
When the number of the
subgroups $K$ is correctly specified, the value of our method is typically
higher than that of ACPI. When the number of
pre-specified subgroups
$K$ is larger than the oracle number, our method is still comparable to
ACPI. When the $K$ is selected by the heuristic method in
Section \ref{sec: computation}, the performance lies between that of the oracle
$K$ and that of larger $K$'s. In addition, the values of
ACPI typically have larger variance than ours since the
information across subgroups is not used. However, when the number of
pre-specified subgroups $K$ is smaller than the truth, our method performs worse
due to the resulting bias under specific $K$.

\subsection{Synthetic Data by OpenAI Gym Environments}
\label{sec: sim2}
We perform experiments on CartPole from the OpenAI Gym, a classical non-linear control
environment. The agent heterogeneity across the various environments is
introduced by modifying the covariates that characterize the transition dynamics
of an agent. This is in line with the existing literature \citep{lee2020context} studying algorithmic robustness of agent heterogeneity.

\begin{figure}[h]
  \centering
  \includegraphics[width=0.3\textwidth]{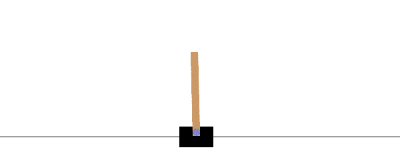}
  \caption{OpenAI CartPole environment.}
  \label{fig:gym_envs}
\end{figure}

In the CartPole environment, a pole is attached to a cart moving on a frictionless
track. The goal is to prevent the pole from falling over by moving the cart to
the left or to the right, and to do so for as long as possible, with a maximum of 300
steps allowed.

The state of the environment is defined by four observations: the cart's
position ($x_t$), and its velocity ($\Dot{x}_t$), and the pole's angle ($\theta_t$) and its
angular velocity ($\Dot{\theta}_t$). Two actions are available to the agent: pushing
the cart to the right (action 1) or to the left (action -1). The behavior policy
is determined by the direction of the pole angle, specifically
$\text{sign}(\theta_t)$. The reward for each step taken without termination is 1. The episode terminates
when the pole angle exceeds 12 degrees or when the cart position exceeds 2.4.
The environment heterogeneity is introduced by varying the length of the pole
and push force within the ranges of $[0.15,0.85]$ and $[2.0,10.0]$,
respectively.

For each setting in each environment, we generate $100$ trajectories of offline
data and evaluate with the starting states of these trajectories. The offline data
is combined with three settings (see Table \ref{tab:performance_openai}) of 300
trajectories in total. For each setting and method, we run simulations 50 times.

\subsubsection{Simulation Results}

Table \ref{tab:performance_openai} shows the performance of our method with
$K=2,3,4,5$ and \emph{Auto} (the heuristic selection method described in Section \ref{sec: computation}) and benchmarks in terms of the playing steps. In settings (A) and
(B), we fix the length or force, and the mixed environments are given by changing
another environment variable. In Settings (C), we mix the changes of both. Overall, in Settings (A)
-- (C), our method outperforms the benchmarks when $K = 3,4,5$ and \emph{Auto} in most settings
of length/force, except $(5/0.5)$ in Setting (B). When $K=2$, the performance of our method is
comparable to that of FQI and VL since they cannot distinguish the
differences among the three heterogeneous environments, since the true number of
environments is three. Compared to our method, ACPI usually suffers from a larger
standard error than ours ($K=3$ and \emph{Auto}), since the policy learning algorithm in ACPI
is performed in each learned cluster of the population. Therefore, the sample
efficiency is typically worse than ours. However, when the number of subgroup $K$
is not correctly prespecified, the standard errors of our method increase,
partially due to the growing number of coefficients with $K$. When $K$ is
heuristically selected, the performance is typically closer to that of the oracle
$K$ than the other choices of $K$,
which confirms the numerical efficacy of our heuristic approach.

\section{Real Data Applications}
\label{sec: data}
\begin{table}[hbtp]
\centering
\caption{Values of learned policies from five RL methods on CartPole with
  different settings. Values in braces are standard errors. In Setting (A), we fix the force at 0.85 and let length
  vary from 2 to 10. In Setting (B), we fix the length at 5 and let force vary
  from 2 to 10. In Setting (C), we mix three different combinations of
  (length/force).}
\label{tab:performance_openai}
{\small
\begin{tabular}{c|ccc}
  \toprule
& \multicolumn{3}{c}{CartPole (length/force)} \\
\midrule
\midrule
Setting (A)  & (2/0.85) & (5/0.85)  & (10/0.85) \\
\midrule
P4L $(K=2)$ & 58.0±12.7           & 72.8±16.8                  & 121.1±31.7 \\
P4L $(K=3)$ & 92.4±17.6           & \textbf{182.8}±4.3         & \textbf{226.0}±12.3\\
P4L $(K=4)$ & 74.3±16.8          & 179.6±8.3                  & 165.5±29.5\\
P4L $(K=5)$ & 86.2±9.3           & 163.2±13.5                  & 192.8±23.0 \\
P4L $(K=Auto)$ & \textbf{98.2}±12.1           & 174.2±6.9                  & 213.0±19.0 \\
\midrule
ACPI       & 89.4±26.5          & 168.2±24.3                  & 200.1±15.0 \\
FQI          & 63.4±17.7           & 65.9±13.7                   & 100.2±11.1\\
VL           & 32.7±3.8           & 71.5±10.1                 & 95.1±10.7\\
\bottomrule
\toprule
Setting (B)  & (5/0.15) & (5/0.5)  & (5/0.85) \\
\midrule
P4L $(K=2)$ & 161.0±21.0           & 142.8±30.1                  & 98.1±13.2 \\
P4L $(K=3)$ & \textbf{192.4}±8.1  & 204.6±8.7                  & \textbf{164.0}±10.2\\
P4L $(K=4)$ & 170.3±16.8          & 211.6±9.3                  & 157.5±9.5\\
P4L $(K=5)$ & 166.2±39.3           & 197.2±13.5                  & 152.8±15.0 \\
P4L $(K=Auto)$ & 174.2±15.7           & 206.0±9.1                  & 154.8±8.9 \\
\midrule
ACPI & 168.4±26.5                & \textbf{216.4}±19.5                  & 151.1±13.9 \\
FQI    & 121.4±17.7                 & 154.9±24.1                   & 122.0±19.1\\
VL     & 146.7±3.8                 & 181.5±10.5                 & 110.1±14.2\\
\bottomrule
  \toprule
Setting (C)  & (2/0.15) & (10/0.85)  & (5/0.5) \\
\midrule
P4L $(K=2)$ & 184.0±22.7           & 190.8±6.8                  & 158.1±10.7 \\
P4L $(K=3)$ & \textbf{236.4}±7.3  & \textbf{203.8}±4.1         & \textbf{192.0}±2.2\\
P4L $(K=4)$ & 165.3±16.8          & 179.6±8.3                  & 127.5±9.5\\
P4L $(K=5)$ & 166.2±39.3           & 181.2±13.5                  & 182.8±15.0 \\
P4L $(K=Auto)$ & 181.2±9.1           & 199.2±7.7                  & 146.8±7.4 \\
\midrule
ACPI & 165.4±21.5                & 193.2±19.5                  & 132.1±15.0 \\
FQI    & 154.4±17.7                 & 164.9±8.7                   & 132.2±13.6\\
VL     & 133.7±3.8                 & 127.5±20.1                 & 110.1±10.7\\
\bottomrule
\end{tabular}
}
\end{table}
In this section, we use the Multi-parameter Intelligent Monitoring in Intensive
Care (MIMIC-III)
dataset (\url{https://physionet.org/content/mimiciii/1.4/}) to evaluate
the performance of the learned optimal policies. In this dataset,
longitudinal information (e.g., demographics, vitals, labs and standard scores)
 was collected from 17,621 patients who satisfied
the SEPSIS criterion from 5 ICUs at a Boston teaching hospital. See detailed information in \citet{nanayakkara2022unifying}.

The dataset is prepared by the same data pre-processing steps used in
\citet{raghu2017deep}. For the cleaned dataset,
the state space consists of a 41-dimensional real-valued vector, which includes
information on patients' demographics, vitals, standard scores and lab results.
A detailed description is given in Supplementary Material. Specifically, the
most important standard score, which captures the patient's organ function, is
the Sequential Organ Failure Assessment (SOFA) score, which is
commonly used in clinical practice to assess SEPSIS severity.
In the preprocessed dataset for each patient, the state records are aligned on
a 4-hour time grid for simplicity. We only consider patients who had at least 3
time stamps, which results in 16,356 patients.
In addition, due to the high dimensionality of the state space, we perform a
dimension reduction procedure via principle component analysis.
Specifically, we select the top 10 principal components that explain
at least 95\% of the total variation of the state, resulting in an
11-dimensional encoded state that consists of 10 principal components and the
SOFA score.

At each time point $t$ after the enrollment, the patient is treated with
vasopressor and/or intravenous fluid, or no treatment. To ensure each action has considerable
times of repeating in the dataset, we discretize the level of two treatments
into 3 bins, respectively. The combination of the two drugs results in an action
space of size $|\mA|=9$. We use the negative SOFA score at time $t+1$ as the
immediate reward at time $t$, i.e., $R_t=-s_{t+1}^{\text{SOFA}}$, as it
provides a direct and clinically meaningful measure of the patient's SEPSIS
condition. Moreover, such a reward function has been used in recent literature \citep[see,
e.g.,][]{wang2022blessing}. Throughout this section, the discount factor of the reward is chosen to be $\gamma=0.8$.

We apply the proposed method P4L with a number of subgroups selected by the proposed heuristic
method, and a Gaussian RBF basis
with bandwidth selected by the median of pairwise Euclidean distances of states.
For the competing methods, we use the same basis. The same settings are used in Section \ref{sec: simulation}.

Since the outcomes of optimal personalized policies are not available in the dataset, to evaluate the numerical performance
in terms of value, we learn the personalized transitions for all
patients in the preprocessed data set by PerSim
\citep{agarwal2021persim}, a state-of-the-art method to estimate personalized
simulators. Then we use the estimated simulator and reward function to evaluate
the values of learned policies by the Monte Carlo method.
In the processed dataset, we randomly subsample 1000 ICU admissions, apply the
P4L method and benchmarks, and evaluate the estimated policies, as well as
the clinicians' decisions, by the learned PerSim
simulator. We repeat this procedure 5 times with different random seeds
and demonstrate the aggregated estimated values.

Based on Figure \ref{fig:realdata} and Table \ref{tab:realdata}, it is clear that
our method outperforms the others in terms of value (accumulative discounted negative SOFA
scores). This indicates that our approach leads to better decisions in the
treatment of SEPSIS. We note that the values of clinicians' decisions are second
to that of our method, and with the lowest variance, since there is no
change of measure due to the mismatch of behavior policy and target policy. The
values obtained by VL, and FQI are relatively low, suggesting that these methods
do not adequately account for the heterogeneity in the population of patients.
ACPI performs slightly worse than
clinician's decisions and has a larger variance, partially due to the reduction in
sample efficiency from policy learning after clustering. In contrast, our method
can effectively use the samples and learn heterogeneous treatment policies for
SEPSIS patients in the MIMIC-3 study.

\begin{table}[h]
  \caption{Values of learning optimal policies with standard errors in parentheses.}
  \label{tab:realdata}
  \centering
  \begin{tabular}{c|ccccc}
    \toprule
    Method & Clinician & P4L & ACPI & VL & FQI\\
    \midrule
    Value  & -6.13 (1.82) & -5.40 (2.10) & -6.54 (2.31) & -7.32 (2.55) & -7.09 (2.65)\\
    \bottomrule
  \end{tabular}
\end{table}

\begin{figure}[h]
  \centering
	\includegraphics[width=0.7\textwidth]{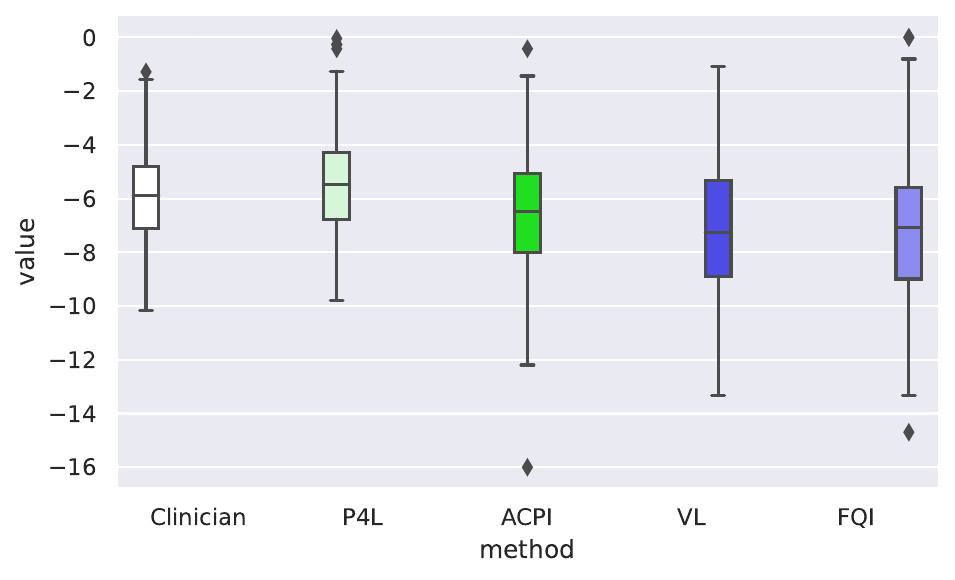}
  \caption{Values of learned optimal policies estimated by PerSim.}
  \label{fig:realdata}
\end{figure}

\section{Discussion}
\label{sec: discussion}

In this paper we introduce a novel RL framework for offline policy
optimization using batch data from heterogeneous time-stationary MDPs.
Our approach incorporates individualized latent variables into a shared heterogeneous
model, facilitating efficient estimation of individual Q-functions. Following
this framework, we propose a pessimistic policy learning algorithm that ensures a fast
average regret rate, which only requires a weak partial coverage assumption on
the behavior policies.

Our theoretical properties on the regret of the proposed method rely on the weak
partial coverage assumption that the state-action pairs explored by the optimal policies have
been explored by at least one individual's behavior policy. This coverage assumption can
potentially be relaxed by applying Lebesgue decomposition and distribution
robust optimization \citep{chen2023singularity}, which requires
an additional Bellman completeness assumption. Another potential theoretical
improvement is to relax the convexity assumption on the $\mQ$ space for the dual
problem such that the duality gap is asymptotically negligible.
In addition, the regret bound in our approach is not guaranteed to be tight, as our method focuses on the learning rate rather than the lower bound. Compared to the existing literature on offline RL for homogeneous data under similar data coverage assumptions, our theoretical regret rate for homogeneous data matches that of \citet{xie2021bellman} and
\citet{zhan2022offline}.
Further investigation into the theoretical challenges of establishing regret lower bounds for offline RL with heterogeneous data presents an interesting direction for future research.

The proposed policy optimization method can be applied to dynamic decision-making problems where the state-action transitions are stationary but
vary among individuals, and the agents continue making decisions in
their environments with the updated policies. Examples include mobile health for
patients with chronic diseases and robotics working in
different slowly-changing environments. For a new individual not in the
pre-collected data, one needs to obtain more informative state-action transitions and
classify the individual to a subgroup, then apply the policy of that group.
This can be explored in future work.

In this work, we have focused on resolving populational heterogeneity for time-stationary MDPs.
However, it is important to acknowledge that in practice, the assumptions of
time-stationarity may not hold for episodic MDPs.
To address this issue, \citet{hu2022doubly} try to learn on optimal
policies for the last period of the stationary part, but this cannot guarantee long-term
optimality. Future research could involve exploring the policy learning for time-nonstationary
heterogeneous episodic MDPs.
Moreover, it is also worthwhile to explore individual policy optimization when
confronted with unmeasured confoundings and alternative types of rewards, such
as time to event. These aspects remain largely unexplored in the existing
literature for heterogeneous data and present interesting future research directions.

\bibliographystyle{abbrvnat}
\bibliography{reference}

\newpage
{\centering
\Large \bf Supplementary Material to ``Reinforcement Learning for Individual Optimal Policy
from Heterogeneous Data''\\
}
\bigskip

\renewcommand\thesection{S.\arabic{section}}
\setcounter{section}{0}

\section{Proofs of Theoretical Results}

\subsection{Proof of Lemma \ref{lem: OPE error}}
\begin{proof}
  With a little overload of notation, let $d_{\pi^i}^i$ also denote the density of
  the discounted visitation probability over $\mS\times\mA$ induce by $\pi^i$ for
  individual $i$. By using the backward Bellman equation, for any
  $(s,a)\in\mS\times\mA$, we have that
\begin{equation*}
d_{\pi^i}^i(s,a) = (1-\gamma) \nu(s)\pi(a \given s) + \gamma \sum_{a_+\in\mA}\int_{\mS}d_{\pi^i}^i(s_+,a_+) q(s\given s_+,a_+) \pi(a\given s) ds_{+}
\end{equation*}
Taking expectation of $Q_{\pi^i}(s,a) - \wt{Q}^i(s,a)$ with respect to
$d_{\pi^i}^i$ over $\mS\times\mA$ gives that
\begin{align*}
  &\EE_{(s,a)\sim d_{\pi^i}^i} \left\{ Q_{\pi^i}(s,a) - \wt{Q}^i(s,a) \right\} \\
  &= J^i(\pi^i) - \wt{J}^i(\pi^i) + \gamma \EE_{(s,a)\sim d_{\pi^i}^i} \left\{ Q_{\pi^i}(s_+,\pi^i(s_+)) - \wt{Q}^i(s_+,\pi^i(s_+)) \right\}.
\end{align*}
By the Bellman equation of $Q_{\pi^i}$ for individual $i$, we can simplify above
equation to
\begin{equation*}
  J^i(\pi^i) - \wt{J}^i (\pi^i) = \EE_{(S^i,A^i)\sim d_{\pi^i}^i} [R^i + \gamma \wt{Q}^i(S_+^i,\pi^i(S_+^i)) - \wt{Q}^i(S^i,A^i)].
\end{equation*}
\end{proof}

\subsection{Proof of Theorem \ref{thm: feasibility}}
\begin{proof}
For oracle $\bu_{\circ}$ with subgroup information, we define the population version $\Phi$ of $\wh{\Phi}$ as
\begin{equation}
\label{eq: population Phi}
\Phi(Q,f,\pi,\bu_{\circ}) = \sum_{k=1}^K p_k \Psi_k(Q,f,\pi,u^{\mG^k}),
\end{equation}
where
\begin{equation}
\label{eq: Psi}
\Psi_k(Q,f,\pi,u^{\mG^k}) = \bar{\EE} f(S,A;u^{\mG^k}) \left\{ R + \gamma Q(S_+,\pi(S_+;u^{\mG^k});u^{\mG^k}) - Q(S,A;u^{\mG^k}) \right\}.
\end{equation}
Similarly, we define the empirical version of $\Psi_k$ as
\begin{equation}
\label{eq: Psihat}
\begin{aligned}
  & \wh{\Psi}_k(Q,f,\pi,u^{\mG^k})\\
  \ &= |\mG^k|^{-1}\sum_{i\in\mG^k} T^{-1}\sum_{t=0}^{T-1} f(S_t^i,A_t^i;u^{\mG^k}) \left\{ R_t^i + \gamma Q(S_{t+1}^i,\pi(S_{t+1}^i;u^{\mG^k});u^{\mG^k}) - Q(S_t^i,A_t^i;u^{\mG^k}) \right\}.
\end{aligned}
\end{equation}

By Assumption \ref{ass: realizability}, to show that
  $Q_{\pi_{\circ}} \in \Omega (\pi_{\circ},\bu_{\circ},\alpha)$ with high probability, it is sufficient to
  prove that with probability at least $1-\delta$,
\begin{equation*}
\max_{f\in\mF}\wh{\Phi}(Q_{\pi_{\circ}},f,\pi_{\circ},\bu_{\circ}) \leq \alpha,
\end{equation*}
according to the definition of $\Omega (\pi_{\circ},\bu_{\circ},\alpha)$.

Note that
\begin{equation}
\label{eq: Th1prf1}
\begin{aligned}
  & \max_{f\in\mF}\wh{\Phi}(Q_{\pi_{\circ}},f,\pi_{\circ},\bu_{\circ})\\
  \ &= \max_{f\in\mF}\wh{\Phi}(Q_{\pi_{\circ}},f,\pi_{\circ},\bu_{\circ}) - \max_{f\in\mF}\Phi(Q_{\pi_{\circ}},f,\pi_{\circ},\bu_{\circ}) + \max_{f\in\mF}\Phi(Q_{\pi_{\circ}},f,\pi_{\circ},\bu_{\circ})\\
  \ &= \max_{f\in\mF}\wh{\Phi}(Q_{\pi_{\circ}},f,\pi_{\circ},\bu_{\circ}) - \max_{f\in\mF}\Phi(Q_{\pi_{\circ}},f,\pi_{\circ},\bu_{\circ}) \\
  \ &\leq \sum_{k=1}^K p_k \max_{Q\in\mQ}\max_{f\in\mF} \left| \wh{\Psi}_k(Q,f,\pi_{\circ},u_{\circ}^{\mG^k}) - \Psi_k(Q,f,\pi_{\circ},u_{\circ}^{\mG^k}) \right|,
\end{aligned}
\end{equation}
where the second equality is due to the fact that
$\max_{f\in\mF}\Phi(Q_{\pi_{\circ}},f,\pi_{\circ},\bu_{\circ})=0$ and the last inequality is because
$Q_{\pi_{\circ}}\in\mQ$.

For each $k\in[K]$, by Assumption \ref{ass: mixing} and applying Lemma \ref{lem:hoeffding-mixing}, we have that for
any $\pi\in\Pi$, $f\in\mF$ and $Q\in\mQ$, with probability at least $1-\delta p_k$,
\begin{equation*}
\left| \wh{\Psi}_k(Q,f,\pi_{\circ},u_{\circ}^{\mG^k}) - \Psi_k(Q,f,\pi_{\circ},u_{\circ}^{\mG^k}) \right| \lesssim \sqrt{\frac{\max\{\fC_{\mQ},\fC_{\mF},\fC_{\Pi}\}}{Np_kT\zeta} \log \left( \frac{1}{\delta p_k} \right) \log(Np_kT)},
\end{equation*}
where we applied Assumptions \ref{ass: spaces}(a) and \ref{ass: spaces}(d).

Therefore, by \eqref{eq: Th1prf1}, we have that with probability at least $1-\delta$,
\begin{align*}
  & \max_{f\in\mF}\wh{\Phi}(Q_{\pi_{\circ}},f,\pi_{\circ},\bu_{\circ})\\
  \ & \lesssim \sqrt{\max\{\fC_{\mQ},\fC_{\mF},\fC_{\Pi}\}}\sum_{k=1}^K p_k\sqrt{\frac{1}{Np_kT\zeta} \log \left( \frac{1}{\delta p_k} \right) \log(Np_kT)}\\
  \ & = \sqrt{\max\{\fC_{\mQ},\fC_{\mF},\fC_{\Pi}\}}\sum_{k=1}^K \sqrt{p_k}\sqrt{\frac{1}{NT\zeta} \log \left( \frac{1}{\delta p_k} \right) \log(Np_kT)}\\
  \ & \leq \sqrt{\max\{\fC_{\mQ},\fC_{\mF},\fC_{\Pi}\}} \sqrt{\left( \sum_{k=1}^K p_k \right) \left( \sum_{k=1}^K\frac{1}{NT\zeta} \log \left( \frac{1}{\delta p_k} \right) \log(Np_kT) \right)}\\
  \ & =\sqrt{\frac{\max \left\{ \fC_{\mQ},\fC_{\mF},\fC_{\Pi} \right\}}{NT\zeta}\sum_{k=1}^K\log((p_k\delta)^{-1}) \log (|\mG^k|\cdot T)}\\
  \ & \asymp \alpha.
\end{align*}
where we use the Cauchy-Schwarz inequality in the fourth line and the
definition of $\alpha$ in the last line.
\end{proof}

\subsection{Proof of Theorem \ref{thm: oracle regret}}
\begin{proof}
  By the feasibility of $Q_{\pi_{\circ}}\in\mQ$ from Theorem \ref{thm: feasibility},
  with probability at least $1-\delta$, 
\begin{align*}
  & J(\pi_{*},\bu_{*}) - J(\wh{\pi}_{\circ}, \wh{\bu}_{\circ}) \\
  \ & = \sum_{k=1}^K p_k (1-\gamma) \left\{ \EE_{S_0^k\sim\nu} Q_{\pi_{*}}(S_0^k,\pi_{*}(S_0^k;u_{*}^{\mG^k});u_{*}^{\mG^k}) - \EE_{S_0^k\sim\nu} Q_{\wh{\pi}_{\circ}}(S_0^k,\wh{\pi}_{\circ}(S_0^k;\wh{u}_{\circ}^{\mG^k});\wh{u}_{\circ}^{\mG^k}) \right\}\\
  \ & \leq \sum_{k=1}^K p_k (1-\gamma)  \EE_{S_0^k\sim\nu} Q_{\pi_{*}}(S_0^k,\pi_{*}(S_0^k;u_{*}^{\mG^k});u_{*}^{\mG^k})\\
      \ & \qquad - \min_{Q\in\Omega(\wh{\pi}_{\circ},\wh{u}_{\circ},\alpha)}\sum_{k=1}^K p_k (1-\gamma)\EE_{S_0^k\sim\nu} Q(S_0^k,\wh{\pi}_{\circ}(S_0^k;\wh{u}_{\circ}^{\mG^k});\wh{u}_{\circ}^{\mG^k}) \\
  \ & \leq \sum_{k=1}^K p_k (1-\gamma)  \EE_{S_0^k\sim\nu} Q_{\pi_{*}}(S_0^k,\pi_{*}(S_0^k;u_{*}^{\mG^k});u_{*}^{\mG^k})\\
      \ & \qquad - \min_{Q\in\Omega(\pi_{*},u_{*},\alpha)}\sum_{k=1}^K p_k (1-\gamma)\EE_{S_0^k\sim\nu} Q(S_0^k,\pi_{*}(S_0^k;u_{*}^{\mG^k});u_{*}^{\mG^k}) \\
  \ & = \max_{Q\in\Omega(\pi_{*},u_{*},\alpha)}\sum_{k=1}^K p_k (1-\gamma) \EE_{S_0^k\sim\nu} \left\{ Q_{\pi_{*}}(S_0^k,\pi_{*}(S_0^k;u_{*}^{\mG^k});u_{*}^{\mG^k}) - Q(S_0^k,\pi_{*}(S_0^k;u_{*}^{\mG^k});u_{*}^{\mG^k}) \right\},
\end{align*}
where in the first inequality is based on Theorem \ref{thm: feasibility} and the
second inequality is based on the optimality of $(\wh{\pi}_{\circ},\wh{\bu}_{\circ})$.

Now applying the fact that $d^i <\!\!< \bar{d}$ and the Bellman equation, we have that
\begin{align*}
  & \max_{Q\in\Omega(\pi_{*},u_{*},\alpha)}\sum_{k=1}^K p_k (1-\gamma) \EE_{S_0^k\sim\nu} \left\{ Q_{\pi_{*}}(S_0^k,\pi_{*}(S_0^k;u_{*}^{\mG^k});u_{*}^{\mG^k}) - Q(S_0^k,\pi_{*}(S_0^k;u_{*}^{\mG^k});u_{*}^{\mG^k}) \right\}\\
  \ & \leq \max_{Q\in\Omega(\pi_{*},u_{*},\alpha)}\sup_{f\in\mF}\sum_{k=1}^K p_k \\
 \ & \qquad\times \bar{\EE} f(S^k,A^k;u_{*}^{\mG^k}) \left\{ R^k + \gamma Q(S_+^k, \pi_{*}(S_+^k;u_{*}^{\mG^k});u_{*}^{\mG^k}) - Q(S^k, A^k;u_{*}^{\mG^k}) \right\}\\
  \ & = \max_{Q\in\Omega(\pi_{*},u_{*},\alpha)}\sup_{f\in\mF}\sum_{k=1}^K p_k \Psi_k(Q,f,\pi_{*},u_{*}^{\mG^k})\\
  \ & \leq \max_{Q\in\Omega(\pi_{*},u_{*},\alpha)}\sup_{f\in\mF}\sum_{k=1}^K p_k \left| \Psi_k(Q,f,\pi_{*},u_{*}^{\mG^k}) - \wh{\Psi}_k(Q,f,\pi_{*},u_{*}^{\mG^k})\right|\\
  \ & \qquad + \max_{Q\in\Omega(\pi_{*},u_{*},\alpha)}\sup_{f\in\mF}\sum_{k=1}^K p_k \wh{\Psi}_k(Q,f,\pi_{*},u_{*}^{\mG^k})\\
  \ & \leq \max_{Q\in\Omega(\pi_{*},u_{*},\alpha)}\sup_{f\in\mF}\sum_{k=1}^K p_k \left| \Psi_k(Q,f,\pi_{*},u_{*}^{\mG^k}) -
\wh{\Psi}_k(Q,f,\pi_{*},u_{*}^{\mG^k})\right| + \alpha\\
    & \lesssim \alpha,
\end{align*}
where we applied Assumption \ref{ass: realizability} in the first inequality and
Theorem \ref{thm: feasibility} in the second inequality, and the last inequality
follows the same derivation of the upper bound of \eqref{eq: Th1prf1} by
applying Lemma \ref{lem:hoeffding-mixing} in the
proof of Theorem \ref{thm: feasibility}. This concludes the proof of Theorem
\ref{thm: oracle regret}.
\end{proof}

\subsection{Proof of Theorem \ref{thm: oracle feasibility}}
\begin{proof}
To simplify the context, we define the objective
\begin{equation*}
H(\pi,\bu,\bv) =  \min_{Q\in\Omega(\pi,\bu,\alpha)}(1-\gamma) \sum_{i=1}^N\EE_{S_0^i\sim\nu} Q(S_0^i,\pi(S_0^i;u^i);u^i) - \pen_{\mu}(\bu, \bv).
\end{equation*}
By the definition of penalty $\pen_{\mu}$, for the true $(\pi_{*},\bu_{*},\bv_*)$
and oracle $(\pi_{\circ},\bu_{\circ},\bv_{\circ})$ when subgroup information known, we have
that $\pen_{\mu}(\bu_{*},\bv_{*})=\pen_{\mu}(\bu_{\circ},\bv_{\circ}) = 0$.

We show that any maximizer $(\wh{\pi},\wh{\bu},\wh{\bv})$ of $H$ that is not close to
$(\pi_{*},\bu_{*},\bv_{*})$ has a strictly smaller objective value when $\pen_{\mu}(\wh{\bu},\wh{\bv}) > 0$.

Suppose $\pen_{\mu}(\wh{\bu},\wh{\bv}) > 0$ and
$\norm{\wh{\bu} - \bu_{*}}_{2,\infty}\gtrsim T^{-1}$ or
$\norm{\wh{\bv} - \bv_{*}}_{2,\infty}\gtrsim T^{-1}$. We establish a lower bound for
$H(\pi_{*},\bu_{*},\bv_{*}) - H(\wh{\pi},\wh{\bu},\wh{\bv})$.

Since $\pen_{\mu}(\bu_{*},\bv_{*})=0$, by Theorem \ref{thm: feasibility} and the definition of $H$,
\begin{equation*}
H(\pi_{*},\bu_{*},\bv_{*}) \geq (1-\gamma)\sum_{i=1}^N\EE_{S_0^i\sim\nu} Q_{\pi_{*}}(S_0^i,\pi_{*}(S_0^i;u_{*}^i);u_{*}^i) - \bigOp(N\alpha),
\end{equation*}
where we used that $Q_{\pi_{*}} \in \Omega(\pi_{*},\bu_{*},\alpha)$ with high probability and
the pessimism gap is bounded by $N\alpha$ following the same argument as in Theorem \ref{thm: oracle regret}.

For the term involving $(\wh{\pi},\wh{\bu},\wh{\bv})$, since $(\wh{\pi},\wh{\bu},\wh{\bv})$ is a maximizer of $H$,
we use its optimality: $H(\wh{\pi},\wh{\bu},\wh{\bv}) \geq H(\pi_{*},\bu_{*},\bv_{*})$.
To derive the contradiction, we instead upper bound $H(\wh{\pi},\wh{\bu},\wh{\bv})$
using the Lipschitz condition in Assumption \ref{ass: spaces}(c).

For each subgroup $\mG^k$, define the group-constant replacement
$\bar{u}^i = \bar{u}^{\mG^k}$ for $i\in\mG^k$ where
\begin{equation*}
\bar{u}^{\mG^k} \in \argmin_{u\in \left\{ \wh{u}^i: i\in\mG^k \right\}} \EE_{S_0^i\sim\nu} Q_{\wh{\pi}}(S_0^i, \wh{\pi}(S_0^i; u);u).
\end{equation*}
Since $(\bar{\bu})$ is group-constant, by Theorem \ref{thm: feasibility} and the same
argument as in Theorem \ref{thm: oracle regret}, we have
$Q_{\wh{\pi}} \in \Omega(\wh{\pi},\bar{\bu},\alpha)$ with high probability, and hence
\begin{equation*}
\min_{Q\in\Omega(\wh{\pi},\bar{\bu},\alpha)}(1-\gamma) \sum_{i=1}^N\EE_{S_0^i\sim\nu} Q(S_0^i,\wh{\pi}(S_0^i;\bar{u}^i);\bar{u}^i)
\geq (1-\gamma)\sum_{i=1}^N \EE_{S_0^i\sim\nu} Q_{\wh{\pi}}(S_0^i,\wh{\pi}(S_0^i;\bar{u}^i);\bar{u}^i) - \bigOp(N\alpha).
\end{equation*}
By Assumption \ref{ass: spaces}(c), for all $Q\in\mQ$,
$|\sum_i \EE Q(\wh{u}^i) - \sum_i \EE Q(\bar{u}^i)| \leq NL_{\mU}\norm{\wh{\bu}-\bar{\bu}}_{2,\infty}$
and similarly for the uncertainty sets, so $\Omega(\wh{\pi},\wh{\bu},\alpha)$ and $\Omega(\wh{\pi},\bar{\bu},\alpha+\epsilon)$
are close when $\norm{\wh{\bu}-\bar{\bu}}_{2,\infty}$ is small.

We can therefore bound
\begin{equation*}
H(\wh{\pi},\wh{\bu},\wh{\bv}) \leq N\cdot J(\wh{\pi},\bar{\bu}) + \bigOp(N\alpha) - \pen_{\mu}(\wh{\bu}, \wh{\bv}).
\end{equation*}

Combining:
\begin{align*}
  \frac{1}{N}\left\{ H(\pi_{*},\bu_{*},\bv_{*}) - H(\wh{\pi},\wh{\bu},\wh{\bv}) \right\}
  & \geq J(\pi_{*},\bu_{*}) - J(\wh{\pi},\bar{\bu}) + N^{-1}\pen_{\mu}(\wh{\bu}, \wh{\bv}) - \bigOp(\alpha)\\
  \ & \geq N^{-1}\pen_{\mu}(\wh{\bu}, \wh{\bv}) - \bigOp(\alpha)\\
  \ & \geq 0,
\end{align*}
where the second inequality is due to the optimality of $(\pi_{*},\bu_{*})$, i.e.,
$J(\pi_{*},\bu_{*})\geq J(\wh{\pi},\bar{\bu})$, and
the last inequality holds because $\pen_{\mu}(\wh{\bu}, \wh{\bv}) \geq \mu T^{-2}$
when $\norm{\wh{\bu} - \bu_{*}}_{2,\infty}\gtrsim T^{-1}$ together with the separation condition, while
$\alpha = \bigOp((NT)^{-1/2})$ and $\sqrt{N/T}\lesssim \mu$, which gives
$N^{-1}\mu T^{-2} \gtrsim (NT)^{-1/2}$.

This implies that with probability approaching 1, any maximizer $(\wh{\pi},\wh{\bu},\wh{\bv})$ of $H$
with $\pen_{\mu}(\wh{\bu},\wh{\bv}) > 0$ must satisfy $\norm{\wh{\bu} - \bu_{*}}_{2,\infty}\lesssim T^{-1}$ and
$\norm{\wh{\bv} - \bv_{*}}_{2,\infty} \lesssim T^{-1}$.

Then we focus on the event when $\norm{\wh{\bu} - \bu_{*}}_{2,\infty}\lesssim T^{-1}$ and
$\norm{\wh{\bv} - \bv_{*}}_{2,\infty} \lesssim T^{-1}$. The regret of $(\wh{\pi},\wh{\bu})$ is
\begin{align*}
  & J(\pi_{*},\bu_{*}) - J(\wh{\pi}, \wh{\bu}) \\
  \ & = \sum_{k=1}^K p_k (1-\gamma) \EE_{S_0^k\sim\nu} Q_{\pi_{*}}(S_0^k,\pi_{*}(S_0^k;u_{*}^{\mG^k});u_{*}^{\mG^k}) - N^{-1}\sum_{i=1}^N (1-\gamma)\EE_{S_0^i} Q_{\wh{\pi}}(S_0^i,\wh{\pi}(S_0^i;\wh{u}^i);\wh{u}^i)\\
  \ & \leq \sum_{k=1}^K p_k (1-\gamma) \left\{ \EE_{S_0^k\sim\nu} Q_{\pi_{*}}(S_0^k,\pi_{*}(S_0^k;u_{*}^{\mG^k});u_{*}^{\mG^k}) - \EE_{S_0^k} Q_{\wh{\pi}}(S_0^k,\wh{\pi}(S_0^k;u_{*}^{\mG^k});u_{*}^{\mG^k}) \right\} + \bigOp(T^{-1})\\
  \ & \leq \sum_{k=1}^K p_k (1-\gamma)  \EE_{S_0^k\sim\nu} Q_{\pi_{*}}(S_0^k,\pi_{*}(S_0^k;u_{*}^{\mG^k});u_{*}^{\mG^k})\\
      \ & \qquad - \min_{Q\in\Omega(\wh{\pi},\bu_{*},\alpha)}\sum_{k=1}^K p_k (1-\gamma)\EE_{S_0^k\sim\nu} Q(S_0^k,\wh{\pi}(S_0^k;u_{*}^{\mG^k});u_{*}^{\mG^k}) + \bigOp(\alpha),
\end{align*}
where the first inequality follows from Assumption \ref{ass: spaces}(c) and the
last inequality follows from Theorem \ref{thm: feasibility}. By the similar
argument in the proof of Theorem \ref{thm: oracle regret}, we can conclude that
\begin{equation*}
J(\pi_{*},\bu_{*}) - J(\wh{\pi}, \wh{\bu}) = \bigOp(\alpha).
\end{equation*}
\end{proof}
\newpage
\subsection{Proof of Theorem \ref{thm: dual regret}}
\begin{proof}
  We first prove that for oracle dual optimizer
  $(\odpi,\odbu)$ and $\odlambda$, of \eqref{eq: dual policy learning} when
  the subgroup information is known, we have that with probability at least
  $1-\delta$, the regret of oracle dual optimizer
\begin{equation*}
J(\pi_{*},\bu_{*}) - J(\odpi,\odbu) \lesssim \alpha.
\end{equation*}
By feasibility in Theorem \ref{thm: feasibility}, with probability at least
$1-\delta$, we have that
\begin{equation}
\label{eq: Th4prf0}
\begin{aligned}
 & J(\pi_{*},\bu_{*}) - J(\odpi,\odbu)\\
  \ & = \sum_{k=1}^K p_k (1-\gamma) \left\{ \EE_{S_0^k\sim\nu} Q_{\pi_{*}}(S_0^k,\pi_{*}(S_0^k;u_{*}^{\mG^k});u_{*}^{\mG^k}) - \EE_{S_0^k\sim\nu} Q_{\odpi}(S_0^k,\odpi(S_0^k;(\odu)^{\mG^k});(\odu)^{\mG^k}) \right\}\\
  \ & \leq \sum_{k=1}^K p_k (1-\gamma) \left\{ \EE_{S_0^k\sim\nu} Q_{\pi_{*}}(S_0^k,\pi_{*}(S_0^k;u_{*}^{\mG^k});u_{*}^{\mG^k}) - \EE_{S_0^k\sim\nu} Q_{\odpi}(S_0^k,\odpi(S_0^k;(\odu)^{\mG^k});(\odu)^{\mG^k}) \right\}\\
  \ & \qquad - \odlambda \left( \max_{f\in\mF}\wh{\Phi}(Q_{\odpi},f,\odpi,\odbu) - \alpha \right) \\
  \ & \leq \sum_{k=1}^K p_k (1-\gamma) \EE_{S_0^k\sim\nu} Q_{\pi_{*}}(S_0^k,\pi_{*}(S_0^k;u_{*}^{\mG^k});u_{*}^{\mG^k})\\
  \ & \quad - \min_{Q\in\mQ} \left\{ \sum_{k=1}^K p_k (1-\gamma) \EE_{S_0^k\sim\nu}   Q(S_0^k,\odpi(S_0^k;(\odu)^{\mG^k});(\odu)^{\mG^k})+ \odlambda \left( \max_{f\in\mF}\wh{\Phi}(Q,f,\odpi,\odbu) - \alpha \right) \right\}\\
  \ & \leq \sum_{k=1}^K p_k (1-\gamma) \EE_{S_0^k\sim\nu} Q_{\pi_{*}}(S_0^k,\pi_{*}(S_0^k;u_{*}^{\mG^k});u_{*}^{\mG^k})\\
  \ & \qquad - \max_{\lambda\geq 0}\min_{Q\in\mQ} \left\{ \sum_{k=1}^K p_k (1-\gamma) \EE_{S_0^k\sim\nu}   Q(S_0^k,\pi_{*}(S_0^k;u_{*}^{\mG^k});u_{*}^{\mG^k})\right.\\
\ & \qquad\qquad\qquad\qquad\qquad\qquad\qquad\qquad\qquad\qquad\left.  + \lambda \left( \max_{f\in\mF}\wh{\Phi}(Q,f,\pi_{*},\bu_{*}) - \alpha \right) \right\},\\
  \ & \leq \sum_{k=1}^K p_k (1-\gamma) \EE_{S_0^k\sim\nu} Q_{\pi_{*}}(S_0^k,\pi_{*}(S_0^k;u_{*}^{\mG^k});u_{*}^{\mG^k})\\
  \ & \qquad - \max_{0\leq \lambda\leq C_{\lambda}}\min_{Q\in\mQ} \left\{ \sum_{k=1}^K p_k (1-\gamma) \EE_{S_0^k\sim\nu}   Q(S_0^k,\pi_{*}(S_0^k;u_{*}^{\mG^k});u_{*}^{\mG^k})\right.\\
  \ & \qquad\qquad\qquad\qquad\qquad\qquad\qquad\qquad\qquad\qquad\left. + \lambda \left( \max_{f\in\mF}\Phi(Q,f,\pi_{*},\bu_{*}) - 2\alpha \right) \right\},
\end{aligned}
\end{equation}
where the first inequality is due to Theorem \ref{thm: feasibility} and the
second inequality is because of Assumption \ref{ass: realizability} and
minimization property, and the third inequality is due to the optimization
property of dual estimator, and the last inequality is based on the constrains of $\lambda$
and the upper bound of \eqref{eq: Th1prf1}.

Consider the optimization problem
\begin{equation}
\label{eq: Th4prf1}
\begin{aligned}
&\min_{Q} \sum_{k=1}^K p_k (1-\gamma) \EE_{S_0^k\sim\nu}   Q(S_0^k,\pi_{*}(S_0^k;u_{*}^{\mG^k});u_{*}^{\mG^k})\\
&\text{s.t. } Q\in \Xi (\pi_{*},\bu_{*},\alpha) \triangleq \left\{ Q\in\mQ: \max_{f\in\mF}\Phi(Q,f,\pi_{*},\bu_{*}) \leq 2\alpha \right\}
\end{aligned}
\end{equation}
Then the objective function of \eqref{eq: Th4prf1} is a linear functional of $Q$
and is bounded below.
Also the set $\Xi (\pi_{*},\bu_{*},\alpha)$ is convex functional class due to the convexity of $\mQ$.
The Bellman equation ensures that $Q_{\pi_{*}}$ is the interior point of the
constraint set $\Xi (\pi_{*},\bu_{*},\alpha)$. Therefore, by the Slater's condition, the
strong duality holds. As a result,
\begin{equation}
\label{eq: Th4prf2}
\begin{aligned}
&\min_{Q\in \Xi (\pi_{*},\bu_{*},\alpha)} \sum_{k=1}^K p_k (1-\gamma) \EE_{S_0^k\sim\nu}   Q(S_0^k,\pi_{*}(S_0^k;u_{*}^{\mG^k});u_{*}^{\mG^k})\\
\ & = \max_{\lambda \geq 0} \min_{Q\in\mQ} \left\{ \sum_{k=1}^K p_k (1-\gamma) \EE_{S_0^k\sim\nu}   Q(S_0^k,\pi_{*}(S_0^k;u_{*}^{\mG^k});u_{*}^{\mG^k}) + \lambda \left( \max_{f\in\mF}\Phi(Q,f,\pi_{*},\bu_{*}) - 2\alpha \right)\right\}.
\end{aligned}
\end{equation}

To upper bound the penalty term and calculate the regret, we show that there exists $C_{\lambda}>0$ such that
\begin{equation}
\label{eq: Th4prf3}
\begin{aligned}
  & \max_{\lambda \geq 0} \min_{Q\in\mQ} \left\{ \sum_{k=1}^K p_k (1-\gamma) \EE_{S_0^k\sim\nu}   Q(S_0^k,\pi_{*}(S_0^k;u_{*}^{\mG^k});u_{*}^{\mG^k}) + \lambda \left( \max_{f\in\mF}\Phi(Q,f,\pi_{*},\bu_{*}) - 2\alpha \right)\right\}\\
  \ & = \max_{C_{\lambda} \geq \lambda \geq 0} \min_{Q\in\mQ} \left\{ \sum_{k=1}^K p_k (1-\gamma) \EE_{S_0^k\sim\nu}   Q(S_0^k,\pi_{*}(S_0^k;u_{*}^{\mG^k});u_{*}^{\mG^k}) + \lambda \left( \max_{f\in\mF}\Phi(Q,f,\pi_{*},\bu_{*}) - 2\alpha \right)\right\}.
\end{aligned}
\end{equation}
For each $(\pi,\bu)$, let $\bar{\lambda}(\pi,\bu)$ be the optimal dual parameter. By the
complementary slackness, we have that
\begin{equation*}
 \bar{\lambda}(\pi_{*},\bu_{*})\left( \max_{f\in\mF}\Phi(Q_{*},f,\pi_{*},\bu_{*}) - 2\alpha \right) = 0,
\end{equation*}
where $Q_{*}$ is the optimal solution of the primal problem. We show that
$\bar{\lambda}(\pi_{*},\bu_{*})$ is bounded. In fact, since
$Q_{\pi_{*}}$ is an interior point of $\Xi (\pi_{*},\bu_{*},\alpha)$
(as $\max_{f\in\mF}\Phi(Q_{\pi_{*}},f,\pi_{*},\bu_{*})=0 < 2\alpha$),
the dual objective $\min_{Q\in\mQ}\{(1-\gamma)\sum_k p_k \EE Q + \lambda(\max_f \Phi(Q,\dots) - 2\alpha)\}\to -\infty$
as $\lambda\to +\infty$. By strong duality, the dual value equals the finite primal value,
so the optimal $\bar{\lambda}(\pi_{*},\bu_{*})$ must be finite.
Therefore, there exists $C_{\lambda}$ such that
\eqref{eq: Th4prf3} holds.

Combine \eqref{eq: Th4prf1} -- \eqref{eq: Th4prf3} with \eqref{eq: Th4prf0}, we
have that
\begin{align*}
 & J(\pi_{*},\bu_{*}) - J(\odpi,\odbu)\\
  \ & \leq \min_{Q\in\Xi (\pi_{*},\bu_{*},\alpha)}\sum_{k=1}^K p_k (1-\gamma) \left\{ \EE_{S_0^k\sim\nu} Q_{\pi_{*}}(S_0^k,\pi_{*}(S_0^k;u_{*}^{\mG^k});u_{*}^{\mG^k}) - \EE_{S_0^k\sim\nu}   Q(S_0^k,\pi_{*}(S_0^k;u_{*}^{\mG^k});u_{*}^{\mG^k}) \right\}\\
  \ & \lesssim \alpha,
\end{align*}
where the last inequality follows from the same procedure in the proof of
Theorem \ref{thm: oracle regret}.
For general optimizer $(\wh{\pi},\wh{\bu},\wh{\bv})$ when the subgroup information
 unknown, the rest of the proof follows from a similar argument in the proof of Theorem \ref{thm: oracle feasibility}.
\end{proof}

\section{Auxiliary Results and Proofs}

\blue{
\begin{lemma}
  \label{lem: linear MDP}
  For a linear MDP in Example \ref{ex: linear MDP}, for any policy $\pi$, there exists a vector $w_{\pi}\in\RR^d$,
  such that for any $(s,a)\in \mS\times\mA$,
\begin{equation*}
Q_{\pi}(s,a) = \psi(s,a)^{\top}w_{\pi},
\end{equation*}
and we have the explicit form
\begin{equation*}
w_{\pi} = \left( I_d - \gamma\int_{\mS}  \sum_{a\in\mA} \pi(a\given s_+)\bmu(s_+)\bpsi(s_+,a)^{\top}  ds_+ \right)^{-1} \btheta,
\end{equation*}
provided the matrix $I_d - \gamma\int_{\mS} \sum_{a\in\mA} \pi(a\given s_+)\bmu(s_+)\bpsi(s_+,a)^{\top} ds_+$ is invertible.
\end{lemma}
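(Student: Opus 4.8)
The plan is to exploit the defining feature of a linear MDP: because both the one-step reward $r(s,a)=\theta^\top\psi(s,a)$ and the transition density $\PP(s_+\given s,a)=\mu(s_+)^\top\psi(s,a)$ are linear in the common feature map $\psi$, the policy-evaluation Bellman operator preserves the finite-dimensional subspace of functions that are linear in $\psi$. I would start from the standard fact that $Q_\pi$ is the unique bounded fixed point of the $\gamma$-contraction $T_\pi$ \citep[see, e.g.,][Section 6.2]{puterman2014markov}, where
\begin{equation*}
(T_\pi Q)(s,a)=r(s,a)+\gamma\int_\mS\sum_{a_+\in\mA}\pi(a_+\given s_+)\,Q(s_+,a_+)\,\PP(s_+\given s,a)\,ds_+ .
\end{equation*}

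Next I would substitute the linear ansatz $Q(s,a)=\psi(s,a)^\top w$ into $T_\pi$ and pull $\psi(s,a)$, which does not depend on the integration variable, outside the integral. Writing $M=\int_\mS\sum_{a_+}\pi(a_+\given s_+)\psi(s_+,a_+)\mu(s_+)^\top\,ds_+$ for the resulting $d\times d$ matrix, a direct computation gives $(T_\pi(\psi^\top w))(s,a)=(\theta+\gamma M^\top w)^\top\psi(s,a)$; that is, $T_\pi$ sends the linear function with coefficient $w$ to the linear function with coefficient $\theta+\gamma M^\top w$, where $M^\top=\int_\mS\{\sum_{a_+}\pi(a_+\given s_+)\psi(s_+,a_+)^\top\}\,d\mu(s_+)$ is exactly the integral appearing in the statement. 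Hence the subspace $\mL=\{\psi(\cdot)^\top w: w\in\RR^d\}$ is invariant under $T_\pi$.

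From here I would run value iteration started at $Q_0\equiv0$ (the element of $\mL$ with $w_0=0$). Invariance keeps every iterate in $\mL$, with coefficients obeying $w_{n+1}=\theta+\gamma M^\top w_n$, so $w_n=\sum_{t=0}^{n-1}(\gamma M^\top)^t\theta$. Since $T_\pi$ is a $\gamma$-contraction in $\|\cdot\|_\infty$, the iterates $\psi^\top w_n$ converge to $Q_\pi$; passing to the limit in the recursion yields $(I-\gamma M^\top)w_\pi=\theta$, and therefore the claimed closed form $w_\pi=(I-\gamma M^\top)^{-1}\theta$, which matches the statement once $M^\top$ is identified with the displayed integral.

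The one point that needs care — and which I expect to be the main obstacle — is justifying that $I-\gamma M^\top$ is invertible, equivalently that the Neumann series $\sum_{t\ge0}(\gamma M^\top)^t$ converges. This is where $\gamma<1$ and the uniform bounds $\|\psi\|_2\le1$, $\|\mu(\mS)\|_2\le\sqrt d$ enter: the policy-averaged transition operator is a non-expansion in sup-norm, so $\gamma^t\|P_\pi^t r\|_\infty\to0$ geometrically, which forces the coefficient series to converge; combined with uniqueness of the fixed point $Q_\pi$ (restricting, without loss of generality, to $w$ in the span of $\{\psi(s,a)\}$ so that $w\mapsto\psi^\top w$ is injective) this gives both invertibility and uniqueness of $w_\pi$. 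The remaining steps are routine bookkeeping of the matrix and vector-measure products, so I would keep them brief.
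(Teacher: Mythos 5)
Your proposal is correct and lands on the same self-consistency equation as the paper, but it gets there by a longer route. The paper's proof is a direct two-step substitution: it writes the Bellman fixed-point equation for the true $Q_{\pi}$, observes that the right-hand side $r(s,a)+\gamma\int_{\mS}V_{\pi}(s_+)\,\psi(s,a)^{\top}d\mu(s_+)$ is manifestly of the form $\psi(s,a)^{\top}w_{\pi}$ with $w_{\pi}=\theta+\gamma\int_{\mS}V_{\pi}(s_+)\,d\mu(s_+)$ (so existence of the linear representation is immediate, no invariant-subspace or limiting argument needed), and then substitutes $V_{\pi}(s_+)=\{\sum_{a}\pi(a\given s_+)\psi(s_+,a)^{\top}\}w_{\pi}$ back in to obtain the linear equation $(I-\gamma M^{\top})w_{\pi}=\theta$ and hence the closed form. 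Your value-iteration argument establishes the same invariance and the same recursion $w_{n+1}=\theta+\gamma M^{\top}w_{n}$, and the contraction step is sound; what the detour buys you is an attempt to justify invertibility of $I-\gamma M^{\top}$ via the Neumann series, a point the paper silently skips (it simply ``divides'' by $1-\gamma\int\cdots$).

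That said, your invertibility argument is the one step that does not fully close. Convergence of $\gamma^{t}P_{\pi}^{t}r$ in sup-norm gives convergence of the \emph{functions} $\psi^{\top}w_{n}$, not of the coefficient vectors $w_{n}=\sum_{t<n}(\gamma M^{\top})^{t}\theta$ in $\RR^{d}$, and the reduction you propose --- restricting to the span $V=\mathrm{Span}\{\psi(s,a)\}$ so that $w\mapsto\psi^{\top}w$ is injective --- does not immediately work because $M^{\top}w=\int\{\sum_{a}\pi(a\given s_+)\psi(s_+,a)^{\top}w\}\,d\mu(s_+)$ lies in the span of the vector measure $\mu$, which need not be contained in $V$; so $M^{\top}$ need not preserve the subspace on which the representation map is injective. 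The stated norm bounds $\|\psi\|_{2}\leq 1$ and $\|\mu(\mS)\|_{2}\leq\sqrt{d}$ also do not force $\gamma\|M^{\top}\|<1$ in any standard matrix norm. This is a gap shared with (indeed, worse in) the paper's own proof, and it does not affect the existence claim --- $w_{\pi}=\theta+\gamma\int V_{\pi}\,d\mu$ already witnesses it --- but the explicit inverse formula as stated requires either an additional nondegeneracy assumption or an interpretation of $(I-\gamma M^{\top})^{-1}\theta$ as the (convergent-in-function-space) Neumann sum.
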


\begin{proof}
  By the definition of transition kernel and reward of linear MDP, we have that
\begin{align*}
  Q_{\pi}(s,a) &= r(s,a) + \gamma \int_{\mS}V_{\pi}(s_+) \bpsi(s,a)^{\top}\bmu(s_+) ds_+\\
             &=\bpsi(s,a)^{\top}\btheta +  \gamma \bpsi(s,a)^{\top} \int_{\mS}V_{\pi}(s_+) \bmu(s_+) ds_+\\
             &= \bpsi(s,a)^{\top} \left\{ \btheta + \gamma \int_{\mS}V_{\pi}(s_+) \bmu(s_+) ds_+\right\}\\
  & \triangleq \bpsi(s,a)^{\top} w_{\pi}.
\end{align*}
Since
\begin{align*}
  V_{\pi}(s_+) & = \sum_{a\in\mA} Q_{\pi}(s_+,a)\pi(a\given s_+)
  = \left\{ \sum_{a\in\mA} \pi(a\given s_+)\bpsi(s_+,a)^{\top} \right\} w_{\pi},
\end{align*}
we have that
\begin{equation*}
w_\pi = \btheta + \gamma \int_{\mS}  \sum_{a\in\mA} \pi(a\given s_+)\bmu(s_+)\bpsi(s_+,a)^{\top} ds_+ \cdot w_\pi.
\end{equation*}
Therefore,
\begin{equation*}
w_{\pi} = \left( I_d - \gamma\int_{\mS} \sum_{a\in\mA} \pi(a\given s_+)\bmu(s_+)\bpsi(s_+,a)^{\top} ds_+ \right)^{-1} \btheta.
\end{equation*}
\end{proof}

}

\begin{assumption}
  \label{ass: general beta-mixing}
  The stochastic process $\left\{ Z_t \right\}_{t\geq 0}$ is stationary,
  geometrically $\beta$-mixing, with the mixing coefficient at time-lag $j$ satisfies that $\beta(j) \leq \beta_0 \exp (-\zeta j)$ for some $\beta_0 \geq 0$ and $\zeta> 0$.
\end{assumption}

\begin{lemma}[Generalized Berbee's Coupling Lemma]
  \label{lem:berbee}
  For any $k>0$ and a random sequence $\left\{ Y_j \right\}_{j=1}^k$,
  there exists a random sequence $\left\{ \wt{Y}_j \right\}_{j=1}^k$ such that
\begin{enumerate}
  \item $\left\{ \wt{Y}_j \right\}_{j=1}^k$ are independent;
  \item $\wt{Y}_j$ and $Y_j$ has the same distribution for any $1\leq j\leq k$;
  \item
        $\pr(\wt{Y}_j \neq Y_j) = \EE \left\{ \esssup_{Y\in \sigma \left( \left\{ Y \right\}_{j'=1}^{j-1} \right)} \left| \PP(Y) - \PP \left( Y\given \sigma \left( Y_j \right) \right)\right| \right\} = \beta(\sigma(Y_1,\dots,Y_{j-1}),\sigma(Y_j))$.
\end{enumerate}
\end{lemma}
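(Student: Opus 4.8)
The plan is to reduce this sequential statement to the classical two-variable Berbee coupling lemma and then build the independent sequence $\{\wt{Y}_j\}$ by a recursion on $j$. First I would record the two-variable engine: given a random element $U$ valued in an arbitrary measurable space and a random element $V$ valued in a Polish space, defined on a probability space carrying an auxiliary $\mathrm{Uniform}[0,1]$ variable $\delta$ independent of $(U,V)$, there is a measurable map $h$ so that $V^{*}=h(U,V,\delta)$ satisfies (i) $V^{*}$ is independent of $U$, (ii) $V^{*}\stackrel{d}{=}V$, and (iii) $\pr(V^{*}\neq V)=\beta(\sigma(U),\sigma(V))$. Everything in the lemma beyond this engine is bookkeeping.

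To construct $V^{*}$ I would use a measurable maximal coupling. Because $V$ is Polish-valued, the regular conditional distribution $P_{V\given U=u}$ exists for almost every $u$. For each $u$ I form the overlap measure $P_{V\given U=u}\wedge P_{V}$ (the common part of the conditional law and the marginal), whose total mass is $1-\norm{P_{V\given U=u}-P_{V}}_{\mathrm{TV}}$. Using $\delta$ as a common randomization, I keep $V^{*}=V$ on the overlap and otherwise redraw $V^{*}$ from the normalized residual of $P_{V}$; carrying this out measurably in $u$, with the redraw always targeting $P_{V}$, makes the conditional law of $V^{*}$ given $U=u$ equal to $P_{V}$ for every $u$, hence $V^{*}$ is independent of $U$ with the correct marginal. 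Integrating the per-$u$ coincidence probability gives $\pr(V^{*}=V)=\EE_{U}[\,1-\norm{P_{V\given U}-P_{V}}_{\mathrm{TV}}\,]$, and since $\norm{P_{V\given U}-P_{V}}_{\mathrm{TV}}=\esssup_{B}\lvert P(V\in B\given U)-P(V\in B)\rvert$, the expected total variation equals $\beta(\sigma(U),\sigma(V))$ by symmetry of $\beta$, which yields (iii).

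With the engine in hand I would set $\wt{Y}_1=Y_1$ (the empty past gives $\beta=0$) and, for $j=2,\dots,k$, apply the two-variable lemma with $U=(Y_1,\dots,Y_{j-1})$, $V=Y_j$, and a fresh uniform $\delta_j$ independent of everything built so far, to obtain $\wt{Y}_j=h_j(Y_1,\dots,Y_{j-1},Y_j,\delta_j)$. This immediately gives conditions 2 and 3. For the mutual independence in condition 1 I would argue by induction: conditioning on $\mathcal{F}_{j-1}=\sigma(Y_1,\dots,Y_{j-1})$, the block $(\wt{Y}_1,\dots,\wt{Y}_{j-1})$ is a function of $(Y_1,\dots,Y_{j-1},\delta_1,\dots,\delta_{j-1})$ while $\wt{Y}_j$ is a function of $(Y_1,\dots,Y_{j-1},Y_j,\delta_j)$. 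Since the auxiliary uniforms are independent of the $Y$'s and of each other, $(\delta_1,\dots,\delta_{j-1})$ and $(Y_j,\delta_j)$ are conditionally independent given $\mathcal{F}_{j-1}$, so $(\wt{Y}_1,\dots,\wt{Y}_{j-1})\perp\wt{Y}_j$ given $\mathcal{F}_{j-1}$. Because $\wt{Y}_j$ is unconditionally independent of $\mathcal{F}_{j-1}$ with the fixed law $P_{Y_j}$, the factorization $\EE[\phi(\wt{Y}_1,\dots,\wt{Y}_{j-1})\,g(\wt{Y}_j)]=\EE[\phi(\wt{Y}_1,\dots,\wt{Y}_{j-1})]\,\EE[g(\wt{Y}_j)]$ follows, upgrading the inductive hypothesis to mutual independence of $(\wt{Y}_1,\dots,\wt{Y}_j)$.

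The main obstacle is the two-variable engine, specifically implementing the maximal coupling measurably and uniformly in the conditioning value $u$ so that $V^{*}$ realizes the residual law of $P_{V}$ as a genuine (jointly measurable) random variable while the exact identity in (iii) — not merely an inequality — holds. This is where Polishness, the existence of regular conditional distributions, and a measurable-selection/disintegration argument are essential; once it is in place, the recursion and the independence verification above are routine.
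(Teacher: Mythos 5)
The paper never actually proves Lemma~\ref{lem:berbee}: it is stated as a known auxiliary result (the sequential form of Berbee's coupling lemma, as in Berbee's 1979 monograph and Rio's book on weakly dependent processes) and is then invoked directly in the proof of Lemma~\ref{lem:hoeffding-mixing}. So there is no in-paper argument to compare against; what you have written is the standard proof of that classical result, and it is essentially correct. The reduction to a two-variable engine via a measurable maximal coupling of the regular conditional law $P_{V\mid U=u}$ with the marginal $P_V$, the iteration with fresh auxiliary uniforms $\delta_j$, and the induction for joint independence (using that $(\delta_1,\dots,\delta_{j-1})$ and $(Y_j,\delta_j)$ are conditionally independent given $\sigma(Y_1,\dots,Y_{j-1})$, while $\wt{Y}_j$ is unconditionally independent of that $\sigma$-field with a fixed law) are all sound, and you correctly identify the measurable-disintegration step as the technical heart.

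Two points should be made explicit to fully close the argument. First, the exact \emph{equality} $\pr(V^{*}\neq V)=\beta(\sigma(U),\sigma(V))$, rather than the inequality $\leq$, requires observing that on the redraw event $V$ is governed by the normalized residual of $P_{V\mid U=u}$ while $V^{*}$ is drawn from the normalized residual of $P_{V}$, and these two residual measures are mutually singular, so the redraw yields $V^{*}\neq V$ almost surely; as written, your integration of the per-$u$ coincidence probability only delivers $\pr(V^{*}=V)\geq \EE_{U}\bigl[1-\norm{P_{V\mid U}-P_{V}}_{\mathrm{TV}}\bigr]$. (For the paper's use of the lemma the inequality $\pr(\wt{Y}_j\neq Y_j)\leq\beta$ would suffice, but the statement asserts equality.) Second, the construction implicitly assumes the $Y_j$ take values in a Polish space and that the probability space is enriched to carry the independent uniforms; both are standard caveats in every version of Berbee's lemma and are harmless in the paper's application, where each $Y_j$ is a block of state--action pairs.
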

\begin{lemma}
  \label{lem:hoeffding-mixing}
Suppose $\{X_t\}_{t\geq 0}\subseteq \mX$ is a Markov chain satisfying Assumption
\ref{ass: general beta-mixing}. Then for any function
$f \in\mF \subseteq \left\{f: \mX\to [-C_f, C_f] \right\}$ such that
$N(\epsilon, \mF, \norm{\bullet}_{\infty}) \lesssim (1/\epsilon)^{\fC}$, with probability at least $1 - \delta$ with
$(NT)^{-2} \lesssim \delta \leq 1$, we have that
\begin{equation*}
\sup_{f\in\mF}\left| \frac{1}{NT} \sum_{i\in [N]}\sum_{t = 0}^{T-1} f(X_t^i) - \EE\left[ \frac{1}{T} \sum_{t = 0}^{T-1} f(X_t) \right]  \right| \lesssim  C_f \sqrt{\frac{\fC}{NT\zeta} \log\frac{1}{\delta}\log(NT)},
\end{equation*}
where $\{\{X_t^i\}_{t=0}^{T-1}\}_{i\in [N]}$ is $N$ of i.i.d. sample with
length $T > 0$ of trajectory $\{X_t\}_{t\geq 0}$.
\end{lemma}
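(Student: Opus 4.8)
The plan is to reduce the uniform deviation to a fixed-$f$ concentration bound by a covering argument, and then to control each fixed-$f$ deviation by a blocking scheme combined with the Berbee coupling of Lemma~\ref{lem:berbee}. Write $\Delta(f) = \frac{1}{NT}\sum_{i\in[N]}\sum_{t=0}^{T-1} f(X_t^i) - \EE[\frac1T\sum_{t=0}^{T-1}f(X_t)]$, so the target is a high-probability bound on $\sup_{f\in\mF}|\Delta(f)|$. First I would fix an $\epsilon$-net $\mF_\epsilon$ of $\mF$ in $\norm{\bullet}_\infty$ with $|\mF_\epsilon| \le N(\epsilon,\mF,\norm{\bullet}_\infty) \lesssim (1/\epsilon)^{\fC}$; for any $f\in\mF$ and its nearest net point $f'$, both the empirical and the population averages move by at most $\epsilon$, so $|\Delta(f)| \le |\Delta(f')| + 2\epsilon$ and hence $\sup_{f\in\mF}|\Delta(f)| \le \max_{f'\in\mF_\epsilon}|\Delta(f')| + 2\epsilon$. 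It then suffices to obtain a sub-Gaussian tail for each fixed $f'$, take a union bound over the $(1/\epsilon)^{\fC}$ net points, and finally set $\epsilon\asymp(NT)^{-1}$ so that $\log(1/\epsilon)\asymp\log(NT)$.

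The fixed-$f$ step is where the mixing enters. I would partition each trajectory's time index set $\{0,\dots,T-1\}$ into consecutive blocks of a common length $a \asymp \zeta^{-1}\log(NT)$, yielding roughly $T/a$ block-sums per trajectory. Since the $N$ trajectories are i.i.d., I only need to decouple \emph{within} each trajectory: applying Lemma~\ref{lem:berbee} to the block sequence of a single trajectory produces an independent block sequence that coincides with the original one except on an event of probability at most $(T/a)\,\beta(a)$, hence at most $N(T/a)\beta(a)$ across all trajectories. With the geometric rate $\beta(a)\le \beta_0 e^{-\zeta a}$ and a suitable constant in the choice of $a$, this coupling-failure probability is $\lesssim (NT)^{-2}$, which is $\lesssim\delta$ exactly in the stated range $(NT)^{-2}\lesssim\delta$. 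On the event that all couplings succeed, $\Delta(f)$ equals an average of $\asymp NT/a$ \emph{independent}, mean-zero block-sums, each bounded in absolute value by $a\cdot 2C_f$. Hoeffding's inequality for independent bounded summands then gives a tail of the form $\exp\!\big(-c\,NT\,s^2/(a\,C_f^2)\big) = \exp\!\big(-c'\,NT\zeta\,s^2/(C_f^2\log(NT))\big)$.

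Assembling the pieces, the union bound over $\mF_\epsilon$ forces $s^2 \asymp \frac{C_f^2 a}{NT}\big(\log(1/\delta)+\fC\log(1/\epsilon)\big)$; substituting $a\asymp \zeta^{-1}\log(NT)$ and $\epsilon\asymp (NT)^{-1}$ gives a deviation of order $C_f\sqrt{\frac{\log(NT)}{NT\zeta}\big(\log(1/\delta)+\fC\log(NT)\big)}$, and the constraint $\delta\ge (NT)^{-2}$ (so that $\log(NT)\lesssim \log(1/\delta)$) collapses this into the advertised $C_f\sqrt{\frac{\fC}{NT\zeta}\log(1/\delta)\log(NT)}$, the $2\epsilon$ approximation error being negligible at this rate. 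I expect the main obstacle to be the blocking/coupling bookkeeping in the middle step: handling the leftover partial block when $a$ does not divide $T$, choosing between the crude $2C_f$ block range and a sharper variance proxy (a Bernstein version would remove the spurious $\log(NT)$ but is unnecessary here), and verifying that the per-trajectory coupling-failure events sum over the $N$ independent trajectories without degrading the rate. The covering reduction and the final $\log$-simplification under the stated range of $\delta$ are then routine.
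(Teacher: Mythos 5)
Your overall architecture matches the paper's: both proofs block each trajectory into segments of length $\asymp \zeta^{-1}\log(NT)$, use Berbee's coupling (Lemma \ref{lem:berbee}) to replace the blocks by independent copies at a coupling-failure cost controlled by $\beta(s)$, and then apply a concentration inequality to the resulting independent block sums. However, there are two genuine gaps in your sketch.

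First, and most substantively, your treatment of the supremum over $\mF$ loses a logarithmic factor that you cannot recover. You discretize $\mF$ by an $\epsilon$-net with $\epsilon\asymp (NT)^{-1}$ and take a union bound, which puts $\log|\mF_\epsilon|\asymp \fC\log(NT)$ inside the square root, yielding a deviation of order $C_f\sqrt{\tfrac{\log(NT)}{NT\zeta}\bigl(\log(1/\delta)+\fC\log(NT)\bigr)}$. To collapse this to the stated $C_f\sqrt{\tfrac{\fC}{NT\zeta}\log(1/\delta)\log(NT)}$ you invoke ``$\log(NT)\lesssim\log(1/\delta)$,'' but the hypothesis $\delta\gtrsim (NT)^{-2}$ gives exactly the \emph{reverse} inequality, $\log(1/\delta)\lesssim\log(NT)$. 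When $\delta$ is a constant your bound is therefore worse than the stated one by a factor $\sqrt{\log(NT)/\log(1/\delta)}$, and the lemma as stated is not established. The paper avoids this by never taking a union bound over a net: it first bounds the \emph{expected} supremum of the blocked process via a maximal inequality (Dudley-type entropy integral over the covering numbers), which costs only $\sqrt{\fC\cdot NT/s}$ with no extra $\log(1/\epsilon)$, and then concentrates around that expectation with McDiarmid's inequality, which contributes the $\sqrt{(NT/s)\log(1/\delta)}$ term. Replacing your net-plus-union-bound step with this expected-supremum-plus-McDiarmid step is what closes the gap.

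Second, your coupling step asserts that applying Lemma \ref{lem:berbee} to the full block sequence of one trajectory yields an \emph{entirely} independent block sequence with failure probability $(T/a)\beta(a)$. That is not what the coupling gives: adjacent blocks have no temporal separation, so the relevant dependence coefficient for coupling block $j$ against its immediate past is not $\beta(a)$. The standard (and the paper's) remedy is to split the blocks into even-indexed and odd-indexed subfamilies, so that consecutive blocks within each parity class are separated by a gap of length $s$; each subfamily is then coupled to an i.i.d.\ sequence with per-block failure probability $\beta(s)$, and the two subfamilies are handled separately (at the price of a factor of $2$ and a leftover partial block). You flag the bookkeeping as an obstacle but the alternating-block device is not optional --- without it the coupling-failure bound you state is unjustified.
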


\begin{proof}
Without loss of generality, we assume $C_f=1$ and $\EE f(X_t) = 0$ for all $t\geq 0$. Then we
only need to bound
\begin{equation*}
\sup_{f\in\mF}\left| \sum_{i\in [N]}\sum_{t = 0}^{T-1} f(X_t^i) \right|.
\end{equation*}
Let $s$ be a positive integer. By applying Lemma \ref{lem:berbee}, we can always
construct a sequence of random variables $\left\{ \wt{X}_t^i \right\}_{t\geq 0}$ such that
\begin{enumerate}
  \item For any $k\geq 0$, $Y_k \triangleq (X_{ks},\dots,X_{(k+1)s-1})$ has the same distribution
        as $\wt{Y}_k\triangleq (\wt{X}_{ks},\dots,\wt{X}_{(k+1)s-1})$.
  \item The sequence $\wt{Y}_{2k}, k\geq 0$ is i.i.d. and so is $\wt{Y}_{2k+1}, k\geq 0$.
  \item For any $k\geq 0$, $\Pr(\wt{Y}_k\neq Y_k) \leq \beta(s)$.
\end{enumerate}
Then with probability at least $1-NT\beta(s)/s$, we have that
\begin{equation*}
\sup_{f\in\mF}\left| \sum_{i\in [N]}\sum_{t = 0}^{T-1} f(X_t^i) \right| = \sup_{f\in\mF}\left| \sum_{i\in [N]}\sum_{t = 0}^{T-1} f(\wt{X}_t^i) \right|.
\end{equation*}
Let $H = \max \left\{h\in\ZZ: 2sh \leq T \right\}$, and
$J_r = \left\{ 2Hs,\dots,T-1 \right\}$, then $|J_r|\leq 2s$. We have that with
probability at least $1-NT\beta(s)/s$,
\begin{align*}
\sup_{f\in\mF}\left| \sum_{i\in [N]}\sum_{t = 0}^{T-1} f(X_t^i) \right| &= \sup_{f\in\mF}\left| \sum_{i\in [N]}\sum_{t = 0}^{2Hs-1}
f(\wt{X}_t^i) \right| + \sup_{f\in\mF} \left| \sum_{i\in [N]}\sum_{t \in J_r} f(X_t^i) \right|\\
  &\triangleq \text{(I)} + \text{(II)}.
\end{align*}
To bound (I), we have that
\begin{align*}
  \text{(I)} & = \sup_{f\in\mF}\left| \sum_{i\in [N]}\sum_{t = 0}^{2Hs-1} f(\wt{X}_t^i) \right|\\
             &\leq \sum_{j=0}^{s-1} \sup_{f\in\mF} \left| \sum_{i\in [N]}\sum_{h=0}^H f(\wt{X}_{2hs+j}^i)\right| + \sum_{j=0}^{s-1} \sup_{f\in\mF} \left| \sum_{i\in [N]}\sum_{h=0}^H f(\wt{X}_{(2h+1)s+j}^i)\right|.
\end{align*}
By the construction of $\wt{Y}_k, k\geq 0$, both
$\left\{ f(\wt{X}_{2hs+j}^i) \right\}_{i\in[N], h=0,\dots,H}$ and $\left\{
  f(\wt{X}_{(2h+1)s+j}^i) \right\}_{i\in[N], h=0,\dots,H}$ are i.i.d.
sequences when $0\leq j\leq s-1$. Using McDiarmid's
inequality, we have that with probability at least $1-\delta$,
\begin{equation}
\label{eq: ineqs}
\begin{aligned}
  \sup_{f\in\mF} \left| \sum_{i\in [N]}\sum_{h=0}^H f(\wt{X}_{2hs+j}^i)\right| & \lesssim \EE \left[ \sup_{f\in\mF} \left| \sum_{i\in [N]}\sum_{h=0}^H f(\wt{X}_{2hs+j}^i)\right| \right] + \sqrt{\frac{NT}{s}\log \frac{1}{\delta}}\\
  & \lesssim \sqrt{\fC \frac{NT}{s}} + \sqrt{\frac{NT}{s}\log \frac{1}{\delta}},
\end{aligned}
\end{equation}
where we use the maximal inequality in the second inequality with VC-dimension
of the class $\mF$. By the same argument in \eqref{eq: ineqs}, we have that with
probability at least $1-\delta$,
\begin{equation*}
\sup_{f\in\mF} \left| \sum_{i\in [N]}\sum_{h=0}^H f(\wt{X}_{(2h+1)s+j}^i)\right| \lesssim \sqrt{\fC \frac{NT}{s}} + \sqrt{\frac{NT}{s}\log \frac{1}{\delta}}.
\end{equation*}
To bound (II), we notice that $\left\{ \sum_{t\in J_r} f(X_t^i) \right\}_{i\in[N]}$ is
an i.i.d. sequence with each term bounded by $2s$. Applying the same inequalities used in \eqref{eq: ineqs}, we have
with probability at least $1-\delta$,
\begin{equation*}
\text{(II)} \lesssim s\sqrt{\fC N} + s\sqrt{N\log \frac{1}{\delta}}.
\end{equation*}
Finally, by selecting $s \asymp \log(NT)/\zeta$, it holds with probability at least $1-\delta$
with $1/(NT)^2 \lesssim \delta \leq 1$ that
\vskip5pt
\begin{align*}
  \frac{1}{NT}\sup_{f\in\mF}\left| \sum_{i\in [N]}\sum_{t = 0}^{T-1} f(X_t^i) \right| &\leq \frac{1}{NT} \left(  \text{(I)} + \text{(II)}\right) \\
  &\lesssim \frac{s}{NT} \left( \sqrt{\fC \frac{NT}{s}} + \sqrt{\frac{NT}{s}\log \frac{1}{\delta}} \right) + \frac{s}{NT}\left(\sqrt{\fC N} + \sqrt{N\log \frac{1}{\delta}}\right)\\
  &\lesssim \sqrt{\frac{\fC}{\zeta NT} \log \frac{1}{\delta} \log(NT)}.
\end{align*}
\end{proof}

\section{More Details in Numerical Studies}
\blue{
In the empirical study, we use neural networks to approximate the function classes $\mQ$, ratio class $\mF$ and policy class $\Pi$. The structure of the neural network is shown in Figure \ref{fig:nn_structure}. To represent heterogeneity, an intercept-free linear layer $U = [u^1 | \dots | u^i| \dots | u^N]$ encodes subject $i$'s one-hot encoder $e_i = [0,\dots, 1,\dots,0]^{\top}$ to latent vector $u^i$, so that $Ue_i = u^i$. Then the Feature Encoder (a 1-layer net) takes the concatenated inputs of $u^i$ and state $s$ as the full state, and outputs an encoded feature vector. Here, $\mF$-net and $\mQ$-net are two separate neural networks that take the action and encoded feature vector as input and output the visiting probability ratio and Q-function, respectively. The policy network $\Pi$-net takes the encoded feature vector as input and outputs the probability mass function for all actions $a\in\mA$ with a top layer of softmax activation.

\begin{figure}[H]
  \centering
  \includegraphics[width=0.8\textwidth]{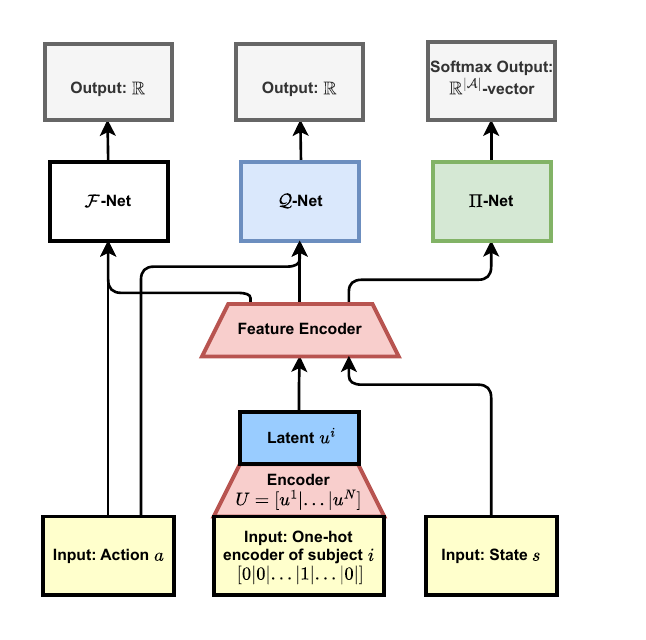}
  \caption{The structure of the neural network in the empirical study.}
  \label{fig:nn_structure}
\end{figure}

All $\mF$-net, $\mQ$-net and $\Pi$-net are fully-connected feed forward neural
networks of 2 layers with ReLU activation functions and one residual connection
from input to the layer before output (ResNet). Without this residual
connection, these neural networks satisfy Assumption 3(a) as the covering number
$N(\epsilon, \mH, \norm{\bullet}_\infty) \lesssim (1/\epsilon)^{wg\log(2)}$ for $\mH = \mQ,\mF,\Pi$, where $w$ is
the number of neural network parameters and $g$ is the number of activation
functions \citep[Theorem 8,][]{bartlett2019nearly} under certain conditions. For
ResNet, under more strict conditions, Lemma 3 of \citet{he2020resnet} can be
applied to show that the covering number is still polynomial in $1/\epsilon$.
}

\section{Additional Numerical Results}
More cases for simulation setting in Section \ref{sec: sim1}.
\begin{figure}[H]
  \centering
  \includegraphics[width=0.9\textwidth]{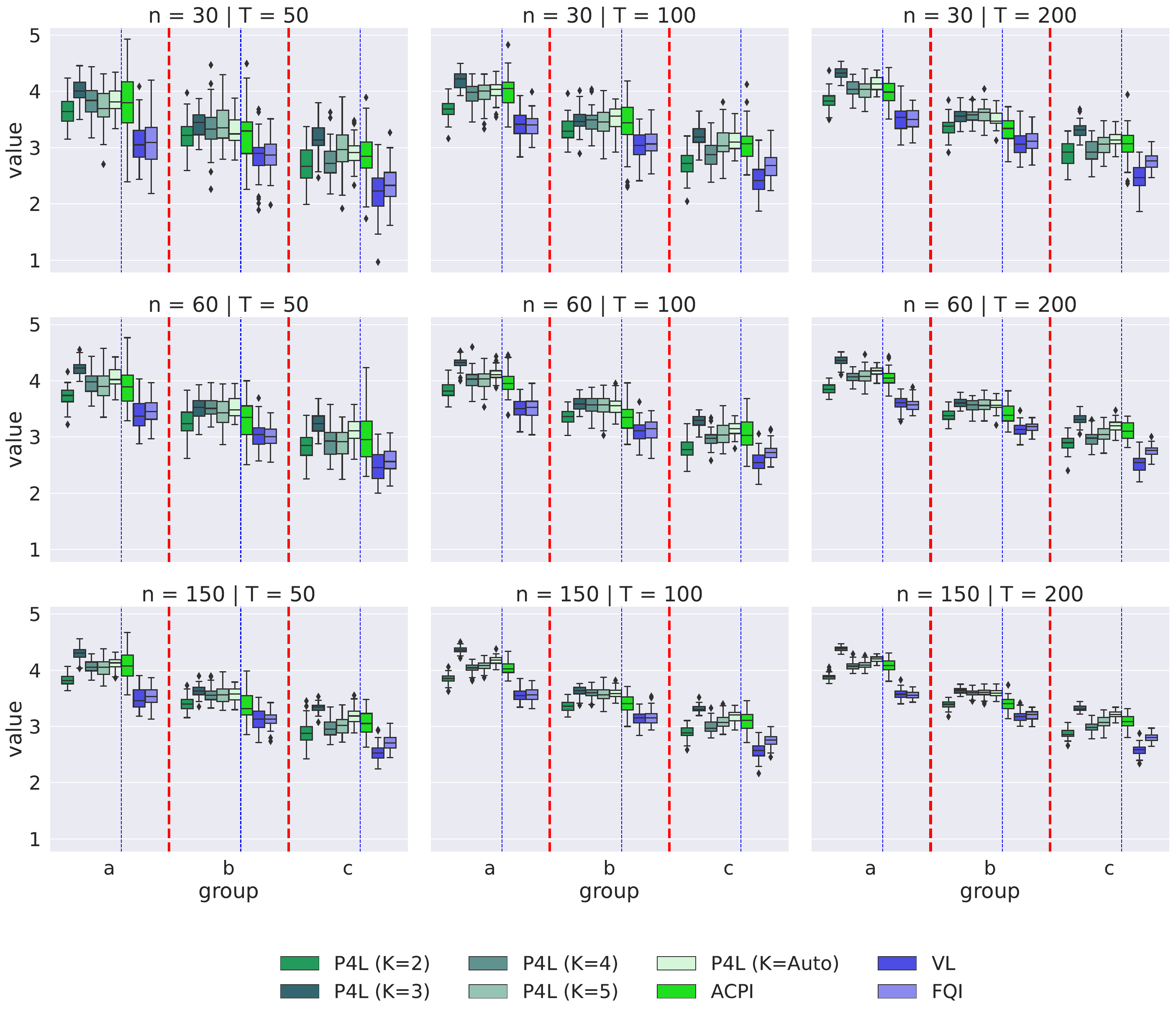}
  \caption{Boxplots for values of estimated policies for $n=30,60$ and $150$ and
    $T=50,100$ and 200. Red dashed lines separate
    three groups (a), (b) and (c). In each group, the blue dotted line separates
    the P4L method of different $K$ from the benchmark methods.}
  \label{fig:simu_values_full}
\end{figure}

\begin{figure}[H]
  \centering
  \includegraphics[width=0.3\textwidth]{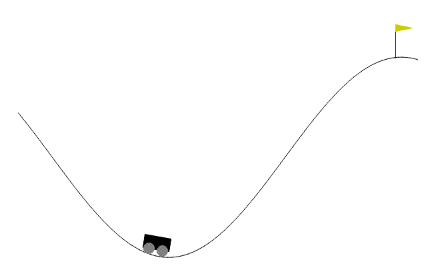}
  \caption{MountainCar and CartPole environments.}
  \label{fig:gym_envs2}
\end{figure}

\noindent \textbf{MountainCar.} In MountainCar, the goal is to drive a
under-powered car to the top of a hill by taking the least number of steps (terminate at 500 steps).
\begin{itemize}
    \item \emph{Observation.} We observe $x_t$, $\Dot{x}_t$: the position and velocity of the car, respectively.
    \item \emph{Actions.} There are three possible actions $\{0,1,2\}$:   (-1) accelerate to the left; (0) do nothing; (1) accelerate to the right.
    \item {\em Reward}. The reward is defined as
    $
    R_t = \II(x_t\geq 0.5) - \II(x_t<0.5).
    $
    \item {\em Behavior policy.} $\text{sign}(\Dot{x}_t)$ with probability 0.8
        and 0 with probability 0.2.
    \item {\em Environment heterogeneity.} We vary the gravity within the range
        $[0.01,0.035]$.  With a weaker gravity, the environment is trivially
        solved by directly moving to the right. While with a stronger gravity,
        the car must drive left and right to build up enough momentum.
\end{itemize}

\begin{table}[htbp]
\centering
\begin{tabular}{c|ccc}
  \toprule
& \multicolumn{3}{c}{MountainCar (gravity)} \\
Settings & (0.01) & (0.025) & (0.035) \\
\midrule
P4L $(K=2)$ & -106.3±44.1 & -432.3±117 & -471.8±43.0 \\
P4L $(K=3)$ & -56.80±1.83 & \textbf{-189.4}±6.44 & \textbf{-210.6}±4.27\\
P4L $(K=4)$ & -74.23±16.5 & -492.3±13.3 & -500.0±0.0 \\
P4L $(K=5)$ & -67.60±22.3 & -267.8±202 & -295.1±180 \\
P4L $(K=Auto)$ & -63.20±4.3 & -233.1±24.1 & -263.1±28.0 \\
\midrule
ACPI & \textbf{-44.79}±0.08 & -380.7±170 & -500.0±0.0 \\
FQI & -176.1±45.2 & -316.4±26.4 & -362.9±17.9 \\
VL & -373.0±33.5 & -500.0±0.0 & -500.0±0.0 \\
\bottomrule
\end{tabular}
\caption{Values of learned policies from four RL methods on MountainCar with
  different settings.}
\label{tab:performance_mountaincar}
\end{table}

\end{document}